\newtheorem{theorem}{Theorem}[section]
  \newtheorem{corollary}[theorem]{Corollary}
\newtheorem{lemma}[theorem]{Lemma}
\newtheorem{proposition}[theorem]{Proposition}
\newtheorem{remark}[theorem]{Remark}
\newtheorem{fact}[theorem]{Fact}
\renewcommand{\epsilon}{\varepsilon}
\renewcommand{\hat}{\widehat}
\renewcommand{\tilde}{\widetilde}
\newcommand{\para}[1]{\left(#1\right)}
\newcommand{\parantheses}[1]{\left(#1\right)}
\newcommand{\curlybrackets}[1]{\left\{#1\right\}}
\newcommand{\bset}[1]{\curlybrackets{#1}}
\newcommand{\setsize}[1]{\left| #1 \right|}
\DeclareMathOperator*{\Exp}{\mathbb{E}}
\newcommand{\EE}[1]{\Exp\left[#1\right]}
\newcommand{\EEs}[2]{\Exp_{#1}\left[#2\right]}
\newcommand{\EEslim}[2]{\Exp\limits_{#1}\left[#2\right]}
\newcommand{\EEc}[2]{\Exp\left[#1\left|#2\right.\right]}
\renewcommand{\cite}[1]{\citep{#1}}
\newcommand{\reals}{\mathbb{R}}
\newcommand{\integers}{\mathbb{Z}}
\DeclareMathOperator*{\argmax}{arg\,max}
\DeclareMathOperator*{\arginf}{arg\,inf}
\DeclareMathOperator*{\sgn}{sgn}
\newcommand{\asseq}{\coloneqq}
\newcommand{\loss}{\ell}
\newcommand{\tsv}[2]{#1\vphantom{#1}^{\parantheses{#2}}}
\newcommand{\cA}{\mathcal{A}}
\newcommand{\cD}{\mathcal{D}}
\newcommand{\cE}{\mathcal{E}}
\newcommand{\cF}{\mathcal{F}}
\newcommand{\cG}{\mathcal{G}}
\newcommand{\cH}{\mathcal{H}}
\newcommand{\cN}{\mathcal{N}}
\newcommand{\cX}{\mathcal{X}}
\newcommand{\cY}{\mathcal{Y}}
\newcommand{\genmodel}{endogenous subgroups model\xspace}
\newcommand{\genmodels}{endogenous subgroups models\xspace}
\newcommand{\lce}{likelihood calibration error\xspace}
\newcommand{\dce}{discriminant calibration error\xspace}
\newcommand{\mce}{multicalibration error\xspace}
\newcommand{\Lcalerr}{\mathbf{LCE}}
\newcommand{\Dcalerr}{\mathbf{DCE}}
\newcommand{\Mcalerr}{\mathbf{MCE}}
\newcommand{\TV}{{\mathrm{TV}}}
\newcommand{\variational}{multi-objective\xspace}
\newcommand{\Variational}{Multi-objective\xspace}
\newcommand{\Pdim}{{\mathrm{Pdim}}}
\newcommand{\VC}{{\mathrm{VC}}}
\newcommand\lvl{\lambda}
\newcommand\mcobjs{\cG}
\newcommand\rloss{\ell}
\newcommand\advcost{\tilde \ell}
\newcommand\rhyp{p}
\newcommand{\onlinealg}{Online Multicalibration Algorithm for Coverable Distinguishers\xspace}
\title{Learning With Multi-Group Guarantees For Clusterable Subpopulations}
\author[]{Jessica Dai}
\author[]{Nika Haghtalab}
\author[]{Eric Zhao}
\affil[]{University of California, Berkeley\\{ \texttt{\{jessicadai,nika,eric.zh\}@berkeley.edu}}}
\date{}
\begin{document}
\allowdisplaybreaks
\maketitle

\begin{abstract}
A canonical desideratum for prediction problems is that performance guarantees should hold not just on average over the population, but also for meaningful subpopulations within the overall population. 
But what constitutes a meaningful subpopulation?
In this work, we take the perspective that relevant subpopulations should be defined with respect to the clusters that naturally emerge from the distribution of individuals for which predictions are being made. 
In this view, a population refers to a mixture model whose components constitute the relevant subpopulations.
We suggest two formalisms for capturing per-subgroup guarantees: first, by attributing each individual to the component from which they were most likely drawn, given their features; and second, by attributing each individual to all components in proportion to their relative likelihood of having been drawn from each component.
Using online calibration
as a case study, we study a \variational algorithm that provides guarantees for each of these formalisms by handling all plausible underlying subpopulation structures simultaneously, and achieve an $O(T^{1/2})$ rate even when the subpopulations are not well-separated.
In comparison, the more natural \emph{cluster-then-predict} approach that first recovers the structure of the subpopulations and then makes predictions suffers from a $O(T^{2/3})$ rate and requires the subpopulations to be separable.
Along the way, we prove that providing per-subgroup calibration guarantees for underlying clusters can be easier than learning the clusters: separation between median subgroup features is required for the latter but not the former.
\end{abstract}

\section{Introduction}
\label{sec:intro}
For systems that make predictions about individuals, it is well-understood that good performance on average across the population may not imply good performance at an individual level.
On the other hand, while the ideal system might be one that can provide per-individual performance guarantees, such a system may be intractable to learn from data, if it exists at all. 
To address these challenges, \textit{per-subpopulation} guarantees have emerged as a widely-accepted approach that balances tractability with ensuring good performance across subpopulations (e.g., \citet{blum2017collaborative,hebert-johnsonCalibrationComputationallyIdentifiableMasses2018, hashimoto2018fairness, lahoti2020fairness, wang2020robust, haghtalab2022demand}).
Such guarantees may also be desirable for normative or regulatory reasons to capture notions of fairness, or because domain shift often involves changes in the proportions of subgroups.
Therefore, the subpopulations for which guarantees are provided should be those that are deemed especially significant, salient, or relevant. 

What, then, defines a relevant subpopulation?
One influential perspective considers a subpopulation as a predefined combination of feature values, where individuals are represented as feature vectors (e.g., \citet{hebert-johnsonCalibrationComputationallyIdentifiableMasses2018}).
In our work, we take an alternative view on what constitutes a subpopulation of interest.
We propose that the relevant subgroups for a particular prediction task should be exactly those subgroups that emerge endogenously within the distribution of the individuals being considered for that task.
This means that the group membership(s) of any individual cannot be determined through their features alone; instead, their group identity can be understood only by placing their individual features in the context of the rest of the population.
In effect, rather than being defined \textit{a priori}, these subgroups must be \textit{learned} about from data in an unsupervised sense.

\subsection{Defining subpopulations via statistical identifiability}
One reason to think of group membership in context of the population rather than as deterministic functions of individual features is a normative perspective. 
A common critique of standard practice in modeling subgroups is that observable (demographic) features are 
only approximations of more complex phenomena that are 
related to---but not directly causal of---shared life experience. Therefore, demanding ``equal performance'' across rigid (demographic) categories does not necessarily imply ``fairness'' in a normative sense (see, e.g., \citet{benthall2019racial,huWhatSexGot2020, hu2023race} for more extended discussion). In some sense, our approach can be seen as an attempt to develop a more constructivist perspective on defining subpopulations---placing individuals in context with others for whom those predictions are made, and allowing group definitions to vary based on the particular prediction task---as opposed to an essentialist one. 
Of course, we cannot claim to fully resolve these normative challenges or realize these goals; however, we think of them as a reason to explore different ways of understanding the relationship between groups and individuals.

Because we cannot determine group membership solely based on an individual's feature vector, 
our problem setting requires some structure on the domain; in particular, features must be clusterable.  
Then, if all one initially knows about the population is that it is comprised of multiple subpopulations where group membership affects feature realizations, determining subgroup membership based only on those realizations is the best one can expect to do. 
Our focus on statistically identifiable groups is in contrast with the computationally-identifiable groups studied when subpopulations are defined as combinations of feature values. In those settings, it is necessary to ensure that membership can be distinguished as \textit{efficiently} as possible (e.g., through low circuit complexity, as multicalibration was initially described in \citet{hebert-johnsonCalibrationComputationallyIdentifiableMasses2018}); in our setting, the key challenge is to instead identify membership as \textit{accurately} as possible, because group membership itself is uncertain. 


We also note that 
finding statistically identifiable subpopulations from data (in the sense of learning membership likelihoods), and using those subgroups downstream, is common in audit settings when true subgroup labels are unknown. In these cases, inferring or estimating group membership is a natural (and sometimes even necessary) approach.
For example, it is well-known that names are often associated with demographic identity \cite{elder2023signaling}, and audits of resume screening systems in practice often use those assumed associations rather than explicitly-stated demographic identity (e.g. \citet{kang2016whitened, wilson2024gender}). More generally, an extensive 
literature discusses how demographic labels might be imputed from data---e.g., name and census tract, in the well-known ``BISG'' (Bayesian Improved Surname Geocoding) approach \cite{elliott2009using} and its variants; how those labels might be used for downstream purposes (e.g. auditing lending decisions \cite{zhang2018assessing});
and how those estimates ought to be incorporated in a mathematical sense to those downstream applications (e.g., \citet{dongAddressingDiscretizationInducedBias2024}).

\subsection{Our approach}
To operationalize our approach, our measures of per-group performance must handle the fact that any individual's group membership can be at best approximately inferred, e.g. as probabilities representing the likelihood that that individual belonged to each group.
Accordingly, we study two natural approaches to measuring per-group error.
The first, which we refer to as \textit{discriminant error}, attributes the error an individual experiences only to the group that the individual \emph{most likely} belongs to.
This corresponds to typical notions of clustering error and is often used in existing approaches to handling uncertainty in  group membership, which is effectively to ignore it (see, for example, discussion in \citet{dongAddressingDiscretizationInducedBias2024}).
We also study a probabilistic alternative, which we term \textit{likelihood error}, where we attribute the error an individual experiences to every group, but weighted according to the likelihoods of membership in each group. 
This likelihood-based notion of per-group error explicitly acknowledges the existence of meaningful uncertainty in group membership. As a consequence, likelihood error also provides some reasonable robustness properties (e.g., to changes in the relative proportions of subgroups), and, relative to discriminant error, improves guarantees for subgroups that comprise a smaller proportion of the total population. 

Both of these measures, however, require knowledge of the subgroup distributions---that is, the likelihood that any particular individual (feature vector) belongs to (was drawn from) any particular group---which depends on the population distribution for the prediction task, and are therefore initially unknown. 
The natural strategy for addressing the problems of unknown subgroups and unknown labels is to first complete the unsupervised task of learning the subgroups, then for each of the learned distributions complete the supervised task of learning to predict; we call this the \textit{cluster-then-predict} approach.
The overall prediction quality of this approach  critically depends on how well the ``clustering'' stage can be performed---a task that often requires a large number of observations and separation between subpopulations.
We circumvent this problem through a \emph{\variational} approach (e.g., \cite{haghtalab2022demand,haghtalab2024unifying}) where, instead of learning the exact underlying clustering, we construct a class of plausible clusterings and provide high-quality per-group predictions for all of them simultaneously.
What clusterings are ``plausible,'' and how can we provide a solution that works for all of them? These constitute the central technical thrusts of our work, which we outline below. 

\vspace{3pt}
\textbf{1) Understanding the structure of subpopulations. }   
We consider the class of all plausible group membership functions corresponding to the two formalisms of discriminant error and likelihood error. 
We show that for both formalisms, the complexity of the appropriate class 
is characterized by the pseudodimension of possible likelihood ratios.
Furthermore, when our model is instantiated with a mixture of exponential families (e.g. Gaussian mixture models), we show that the pseudodimension of group membership functions is linearly bounded by the dimension of the sufficient statistic. 

\vspace{3pt}
\textbf{2) \Variational algorithms for per-subgroup guarantees.} 
We leverage recent results in multi-objective learning to minimize error simultaneously over all possible clusterings of the data.
Using calibration as a case study, we show how to extend recent online multicalibration algorithms \citep{haghtalab2024unifying} so that they provide per-subpopulation guarantees for classes of real-valued distinguisher functions of low complexity, and by extension, families of clustering functions. 
For both discriminant and likelihood calibration error, our \variational approach achieves $O(T^{1/2})$ online error without requiring separability in the underlying clusters. 
This is in contrast to the error rates of the cluster-then-predict approach, for which we demonstrate  $O(T^{2/3})$ error rates even under separability assumptions. 

\vspace{3pt}
\textbf{3) Towards statistically-identifiable subpopulations.}
Beyond the technical approach, we view our work as an important step towards reasoning about group membership in context of the actual population on which predictions are being made. We argue that subpopulations should be defined endogenously, rather than characterized by explicit combinations of feature values.
Our framework also provides a language for formalizing the relationship between explicitly learning subgroups, as opposed to providing high quality predictions for them: in fact, the former (which often requires separation between subpopulation means) is not necessary for the latter.

\subsection{Related work}

\paragraph{Multicalibration.}
Calibration is a well-studied objective in online forecasting \cite{dawid1982well,hartCalibratedForecastsMinimax2022}, with classical literature having studied calibration across multiple sub-populations~\citep{DBLP:conf/itw/FosterK06} and recent literature having studied calibration across computationally-identifiable feature groups \cite{hebert-johnsonCalibrationComputationallyIdentifiableMasses2018}.
The latter thread of work, known as multicalibration, has found a wide range of connections to Bayes optimality, conformal predictions, and computational indistinguishability~\citep{dwork_outcome_2021, gopalanOmnipredictors2022, guptaOnlineMultivalidLearning2022, hebert-johnsonCalibrationComputationallyIdentifiableMasses2018,jungMomentMulticalibrationUncertainty2021,DBLP:conf/iclr/0001NR023}.
We use online multicalibration algorithms \cite{guptaOnlineMultivalidLearning2022, haghtalab2024unifying} as a building block for efficiently obtaining per-group guarantees in our model.

\paragraph{Fair machine learning.}
The fair machine literature has developed various approaches to handling uncertainty in group membership. 
One strategy is to avoid enumerating subgroups entirely, and instead focus on identifying subsets of the domain where prediction error is high \cite{hashimoto2018fairness, lahoti2020fairness}. A separate line of work considers learning when demographic labels are available but noisy \cite{awasthi2020equalized, wang2020robust}. 
Yet another approach is to use a separate estimator for group membership \cite{awasthi2021evaluating, chen2019fairness,  kallus2022assessing}; we note that our notions of per-group performance could be applicable to these methods as well, even if our definitions of ``group'' are different.
\citet{liu2023group} also propose a means of understanding group identity in context with the rest of the population, in this case through social networks.

Finally, though intersectionality is not the focus of our work, our model of subpopulations is one approach to providing guarantees for unobservable marginalized subgroups. In particular, we see our model as an alternative to the practice of enumerating all (possibly-overlapping) subgroups defined by intersections of feature values, as in \citet{hebert-johnsonCalibrationComputationallyIdentifiableMasses2018}, which assumes that unobservable marginalized subgroups can be approximated through such feature sets. 
We seek to to explicitly incorporate intrinsic structure in covariates across groups, as suggested in \citet{wang2020robust}; 
more generally, to the extent that power is central to what ``defines'' a group \cite{ovalle2023factoring}, our model gives an application-specific way to discuss it (i.e., in terms of how subgroup distributions appear).

\section{Preliminaries}
\label{sec:prelim}
\paragraph{Our generative model.}
Let $\cX \in \R^d$ denote a $d$-dimensional instance space and $\cY = \{0,1\}$ denote the \emph{label} or \emph{outcome} space. We consider a generative model over $\cX \times \cY$, where instances are generated from a mixture of $k$ distributions 
and the conditional outcome distribution is independent of the component from which the instance is generated. 
Formally, we define a discrete hidden-state \emph{endogenous subgroups generative model} $f$, such that 
\[
f(x,y) \propto f(y\mid x) f(x\mid g) f(g),
\]
where $f(g) = w_g$ is the distribution over $[k]$ corresponding to mixing weights $w_g \in [0,1]$ with $\sum_{g\in [k]} w_g =1$; 
$f(x\mid g)$ is the density of component $g$,
and $f(y\mid x)$ is a conditional label distribution that is independent of $g$.
In this work, we assume that $f(x \mid g)$ belongs to a class of densities $\cF$. For example, $\cF$ could indicate the class of Gaussian mixture models (as we study in Section \ref{sec:etc}), or an exponential family (as we discuss in Section \ref{sec:variational}). 
Then, each pair $(x,y)$ is generated by first sampling integer $g\in [k]$ according to weights $(w_1, \dots, w_k)$; then sampling $x \sim f(x \mid g)$, and finally sampling $y$ according to $f(y\mid x)$. 
For clarity, we will often suppress $w_g$ in the following exposition, but our results follow without loss of generality as long as all $w_g$ are bounded below by a constant.

\paragraph{Online prediction.}
Our high level goal is to take high-quality actions for instances that are generated from an unknown \genmodel. Let $\cA$ denote the action space. Examples of action spaces for prediction tasks include 
$\cA = \{0,1\}$ where an action refers to a predicted label, or $\cA = [0,1]$ where an action refers to predicting the probability that the label is $1$.
We consider an online prediction problem where a sequence of instance-outcome pairs $(x_1, y_1), \dots, (x_T, y_T)$ is generated i.i.d.\ from an unknown generative model $f$ supported on $\cX \times \cY$.
At time $t$, the learner must take an irrevocable action $a_t \in \cA$ having seen only $x_{1:t}$ and $y_{1:t-1}$.
Equivalently, a learner can be thought of as choosing a function $p_t:\cX \rightarrow \cA$ that maps any feature $x$ to an action $a$.
From this perspective, at time $t$ the learner chooses $p_t$ having only observed $x_{1:t-1}$ and $y_{1:t-1}$, after which $(x_t, y_t)$ is observed and the learner takes action $p_t(x_t)$.

\paragraph{Performance on subpopulations.}
We evaluate the quality of the learner's actions using a vector-valued loss function $\loss: \cA \times \cY \to \cE$ where $\cE$ is some Euclidean space.
Examples of loss functions include the scalar binary loss function $\ell(a, y) := \1[a\neq y]$ and the vector-based calibration loss $\ell(a,y)_v = \1[a = v] (a-y)$.

In the style of Blackwell approachability \cite{pjm/1103044235}, the learner's overall goal is to produce actions that lead to small cumulative loss on all the relevant subpopulations in the sequence $(x_1, y_1), \dots, (x_T, y_T)$, as measured by a norm $\norm{\cdot}$.
We envision relevant subpopulations to be exactly the mixture components of our generative model.
However, it is not possible to determine the component that generates an instance $(x,y)$ in a mixture model. Instead, we consider two notions of performing well on subpopulations.
In the first, we purely attribute each $(x,y)$ to the component $g = \argmax_{j\in [k]} f(j\mid x,y)$ that was most likely responsible for producing $(x,y)$; that is, we aim to minimize
\[
\max_{g \in [k]} 
    \left\|  \sum_{t=1}^T \1\Big[g= \argmax_{j\in [k]}f(j \mid x_t, y_t)\Big] \cdot \loss(a_t, y_t) \right\|.
\]
The attribution of $(x,y)$ to the most likely component  corresponds to the usual task of clustering as is done in practice.

In the second, we attribute each $(x,y)$ to a subpopulation $g$ with probability $f(g \mid x, y)$; that is,  we aim to minimize
\[
 \max_{g \in [k]} 
 \Big \|  \sum_{t=1}^T f(g \mid x_t, y_t) \cdot \ell(a_t, y_t) \Big \|.
\]
Note that by definition, $f(g\mid x,y) = \tfrac{f(x \mid g) w_g}{\sum_{j \in [k]} f(x \mid j) w_j}$ is the probability $g$ was indeed responsible for producing $(x, y)$.
This objective considers the contribution of an individual to subgroup error in proportion to the uncertainty of that individual's ``group membership.'' 
Additionally, note that this objective is robust to reweightings of subpopulations, as $\EE{f(g \mid x_t, y_t) \ell(a_t, y_t)} = f(g) \EEc{\ell(a_t, y_t)}{g}$.

\subsection{Model instantiation for calibration loss}
For concreteness, this paper focuses on studying an instantiation of our model for the task of producing predictions that are \emph{calibrated} with respect to clusterable subpopulations.
However, our \genmodel extends beyond calibration to a number of other settings that can be studied under online approachability, such as online \emph{conformal prediction} and \emph{calibeating} \cite{DBLP:conf/iclr/0001NR023, DBLP:conf/nips/LeeNP022}. 

For studying calibrated predictions, we work with action space $\cA = [0,1]$ and let $\hat y\in \cA$ correspond to the predicted probability that $y=1$. 
Calibration is a common requirement on predictors, necessitating their predictions to be unbiased conditioned on the predicted value.
For technical reasons such as  dealing with the fact that predicted values can take any real values, calibration is more conveniently defined by considering buckets of predicted values.
Formally, we define a set of buckets $V_\lambda = \{0, 1/\lambda, 2/\lambda, \dots, 1\}$ and say that prediction $\hat y$ belongs to bucket $v$ (denoted by $\hat y \in v$) when $|\hat y - v|\leq 1/2\lambda$.
Then, the calibration error of a sequence of predictors $p_1, \dots, p_T$ on instance-outcome pairs $(x_1, y_1), \dots, (x_T, y_T)$ is defined by 
$ \max_{v \in V_\lambda}
\big|  \sum_{t=1}^T \1\big[p_t(x_t) \in v\big] \cdot (p_t(x_t) - y_t) \big|.$
In other words, calibration loss is the cumulative $\ell_\infty$ norm of the objective $\loss(a, y) = [\1[a \in v] \cdot (a - y)]_{v \in V_\lambda}$.

We can accordingly define two variants of the calibration error that account for miscalibration as experienced by each component. In these definitions, we take $\lambda > 0$ to be fixed and clear from the context and suppress it in the notations.

In our first definition, called the \textit{\dce}, an instance $(x,y)$ is purely attributed to the component $g = \argmax_{j\in [k]} f(j \mid  x,y)$ that was most likely responsible for producing $(x,y)$.

\begin{definition}[Discriminant Calibration Error]
\label{def:dce}
Given a sequence of instance-outcome pairs $(x_1, y_1), \cdots, (x_T, y_T)$, the \emph{\dce} of predicted probabilities $\hat y_1, \dots, \hat y_T$ with respect to the \genmodel $f$---as specified by distributions $f(y|x)$, $f(x|g)$, and $f(g)$---is defined as
\[
\Dcalerr_f(\hat y_{1:T}, x_{1:T}, y_{1:T})  \coloneqq \max_{g \in [k]} \max_{{v \in V_\lambda}} \left| \sum_{t=1}^T \1\Big[g = \argmax_{j\in [k]}f(j \mid x_t, y_t) \Big] \cdot {\1\left[\hat y_t \in v\right]} \cdot (\hat y_t - y_t) \right|,
\]
When predictors $p_1, \dots, p_T$ are used for making predictions $\hat y_t = p_t(x_t)$, we denote the corresponding \dce by $\Dcalerr_f(p_{1:T}, x_{1:T}, y_{1:T})$.
\end{definition}

In our second approach,  \textit{\lce}, $(x,y)$ is attributed to any component $g$ with likelihood $f(g \mid x,y)$.

\begin{definition}[Likelihood Calibration Error]
\label{def:lce}
Given a sequence of instance-outcome pairs $(x_1, y_1), \cdots, (x_T, y_T)$, the \emph{\lce} of predicted probabilities $\hat y_1, \dots, \hat y_T$ with respect to the \genmodel $f$---as specified by distributions $f(y|x)$, $f(x|g)$, and $f(g)$---is defined as
\[
\Lcalerr_f(\hat y_{1:T}, x_{1:T}, y_{1:T})  \coloneqq  \max_{g \in [k]} \max_{{v \in V_\lambda}}
    \left|  \sum_{t=1}^T f(g \mid x_t, y_t) \cdot {\1\left[\hat y_t \in v\right]} \cdot (\hat y_t - y_t) \right|.
\]
When predictors $p_1, \dots, p_T$ are used for making predictions $\hat y_t = p_t(x_t)$, we denote the corresponding \lce by $\Lcalerr_f(p_{1:T}, x_{1:T}, y_{1:T})$.
\end{definition}

\subsection{Additional notation} 
Finally, we provide definitions for the following tools we will use in our analysis. 

\textit{Pseudodimension} is a generalization of the Vapnik-Chervonenkis (VC) dimension for real-valued functions. \begin{definition}[\citet{bartlett99}, Chapter 11]
    Let $\cF: x \mapsto \R$ be a set of real-valued functions. 
    Then, the pseudodimension of $\cF$, notated $\Pdim(\cF)$, is the VC-dimension of the class $\{\sgn(f(x) - y): f \in \cF, y \in \R\}.$
\end{definition}

\textit{Exponential families} are a class of statistical models where densities have a common structure (see, e.g., \citet{Fithian}). Some examples include the Poisson, Beta, and Gaussian distributions.
\begin{definition}
\label{def:exp-family}
A statistical model $\cF = \bset{f_\theta: \theta \in \Theta}$ is an exponential family if it can be defined by a family of densities of form $f_\theta(x) = h(x)\exp(\langle \theta, T(x)\rangle - A(\theta))$, where $\theta \in \Theta$ is the parameter that uniquely specifies each density.
The quantity $T(x)$ is known as the sufficient statistic. 
\end{definition}

In particular, note that a multivariate Gaussian density with mean $\mu$ and covariance $\Sigma$ can be written as 
\[
f_{\mu, \Sigma}(x) = \frac{1}{(2\pi)^{d/2}|\Sigma|^{1/2}}\exp(-\tfrac12(x-\mu)^\top\Sigma^{-1}(x-\mu)) = \exp \left( 
\begin{pmatrix} 
\Sigma^{-1} \mu \\ 
-\frac{1}{2} \text{vec}(\Sigma^{-1}) 
\end{pmatrix}^\top 
\begin{pmatrix} 
x \\ 
\text{vec}(xx^\top) 
\end{pmatrix} 
- A(\mu,\Sigma) 
\right),
\]
with $A(\mu,\Sigma) = \frac12 \mu^\top\Sigma^{-1}\mu + \frac12\log |\Sigma| + \frac d2\log(2\pi)$, and $\dim(T(x)) = \dim\big({x \choose \text{vec}(xx^\top)}\big) = d(d+1)/2$.

\section{A first attempt: Cluster-then-predict for Gaussian mixtures}
\label{sec:etc}
As a warmup, we consider the natural algorithmic approach: to first spend some timesteps to estimate the underlying group structure, and then provide guarantees for the estimated groups.
We will focus on minimizing \dce for a simplified problem setting, then highlight some challenges involved in extending the approach to (a) more general problem settings and (b) to minimizing \lce.

\paragraph{Minimizing discriminant calibration error via cluster-then-predict.}
To instantiate cluster-then-predict for \dce, we leverage two common types of algorithms. For the first phase, we use a clustering algorithm that, given a Gaussian mixture model, outputs a mapping $F: \cX \rightarrow \{1,2\}$ indicating the group memberships. 
In particular, we use the algorithm of \citet{azizyanMinimaxTheoryHighdimensional2013} to obtain $F$ for which $F(x) = \argmax_{j\in \{1,2\}} f(j \mid x)$ for all but an $\epsilon$ fraction of the underlying distribution, after having made $O(1/\eps^2)$ observations.
For the second phase, we can instantiate  one (online) calibration algorithm that  provides marginal calibration guarantees for its predictions on each of the two clusters. For example, \citet{fostervohra} guarantees at most $T^{1/2}$ calibration error.\footnote{The guarantees of \citet{fostervohra} hold even when $x_t$ are adversarial; an even simpler algorithm would suffice for our i.i.d. case. For example, one could take the naive approach of using some additional timesteps to estimate $\E[y \mid g]$ to sufficient accuracy and use those estimated means for the remainder of time.}

\begin{framed}
\captionof{algorithm}{\emph{Cluster-Then-Predict Algorithm for Minimizing DCE}}\label{alg:etc}
    \noindent
    For the first $T' < T$ timesteps, make arbitrary predictions and collect observed features $x_1, \dots, x_{T'}$. Apply a clustering algorithm, such as the \citet{azizyanMinimaxTheoryHighdimensional2013} estimator, to the observed features to partition the domain into cluster assignments $F: \cX \rightarrow \{1,2\}$.
\\\\
    Then, instantiate two calibrated prediction algorithms (e.g., the ~\citet{fostervohra} algorithm), one for each cluster. For every subsequent timestep $t = T' + 1, \dots, T$, observe $x_t$ and predict $\hat y_t$ by applying a calibrated online forecasting algorithm to the transcript $\{(x_\tau, y_\tau) \mid T' < \tau < t, F(x_\tau) = F(x_t)\}$ consisting only of datapoints with the same predicted cluster assignment.
\end{framed}

We formalize the guarantees of this approach in Proposition \ref{prop:dce-etc}.
\begin{proposition}
\label{prop:dce-etc}
Let $f$ be an unknown \genmodel whose Gaussian components are isotropic with $\|\mu_1-\mu_2\| \geq \gamma$.
    Then, with probability $1-\delta$, the \emph{Cluster-Then-Predict} algorithm attains \dce of 
    $O(d^{1/3} T^{2/3} \gamma^{-4/3} + \sqrt{T \log(1/\delta)}),$
    when it is run with an appropriate choice of $T'= \Theta(d^{1/3} T^{2/3} \gamma^{-4/3})$.
\end{proposition}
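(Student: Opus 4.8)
The plan is to split $\Dcalerr_f$ into a ``burn-in'' term from the first $T'$ rounds, the intrinsic calibration error of the two per-cluster forecasters, and a term counting the rounds on which the learned assignment $F$ disagrees with the Bayes assignment $g^\star(x) := \argmax_{j\in\{1,2\}} f(j\mid x)$, and then to choose $T'$ so that the burn-in term balances the disagreement term. The first thing to note is that, since $f(y\mid x)$ is independent of the component, $f(g\mid x,y)=f(g\mid x)$, so $\argmax_{j} f(j\mid x_t,y_t)=g^\star(x_t)$ is a function of $x_t$ alone; this is exactly what lets the clustering guarantee of \citet{azizyanMinimaxTheoryHighdimensional2013}, which is phrased in terms of $x$, control the ``misattribution'' appearing in $\Dcalerr_f$.

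Fix $g\in\{1,2\}$ and a bucket $v\in V_\lambda$, and partition $\sum_{t=1}^T \1[g^\star(x_t)=g]\,\1[\hat y_t\in v]\,(\hat y_t-y_t)$ into three blocks: (i) $t\le T'$; (ii) $t>T'$ with $F(x_t)=g$; (iii) $t>T'$ with $F(x_t)\ne g$. Block (i) has absolute value at most $T'$ since each summand lies in $[-1,1]$. Block (iii) has absolute value at most $M_g^{-}:=\#\{t>T' : g^\star(x_t)=g,\ F(x_t)\ne g\}$. For block (ii), write it as $\big(\sum_{t>T',\,F(x_t)=g}\1[\hat y_t\in v](\hat y_t-y_t)\big)-\big(\sum_{t>T',\,F(x_t)=g,\,g^\star(x_t)\ne g}\1[\hat y_t\in v](\hat y_t-y_t)\big)$: maximized over $v$, the first sum is the calibration error of the forecaster assigned to cluster $g$ \emph{on its own input stream}, which is $O(\sqrt{T\log(1/\delta)})$ with probability $1-\delta$ by \citet{fostervohra} (or, in our i.i.d.\ setting, by tracking the running empirical label mean), while the second sum has absolute value at most $M_g^{+}:=\#\{t>T' : F(x_t)=g,\ g^\star(x_t)\ne g\}$. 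For $k=2$ one checks $M_g^{-}+M_g^{+}=M$ for both $g$, where $M:=\#\{t>T' : F(x_t)\ne g^\star(x_t)\}$, so taking $\max$ over $g$ and $v$ gives $\Dcalerr_f \le T' + O(\sqrt{T\log(1/\delta)}) + M$.

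It remains to bound $M$. Since $F$ is a function of $x_{1:T'}$ only, it is independent of $x_{T'+1},\dots,x_T$, so conditioned on the high-probability event that $F$ satisfies the guarantee of \citet{azizyanMinimaxTheoryHighdimensional2013}---namely $\PPs{x\sim f}{F(x)\ne g^\star(x)}\le \epsilon$ with $\epsilon = O(\sqrt{d/T'}\,\gamma^{-2})$ after $T'$ observations, under its hypotheses (here: isotropic components, $k=2$, mixing weights bounded below by a constant)---the count $M$ is a sum of i.i.d.\ Bernoulli variables with mean $\le\epsilon$, hence stochastically dominated by $\mathrm{Binomial}(T-T',\epsilon)$, and a Bernstein bound gives $M\le \epsilon T + O(\log(1/\delta))$ with probability $1-\delta$. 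Combining, $\Dcalerr_f \lesssim T' + \frac{\sqrt{d}\,T}{\gamma^{2}\sqrt{T'}} + \sqrt{T\log(1/\delta)}$ (the $\log(1/\delta)$ term absorbed for $T\gtrsim\log(1/\delta)$), and minimizing the first two terms over $T'$ yields $T' = \Theta(d^{1/3}T^{2/3}\gamma^{-4/3})$ together with the claimed bound; a union bound over the two failure events supplies the overall probability $1-\delta$.

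The step I expect to be the crux is block (ii): the forecaster for cluster $g$ is only certified to be calibrated on the stream it actually receives, $\{t>T':F(x_t)=g\}$, whereas $\Dcalerr_f$ for component $g$ concerns the stream $\{t:g^\star(x_t)=g\}$, and the whole argument hinges on the fact that these two index sets differ by at most $M$ indices, so imperfect clustering costs only $O(M)$ rather than something that compounds with the calibration error. The other point requiring care is invoking \citet{azizyanMinimaxTheoryHighdimensional2013} with the correct dependence on $d$ and $\gamma$---it is the $\gamma^{-2}$ in $\epsilon$ (equivalently, an $O(d\,\epsilon^{-2}\gamma^{-4})$ sample complexity) that produces the $\gamma^{-4/3}$ after balancing---and checking that the independence of $F$ from the test stream is what makes the per-round misclassification events i.i.d., so that the concentration of $M$ is immediate.
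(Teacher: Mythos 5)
The proposal is correct and takes essentially the same approach as the paper: your block decomposition into burn-in, per-cluster forecaster calibration error, and post-$T'$ misclassification count is exactly what the paper achieves via its intermediate quantity $\Dcalerr_F$ (the DCE as if $F$ were the true discriminant) plus a triangle inequality, and the choice $T'=\Theta(d^{1/3}T^{2/3}\gamma^{-4/3})$ balancing the first two terms is identical. The only cosmetic differences are that you concentrate $M$ with Bernstein where the paper uses Hoeffding (both yield the stated rate), and you make explicit up front that $f(g\mid x,y)=f(g\mid x)$ (so the Azizyan guarantee, stated in terms of $x$, applies), a point the paper relegates to a footnote.
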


\begin{proof}[Proof Sketch.]
This $T^{2/3}$ rate is typical for two-stage online algorithms, such as explore-then-commit, and its proof is also similar. 
By \citet{azizyanMinimaxTheoryHighdimensional2013} (see Theorem~\ref{thm:azizyan}), learning a cluster assignment $F$ that has $\epsilon$ error takes  $T' = \Theta(d/\gamma\eps^{2})$ samples. Note that, the DCE of this algorithm is at most $T' +\epsilon T + \sqrt{T \ln(1/\delta)}$, where the second term accounts for the clustering mistakes and the third term accounts for the calibration error of the ``predict'' stage. Setting $T' = \Theta(d^{1/3}T^{2/3}\gamma^{-4/3})$ gives the desired bound.
\end{proof}
\begin{remark}
    In addition to the upper bound, the result of \citet{azizyanMinimaxTheoryHighdimensional2013} also implies that, for this class of cluster-then-predict algorithms, the $O(T^{2/3})$ rate is in fact minimax optimal: it is impossible to learn $F$ to any higher accuracy with any fewer samples.
\end{remark}

\begin{remark}
\label{remark:etc-inherit}
Another consequence of the cluster-then-predict approach is that any hardness in learning cluster membership is inherited. For example, the $\gamma$-separation dependence of Proposition~\ref{prop:dce-etc} is unavoidable in the task of learning cluster assignments, and by extension any instantiation of the cluster-then-predict approach.
\end{remark}

\paragraph{Beyond 2-component isotropic mixtures.} 
For \dce, extending these results beyond 2 components to general $k$-component mixtures complicates the analysis in two ways.
First, the ground-truth cluster assignment functions $x \mapsto \argmax_{i \in [k]} f(i \mid x)$ are no longer halfspaces but rather Voronoi diagrams for $k > 2$.
Second, the minimax optimal accuracy rate for estimating the cluster assignment function of a $k$-component isotropic Gaussian mixture model is not known exactly, aside from being polynomial \cite{DBLP:journals/siamcomp/BelkinS15}.
Extending these results to mixtures of non-isotropic Gaussians or to non-uniform mixtures is also non-trivial, even for $k=2$; minimax optimal rates are similarly unknown for these generalizations.
One challenge for proving such a result is that the cluster assignment functions are no longer halfspaces, but rather non-linear boundaries, as illustrated in Figure~\ref{fig:nonlinear}.

\begin{figure}[H]
    \centering
    \begin{minipage}[b]{0.3\textwidth}
        \centering
        \includegraphics[width=\textwidth]{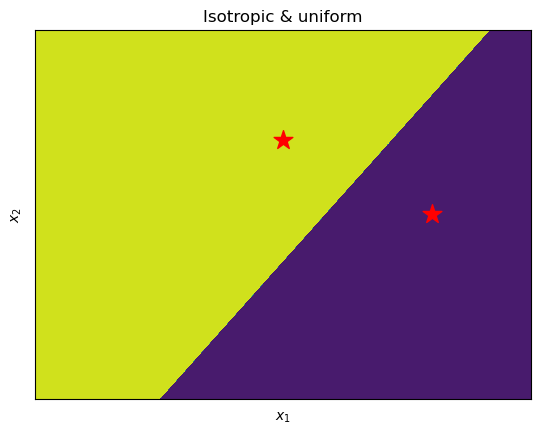}
    \end{minipage}
    \hfill
    \begin{minipage}[b]{0.3\textwidth}
        \centering
        \includegraphics[width=\textwidth]{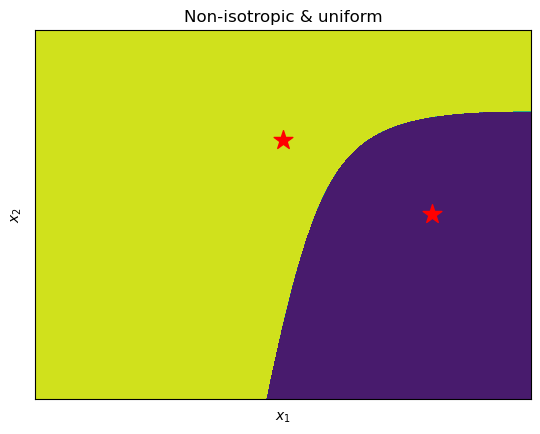}
    \end{minipage}
    \hfill
    \begin{minipage}[b]{0.3\textwidth}
        \centering
        \includegraphics[width=\textwidth]{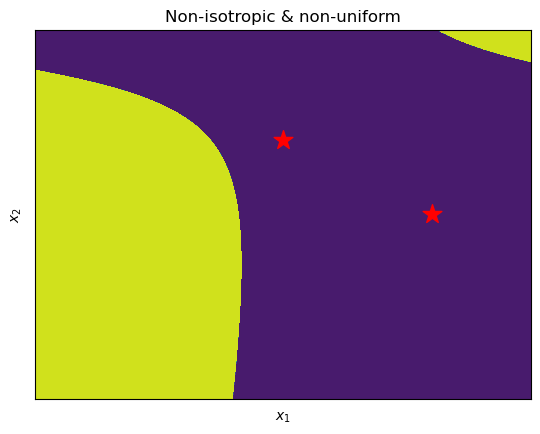}
    \end{minipage}
    \caption{Illustration of the non-linear boundaries that arise in the cluster assignment functions of non-isotropic Gaussian mixtures for $k = 2$. Means of each component are marked with red stars.}
    \label{fig:nonlinear}
\end{figure}

\paragraph{Extension to \lce.} 
Applying the cluster-then-predict approach to minimize \lce  again provides a $T^{2/3}$ rate. 
\begin{proposition}
\label{prop:etc-lce}
Let $f$ be an unknown \genmodel whose Gaussian components are isotropic, and where 
$\mu_1$ and $\mu_2$ are separated by a constant in every dimension. Then,
with probability $1-\delta$, a \emph{Cluster-Then-Predict Algorithm for Minimizing LCE}, setting $T' = O(T^{2/3})$, incurs \lce of $\tilde O\left(T^{2/3}\sqrt{d\log(d/\delta)}\right).$
\end{proposition}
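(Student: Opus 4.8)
The plan is to follow the template of Proposition~\ref{prop:dce-etc}, but with an extra \emph{estimation-error} term in place of the ``clustering mistakes'' term, since \lce weights round $t$ by the soft posterior $f(g\mid x_t)$ rather than by a hard cluster indicator. Fix a component $g$ and a bucket $v$, and use the fact that $f(g\mid x,y)=f(g\mid x)$ because the label is conditionally independent of $g$. Split $\sum_{t=1}^T f(g\mid x_t)\1[\hat y_t\in v](\hat y_t-y_t)$ at $T'$: the first-phase rounds contribute at most $T'$ in absolute value, and in the remaining rounds replace the unknown $f(g\mid x_t)$ by the estimated posterior $\hat f(g\mid x_t)$ built from the Phase-1 parameter estimates, paying $\sum_{t>T'}\lvert f(g\mid x_t)-\hat f(g\mid x_t)\rvert\le T\cdot\max_{t>T'}\lvert f(g\mid x_t)-\hat f(g\mid x_t)\rvert$. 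This reduces the claim to (i) an online-calibration guarantee for the ``predict against $\hat f$'' stage, and (ii) a uniform bound on $\max_t\lvert f(g\mid x_t)-\hat f(g\mid x_t)\rvert$.

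For (i), discretize $\hat f(1\mid x)\in[0,1]$ into $\mu$ levels (write $b(x)$ for the level containing $\hat f(1\mid x)$) and, exactly as in the ``predict'' step of Algorithm~\ref{alg:etc}, run an independent online calibration algorithm~\citep{fostervohra} on the subsequence of Phase-2 rounds in each level. If level $\ell$ receives $n_\ell$ rounds, that sub-algorithm guarantees $\lvert\sum_{t>T':\,b(x_t)=\ell,\,\hat y_t\in v}(\hat y_t-y_t)\rvert=O(\sqrt{n_\ell})$, and rounding $\hat f(1\mid x_t)$ to its level center costs an extra $n_\ell/2\mu$; summing over the $\le\mu+1$ levels and applying Cauchy--Schwarz ($\sum_\ell\sqrt{n_\ell}\le\sqrt{(\mu+1)T}$) gives $\lvert\sum_{t>T'}\hat f(1\mid x_t)\1[\hat y_t\in v](\hat y_t-y_t)\rvert=O(\sqrt{\mu T}+T/\mu)=O(T^{2/3})$ at $\mu=T^{1/3}$. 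The $g=2$ case then follows from $\hat f(2\mid x)=1-\hat f(1\mid x)$ together with the overall-calibration bound $O(\sqrt{\mu T})$.

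For (ii), Phase 1 estimates the mixture parameters from $T'$ i.i.d.\ samples: the coordinatewise-constant separation---which forces $\lVert\mu_1-\mu_2\rVert_2=\Omega(\sqrt d)$---is precisely what makes this easy (e.g.\ the clustering algorithm of Theorem~\ref{thm:azizyan} mislabels only a fraction negligible relative to the statistical error, after which empirical means are used, or one uses a moment method), yielding $\lVert\hat\mu_g-\mu_g\rVert_2=\tilde O(\sqrt{d/T'})$ with the error vector $\hat\mu_g-\mu_g$ approximately isotropic Gaussian with covariance $\approx(\sigma^2/T')I_d$ and independent of the Phase-2 rounds, plus accurate $\hat\sigma,\hat w_g$. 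Since $f(1\mid x)$ is a $\tfrac14$-Lipschitz (sigmoid) function of the log-likelihood ratio $L(x)=\sigma^{-2}\langle x,\mu_1-\mu_2\rangle+c$, which is \emph{linear} in $x$, it suffices to bound $\lvert\hat L(x_t)-L(x_t)\rvert$ over the Phase-2 points. The main obstacle---and the reason the bound scales with $\sqrt d$ rather than $d$---is that one must \emph{not} bound $\langle x_t,(\hat\mu_1-\hat\mu_2)-(\mu_1-\mu_2)\rangle$ by Cauchy--Schwarz (which would lose $\sqrt d$, as $\lVert x_t\rVert_2,\lVert\mu_g\rVert_2=\Theta(\sqrt d)$): instead, writing $x_t=\mu_{g_t}+\sigma z_t$ and $\Delta:=(\hat\mu_1-\mu_1)-(\hat\mu_2-\mu_2)$, both $\langle\mu_{g_t},\Delta\rangle$ and $\langle z_t,\Delta\rangle$ are scalar Gaussians (over the randomness of $\Delta$, resp.\ of $z_t$) with standard deviation $O(\sqrt{d/T'})$, so a union bound over the $T$ rounds gives $\max_t\lvert\hat L(x_t)-L(x_t)\rvert=\tilde O(\sqrt{d/T'})$ (the $\hat\sigma,\hat w_g$ contributions are of the same or lower order), and hence $\max_t\lvert\hat f(g\mid x_t)-f(g\mid x_t)\rvert=\tilde O(\sqrt{d/T'})$. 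Combining (i) and (ii), $\Lcalerr_f(p_{1:T},x_{1:T},y_{1:T})\le T'+O(T^{2/3})+T\cdot\tilde O(\sqrt{d/T'})$; taking $T'=\Theta(T^{2/3})$ balances the terms and yields $\tilde O\!\big(T^{2/3}\sqrt{d\log(d/\delta)}\big)$, with $\sigma$ treated as a constant and the logarithmic factor coming from the high-probability union bounds over coordinates and rounds.
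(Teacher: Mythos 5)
Your proposal reaches the same conclusion by a genuinely different route than the paper, and the overall plan is sound, but it leans on a heuristic in step (ii) that would need to be firmed up.

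The paper's proof works in two phases like yours, but both phases are handled differently. For Phase 2, the paper instantiates an online multicalibration algorithm (Algorithm~\ref{alg:online-multicalibration}, due to \citet{haghtalab2024unifying}) directly with the finite distinguisher class $\cG=\{x\mapsto\hat f(g\mid x):g\in[k]\}$, giving an $O(\sqrt{T}\log^{1/2}(k\lambda/\delta))$ bound for that stage, whereas you discretize $\hat f(1\mid x)$ into $\mu=T^{1/3}$ levels and run a separate \citet{fostervohra} calibrator per level, getting $O(\sqrt{\mu T}+T/\mu)=O(T^{2/3})$. Your Phase-2 bound is looser, but it happens not to change the final rate because the Phase-1 term $T\cdot\tilde O(\sqrt{d/T'})$ dominates either way; the tradeoff is that your construction is self-contained and avoids the multicalibration machinery. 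For propagating Phase-1 error, the paper goes through total-variation distance: it converts the \citet{hardt2015tight} $\ell_\infty$ parameter guarantee into a TV bound of $O(\eps\sqrt d)$ per component (Fact~\ref{fact:param-tv}, via \citet{arbas2023polynomial}), shows $\E_x\lvert f(g\mid x)-\hat f(g\mid x)\rvert\le 3\eps$ (Fact~\ref{fact:eps-est}), and then carefully replaces the true distribution by the estimated one in expectation with Bernstein and Jensen (Lemma~\ref{lem:realized}), incurring a $5T\eps\sqrt d$ cost. You instead bound $\max_{t>T'}\lvert f(g\mid x_t)-\hat f(g\mid x_t)\rvert$ pointwise by exploiting the explicit sigmoid-of-linear-function form of the Gaussian posterior, the $\tfrac14$-Lipschitzness of the sigmoid, and a clever decomposition of $\langle x_t,\Delta\rangle$ that avoids the lossy Cauchy--Schwarz step. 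That decomposition is the genuinely nice idea in your argument and it is the right way to see where the $\sqrt d$ (rather than $d$) comes from.

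The step that needs more care is the assertion that $\hat\mu_g-\mu_g$ is ``approximately isotropic Gaussian with covariance $\approx(\sigma^2/T')I_d$ and independent of the Phase-2 rounds.'' Independence of Phase~2 is fine (disjoint samples), but the distributional claim is a heuristic: the Azizyan-clustering-then-empirical-mean estimator (or a moment estimator) does not literally produce a Gaussian error vector, since the cluster assignments are data-dependent and the surviving mislabeled points are selected, not i.i.d.\ noise. To make the union-bound tails rigorous you would need either a subgaussian concentration statement for the specific estimator you use, or to route through a worst-case norm bound on $\Delta$ as the paper effectively does (via the Hardt--Price $\ell_\infty$ guarantee and Fact~\ref{fact:param-tv}). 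Your back-of-the-envelope accounting of the mislabeling contribution is plausible and the order of magnitude comes out right, but as written this is the one place where the argument is not yet a proof. The paper sidesteps the issue entirely by never needing a distributional model of the estimation error---only a TV bound and an in-expectation Lemma---which is why its route is somewhat more robust, at the cost of being less explicit about where the Gaussian structure is used.
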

We defer the proof of Proposition \ref{prop:etc-lce}, and the statement of the corresponding algorithm, to Appendix \ref{app:s3-proofs}.

Implementing this approach requires some additional care.
In the first phase, we must learn a good likelihood function for each cluster (i.e., $f(g \mid x)$), rather than a cluster assignment function.
This can be done by estimating the parameters of each component, then using those estimates to construct likelihood functions; to that end, we can apply existing parameter learning algorithms (e.g. \citet{hardt2015tight}).
A more significant challenge is that even if a good estimator $\hat f(g\mid x)$ is known, the fact that group membership is real-valued means that we can no longer partition the space and independently calibrate predictions in each partition. 
Instead, our predictions must handle the fact that each $x_t$ belongs to multiple groups; this motivates the use of online calibration algorithms with multi-group guarantees, called \emph{multicalibration}.
For similar reasons, we apply multicalibration algorithms in our \variational approach, which we discuss in the following section.

\section{Improved bounds: A \variational approach}
\label{sec:variational}

The cluster-then-predict approach studied in Section~\ref{sec:etc} necessitates learning the exact subpopulation/cluster structure that underlies the data distribution---that is, learning the binary functions $x \mapsto \argmax_g f(g \mid x)$ or the conditional likelihoods $f(g \mid x)$ to high accuracy.
As an alternative to resolving the underlying structure explicitly, we consider a \variational approach where we aim to simultaneously provide subgroup guarantees for a representative uncertainty set---specifically a covering---of all possible subpopulation structures.

Building a covering of possible underlying cluster structures is significantly easier in a statistical sense than learning the true structure directly, which offers three benefits.
First, rather than paying the $T^{2/3}$ error rate typical of cluster-then-predict methods, the \variational approach provides an optimal $T^{1/2}$ error rate.
Second, the multi-objective learning approach can obtain a faster convergence rate than is possible when learning the underlying clusters.
For example, rather than paying the inevitable mean separation dependence involved in learning  discriminant or likelihood functions, the \variational approach provides error rates independent of separation.
Third, while both the cluster-then-predict and \variational approaches can generalize beyond Gaussian mixture models, the latter does so without requiring distribution-specific clustering algorithms, which would have been necessary for the former; instead, the \variational approach only needs to consider the combinatorial structure of the likelihood functions. We provide tools for analyzing this structure in general (Lemma \ref{lemma:ratio-pdim}), and for exponential families in particular (Lemma \ref{lemma:exp-fam}).

In Section \ref{subsec:strategy}, we introduce multicalibration as a technical tool for our problem setting; in Section \ref{subsec:mc-alg}, we give an algorithm that provides guarantees for \dce and \lce by constructing a cover of all plausible subpopulation structures. In Section \ref{subsec:ratio}, we show that, for the \genmodel, the key quantity to bound is the pseudodimension of likelihood ratios of the densities in $\cF$, and that for exponential families, it in fact suffices to bound the dimension of the sufficient statistic. Finally, in Section \ref{subsec:gaussians}, we provide concrete bounds on \dce and \lce in mixtures of exponential families. 
These results are summarized in Theorem \ref{thm:online}.

\begin{theorem}
    \label{thm:online}
Consider an \genmodel with $k$ exponential family mixture components and a sufficient statistic dimension of $\dim(T(x)) = d$.
    With probability $1-\delta$, Algorithm~\ref{alg:onlinealg}, run with $\cG$  as defined in Eq. \ref{eq:distdce},
    attains \dce of 
    \[O\big(\sqrt{T(dk \log(T)\log(k) + \log(\lambda/\delta))}\big).\] 
    If Algorithm~\ref{alg:onlinealg} is instead run with $\cG$ as in 
   Eq. \ref{eq:distlce}, with probability $1-\delta$, it attains \lce of 
    \[O\big(\sqrt{T(dk \log(T)\log^2(dk) + \log(\lambda/\delta))}\big).\]
\end{theorem}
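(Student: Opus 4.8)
The plan is to reduce both the \dce and \lce objectives to a single online multicalibration guarantee against a carefully chosen family of real-valued distinguishers, and then to control the complexity of that family using the pseudodimension bounds of Section~\ref{subsec:ratio} (Lemmas~\ref{lemma:ratio-pdim} and~\ref{lemma:exp-fam}).

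First I would record the structural simplification that makes everything depend only on features: since $f(y\mid x)$ is independent of the component $g$, Bayes' rule gives $f(g\mid x,y) = f(g\mid x)$, so the group-attribution weights in both Definitions~\ref{def:dce} and~\ref{def:lce} are functions of $x$ alone. Consequently both errors take the common form $\max_{g\in[k]}\max_{v\in V_\lambda}\big|\sum_{t=1}^T c_g(x_t)\,\1[\hat y_t\in v]\,(\hat y_t-y_t)\big|$, where $c_g\colon\cX\to[0,1]$ is, for \dce, the discriminant indicator $x\mapsto\1[g=\argmax_j f(j\mid x)]$ and, for \lce, the likelihood $x\mapsto f(g\mid x)$. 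For a mixture of exponential families these are, respectively, the $g$-th cell of a Voronoi-type diagram with linear boundaries $\langle\theta_i-\theta_j,T(x)\rangle = A(\theta_i)-A(\theta_j)+\log(w_i/w_j)$ in the lifted coordinate $T(x)$, and a softmax of $k$ affine functions of $T(x)$. Ranging over all admissible parameters $(\theta_j,w_j)_{j\in[k]}$ and target components $g$ gives exactly the distinguisher classes $\cG$ of Eq.~\ref{eq:distdce} and Eq.~\ref{eq:distlce}. The point of the \variational approach is that, although the true parameters of $f$ are unknown, the true $c_g$ lies in (or, for \lce, arbitrarily close to an element of) $\cG$, so any predictor that is simultaneously multicalibrated with respect to all of $\cG$ controls the corresponding error.

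Next I would feed $\cG$ into the online multicalibration algorithm (Algorithm~\ref{alg:onlinealg}, adapting~\citet{haghtalab2024unifying} to real-valued, low-complexity distinguishers). The guarantee I would invoke is that, with probability $1-\delta$, it produces predictions with $\max_{g\in\cG}\max_{v\in V_\lambda}\big|\sum_{t=1}^T g(x_t)\,\1[\hat y_t\in v]\,(\hat y_t-y_t)\big| = O\big(\sqrt{T(\mathfrak{C}(\cG)\log T + \log(\lambda/\delta))}\big)$, where $\mathfrak{C}(\cG)$ is the cover-based combinatorial complexity of $\cG$: the $\log\lambda$ is a union bound over buckets, the $\log T$ comes from discretizing the continuum of distinguishers at scale $1/\mathrm{poly}(T)$ (with the resulting rounding error absorbed into the low-order terms), and the $\sqrt T$ rate—rather than the $T^{2/3}$ of cluster-then-predict—reflects that no single clustering is ever committed to. It then remains to bound $\mathfrak{C}(\cG)$. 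By Lemma~\ref{lemma:ratio-pdim} the relevant complexity is governed by the pseudodimension of likelihood ratios of $\cF$, which by Lemma~\ref{lemma:exp-fam} is $O(d)$ per pair of components; assembling the $k$ components gives $\mathfrak{C}(\cG_{\mathrm{dce}}) = O(dk\log k)$ (VC dimension of an intersection of $k$ halfspaces in the $T(x)$-coordinates) and $\mathfrak{C}(\cG_{\mathrm{lce}}) = O(dk\log^2(dk))$ (the extra logarithmic factor arising from metric-entropy control of the $k$-coordinate softmax map, which is Lipschitz in its affine arguments, at scale $1/\mathrm{poly}(T)$). Substituting these into the displayed bound yields the two claimed rates.

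The main obstacle is the third step: upgrading the multicalibration guarantee from finitely many groups to the infinite, parameter-indexed class $\cG$, with a complexity dependence that is only \emph{polynomial} in $dk$ rather than scaling with cardinality or with any separation parameter. For \dce this is comparatively clean, since $\cG_{\mathrm{dce}}$ is $\{0,1\}$-valued and classical VC/growth-function arguments (and their sequential analogues) apply directly, with the true indicator lying exactly in the class. For \lce the delicate point is that $\cG_{\mathrm{lce}}$ is genuinely real-valued: one must cover it in the appropriate (sequential $\ell_\infty$) sense at scale $\sim 1/T$, verify that the induced discretization contributes only a low-order additive term to the calibration error, and check that the covering number is controlled by the pseudodimension of the underlying likelihood-ratio class up to the stated $\log(dk)$ factors, so the final rate remains $\sqrt T$. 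Separation-independence is then immediate: none of $O(dk\log k)$, $O(dk\log^2(dk))$ depends on $\|\mu_i-\mu_j\|$, in contrast with the sample complexity of actually recovering the clustering (cf.\ Remark~\ref{remark:etc-inherit}).
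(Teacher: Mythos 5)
Your proposal is correct and follows essentially the same approach as the paper: reduce \dce and \lce to \mce via Fact~\ref{fact:mce-ub}, invoke the cover-then-multicalibrate guarantee of Theorem~\ref{thm:mce-cover}, and bound the pseudodimension of the distinguisher classes through Lemma~\ref{lemma:ratio-pdim} and Lemma~\ref{lemma:exp-fam}. One minor clarification: you gesture at ``sequential $\ell_\infty$'' covering for the real-valued class, but since the data is i.i.d.\ the paper gets away with an empirical $\ell_1$-cover built on the first $T'$ samples (Algorithm~\ref{alg:onlinealg} and Lemma~\ref{lem:cover-err}), which is simpler and suffices; also the true $f(g\mid x)$ lies exactly in $\cG$ of Eq.~\ref{eq:distlce}, not just arbitrarily close to it, so no limiting argument is needed there.
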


\subsection{Multicalibration as a tool for bounding discriminant and likelihood calibration error}
\label{subsec:strategy}

Multicalibration is a refinement of calibration that requires unbiasedness of predictions not only on distinguishers resolving the level sets of one's predictors, i.e. $\{\1[p_t(x_t) \in v]\}_{v \in V_\lambda}$, but also on distinguishers that identify parts of the domain. We use an adaptation of the original multicalibration definition for real-valued distinguishers of the domain.
\begin{definition}[Multicalibration Error]
Given a sequence of instance-outcome pairs $(x_1, y_1), \dots, (x_T, y_T)$ and class of distinguishers $\cG \subset [0, 1]^{\cX}$, the \emph{multicalibration error} of predicted probabilities $\widehat y_1, \dots, \widehat y_T$ with respect to $\cG$ is
\begin{align*}
    \Mcalerr(\widehat y_{1:T}, x_{1:T}, y_{1:T}; \cG)\coloneqq \max_{g \in \cG} \max_{v \in V_\lambda}
    \left|   \sum_{t \in [T]}
     g(x_t) \cdot \1[\hat y_t\in v]\cdot(\hat y_t - y_t)\right|.
\end{align*}
When predictors $p_1, \dots, p_T$ are used for making predictions $\hat y_t = p_t(x_t)$, we denote the corresponding \mce by $\Mcalerr(p_{1:T}, x_{1:T}, y_{1:T}; \cG)$.
\end{definition}


To show how multicalibration can be useful, note that we can upper bound likelihood calibration error and discriminant calibration error in terms of \mce for carefully-designed classes of distinguishers.
\begin{fact}
\label{fact:mce-ub}
Let $\cF$ be the class of densities considered in an instantiation of the \genmodel. Take $f \in \cF$ to be one such density and $p_1, \dots, p_T$ to be an arbitrary sequence of predictors. Then,
$\Dcalerr_f(p_{1:T}, x_{1:T}, y_{1:T}) \leq \Mcalerr(p_{1:T}, x_{1:T}, y_{1:T}; \cG)$ 
defined for any set of distinguishers $\cG$ where $\cG \supseteq \{x \mapsto \1[g = \argmax_j f(j \mid x)] \mid g \in [k]\}$. One such choice of $\cG$ is
\begin{align}
\label{eq:distdce}
\cG = \{x \mapsto \1[g = \argmax_j \tilde{f}(j \mid x)] \mid g \in [k], \tilde{f}\in \cF\}.
\end{align}
Similarly, $\Lcalerr_f(p_{1:T}, x_{1:T}, y_{1:T}) \leq \Mcalerr(p_{1:T}, x_{1:T}, y_{1:T}; \cG)$, for any set of distinguishers $\cG$ where $\cG \supseteq \bset{x \mapsto f(g \mid x) \mid g \in [k]}$. One such choice of $\cG$ is
\begin{align}
    \label{eq:distlce}
    \cG = \{x \mapsto \tilde{f}(g \mid x) \mid g \in [k], \tilde{f} \in \cF\}.
\end{align}
\end{fact}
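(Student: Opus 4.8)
The plan is to reduce both inequalities to a single structural observation about the \genmodel: the posterior over mixture components is label-independent, i.e.\ $f(g \mid x, y) = f(g \mid x)$ for every $g \in [k]$. First I would verify this by expanding $f(g \mid x, y) = \frac{f(y \mid x) f(x \mid g) f(g)}{\sum_j f(y \mid x) f(x \mid j) f(j)}$ and cancelling the common $f(y \mid x)$ factor from numerator and denominator; this is the one place where the modeling assumption (that $f(y \mid x)$ does not depend on $g$) is actually invoked. As a corollary, $\argmax_j f(j \mid x_t, y_t) = \argmax_j f(j \mid x_t)$ and $f(g \mid x_t, y_t) = f(g \mid x_t)$, so the per-component weights appearing in Definitions~\ref{def:dce} and~\ref{def:lce} are functions of $x_t$ alone --- exactly the form that the multicalibration definition can absorb.

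For the discriminant bound, I would fix $g \in [k]$, introduce the distinguisher $h_g := \big(x \mapsto \1[g = \argmax_j f(j \mid x)]\big)$, note that $h_g \in \{0,1\}^{\cX} \subseteq [0,1]^{\cX}$, and observe that $h_g \in \cG$ by hypothesis (for the explicit class of Eq.~\ref{eq:distdce} this is the choice $\tilde f = f$). Then for each bucket $v \in V_\lambda$ the inner quantity of $\Dcalerr_f$ equals $\big|\sum_{t} h_g(x_t)\,\1[\hat y_t \in v](\hat y_t - y_t)\big|$, which by definition of \mce is at most $\Mcalerr(p_{1:T}, x_{1:T}, y_{1:T}; \cG)$; maximizing over $g \in [k]$ and $v \in V_\lambda$ on the left preserves the bound and gives the claim. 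The likelihood bound is carried out identically, taking instead $h_g := \big(x \mapsto f(g \mid x)\big) \in [0,1]^{\cX}$, which again lies in $\cG$ by hypothesis (the choice $\tilde f = f$ in Eq.~\ref{eq:distlce}), and using $f(g \mid x_t, y_t) = f(g \mid x_t) = h_g(x_t)$.

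I do not expect a genuine obstacle here: the only content is the cancellation $f(g \mid x, y) = f(g \mid x)$, after which both bounds are just an unpacking of the \mce, \dce, and \lce definitions together with monotonicity of the maxima over $g \in [k]$ and $v \in V_\lambda$. The only things to be careful about are bookkeeping details --- confirming that the distinguisher classes in Eqs.~\ref{eq:distdce} and~\ref{eq:distlce} really contain the specific functions $h_g$ built from the \emph{true} density $f$ (they do, via $\tilde f = f$), that these $h_g$ are $[0,1]$-valued and hence legal distinguishers, and that the same tie-breaking convention in $\argmax$ is used on both sides (automatic, since it is literally the same function of $x$).
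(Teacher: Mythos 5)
Your proof is correct, and it matches the approach the paper implicitly intends (the Fact is stated without explicit proof, and your argument is the natural unpacking of the definitions). You correctly identify the one substantive point — that the generative model's assumption that $f(y\mid x)$ does not depend on $g$ gives $f(g\mid x,y)=f(g\mid x)$ — which is exactly what bridges the $f(j\mid x_t,y_t)$ in Definitions~\ref{def:dce} and~\ref{def:lce} to the $x$-only distinguishers required by the \mce definition (the paper acknowledges this same point in a footnote in Appendix~\ref{app:s3-proofs}). The remaining steps (checking $h_g\in[0,1]^{\cX}$, membership in $\cG$ via $\tilde f=f$, and that a max over a subset is bounded by a max over the superset) are exactly as you describe.
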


\subsection{Multicalibration algorithms for distinguisher classes of finite pseudodimension }
\label{subsec:mc-alg}
In this section, we use multicalibration algorithms to simultaneously provide per-subgroup guarantees for multiple hypothetical clustering schemes, a \variational approach that does not require resolving the underlying clustering scheme.

For any (potentially infinite) set of real-valued distinguishers $\cG$ with finite pseudodimension, the following algorithm achieves $O(\sqrt{T})$ \mce (Theorem~\ref{thm:mce-cover}): efficiently cover the space of distinguishers and run a standard online multicalibration algorithm on the cover. In Appendix \ref{app:online}, we give explicit example algorithms for each stage---the covering stage (Algorithm \ref{alg:cover}) and the calibration stage (Algorithm \ref{alg:online-multicalibration}).
\begin{framed}
\captionof{algorithm}{\emph{\onlinealg}}\label{alg:onlinealg}
    \noindent
    For the first $T' = \sqrt{T(\Pdim(\cG)\log(T) + \log(1/\delta))}$ timesteps, make arbitrary predictions and collect observed features $x_1, \dots, x_{T'}$ and compute a small $\tfrac {1}{T'}$-covering $\cG'$ of the set $\cG$ on $x_1, \dots, x_{T'}$, e.g. using Algorithm~\ref{alg:cover}.
\\\\
    For the remaining timesteps $t = T' + 1, \dots, T$, observe $x_t$ and predict $y_t$ by applying any online multicalibration algorithm, e.g. Algorithm~\ref{alg:online-multicalibration}, to the transcript of previously seen datapoints $\{(x_\tau, y_\tau) \mid \tau < t\}$ and distinguishers $\cG'$.
\end{framed}
Theorem \ref{thm:mce-cover} bounds the error incurred by Algorithm~\ref{alg:onlinealg}.

\begin{theorem}
\label{thm:mce-cover}
    For any real-valued function class $\cG$ with finite pseudodimension $\Pdim(\cG)$, with probability $1-\delta$, the predictions $p_{1:T}$ made by Algorithm~\ref{alg:onlinealg} satisfy 
    $\Mcalerr(p_{1:T}, x_{1:T}, y_{1:T}; \cG) \leq \tilde O\left(\sqrt{T (\Pdim(\cG) + \log(\lambda/\delta) )}\right).$
\end{theorem}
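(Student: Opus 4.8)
The plan is to analyze Algorithm~\ref{alg:onlinealg} in two stages, corresponding to its two phases, and then optimize the split point $T'$.

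First I would bound the multicalibration error contribution from the exploration phase. During timesteps $t = 1, \dots, T'$ the learner makes arbitrary predictions, so each term $g(x_t)\cdot \1[\hat y_t \in v]\cdot(\hat y_t - y_t)$ is bounded in absolute value by $1$ (since $g(x_t)\in[0,1]$, $\hat y_t, y_t \in [0,1]$). Summing over these timesteps contributes at most $T'$ to $\Mcalerr$ for every $(g,v)$ pair, hence at most $T'$ to the max. This is the crude ``explore'' cost.

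Next I would bound the error on the remaining $T - T'$ timesteps, which is where the real work lies. The key is that $\cG'$ is a $\tfrac{1}{T'}$-covering of $\cG$ on the observed points $x_1, \dots, x_{T'}$ (an $\epsilon$-cover in, say, the empirical $\ell_\infty$ or $\ell_1$ sense on those points). I would invoke a uniform convergence / generalization bound for classes of finite pseudodimension (e.g., \citet{bartlett99}): with $T' = \Theta\!\big(\sqrt{T(\Pdim(\cG)\log T + \log(1/\delta))}\big)$ samples drawn i.i.d.\ from the marginal on $\cX$, with probability $1-\delta$ the empirical cover on $x_{1:T'}$ is also an $O(1/\sqrt{T})$-cover of $\cG$ with respect to the true marginal, so that for \emph{every} $g \in \cG$ there is some $g' \in \cG'$ with $\E_x|g(x) - g'(x)| = O(1/\sqrt T)$, and moreover the empirical average of $|g(x_t)-g'(x_t)|$ over the \emph{prediction} phase timesteps $t > T'$ is $O(1/\sqrt T)$ with high probability as well (another application of uniform convergence, or a union bound over the finite cover $\cG'$ combined with the pseudodimension-controlled complexity of $\cG$). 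This lets me pass from any target distinguisher $g$ to its representative $g' \in \cG'$ at an additive cost of $O((T-T') \cdot \tfrac{1}{\sqrt T}) = O(\sqrt T)$ in the multicalibration sum. Then, since $\cG'$ is finite, I would quote the guarantee of the online multicalibration subroutine (Algorithm~\ref{alg:online-multicalibration}): run on a finite distinguisher class $\cG'$ and the bucket set $V_\lambda$, it attains $\Mcalerr \le O(\sqrt{(T-T')\log(|\cG'|\cdot \lambda / \delta)})$. Since $\cG'$ is a cover of a pseudodimension-$\Pdim(\cG)$ class at scale $1/T'$, Sauer--Shelah-type bounds give $\log|\cG'| = O(\Pdim(\cG)\log T')$, so this term is $\tilde O(\sqrt{T(\Pdim(\cG) + \log(\lambda/\delta))})$.

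Finally I would combine: $\Mcalerr(p_{1:T}; \cG) \le T' + O(\sqrt T) + \tilde O(\sqrt{T(\Pdim(\cG) + \log(\lambda/\delta))})$, and plugging in the stated choice $T' = \sqrt{T(\Pdim(\cG)\log T + \log(1/\delta))} = \tilde O(\sqrt{T\,\Pdim(\cG)} + \sqrt{T\log(1/\delta)})$ makes all three terms of the same order, yielding the claimed $\tilde O(\sqrt{T(\Pdim(\cG) + \log(\lambda/\delta))})$. I expect the main obstacle to be the middle step: arguing carefully that an empirical $\tfrac{1}{T'}$-cover built on the \emph{exploration} samples transfers to a uniformly good approximation over the \emph{prediction}-phase samples (which are i.i.d.\ from the same marginal but are a disjoint batch), and doing so uniformly over the infinite class $\cG$ rather than just for a fixed $g$. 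This requires the right double-sampling / symmetrization argument controlled by $\Pdim(\cG)$, plus care that the multicalibration subroutine's regret bound is stated for the cover $\cG'$ but we need the guarantee against all of $\cG$; the triangle inequality through $g'$ closes this gap, but the covering-scale bookkeeping ($1/T'$ vs.\ $1/\sqrt T$, and the $\log T'$ factors entering $\log|\cG'|$) must be tracked to land exactly on the $\tilde O$ rate.
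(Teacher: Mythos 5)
Your proposal is correct and follows essentially the same route as the paper's proof: triangle inequality to pass from $\cG$ to the finite empirical cover $\cG'$, the online multicalibration guarantee on $\cG'$ (with $\log|\cG'| = O(\Pdim(\cG)\log T)$ via Sauer--Shelah), a uniform-convergence argument to transfer the empirical cover quality to the prediction phase, and the choice $T' = \Theta\bigl(\sqrt{T(\Pdim(\cG)\log T + \log(1/\delta))}\bigr)$ balancing the terms. The ``main obstacle'' you flag---showing the cover built on exploration samples remains a cover with respect to the true marginal, uniformly over $\cG$---is exactly what the paper isolates into Lemma~\ref{lem:cover-err}, proved via the double-sampling/symmetrization argument you anticipate.
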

The proof of Theorem~\ref{thm:mce-cover} requires two main ingredients. First, standard analysis of multicalibration algorithms (e.g. \citet{haghtalab2024unifying} Theorem 4.3---see Theorem \ref{theorem:online-multicalibration-guarantee}) suggests that a $\sqrt{T}$ rate is possible when the algorithm is run with a \textit{finite} set of distinguishers.
To obtain such a set, Algorithm \ref{alg:onlinealg} first spends $T'$ steps to compute a $\frac1T$-covering of the (infinite) $\cG$. The second ingredient, therefore, is to show that such a finite  covering $\cG'$ of the (infinite) $\cG$ can be computed using a small number of observations, so that the error incurred by this approximation in steps  $T'+1 \dots T$ is sufficiently small.
To that end, we give the following lemma, which states that an empirical cover computed on a finite number of samples is a true $\eps$-cover with high probability. We defer its proof to Appendix \ref{app:cover-err}.

\begin{restatable}{lemma}{covererr}
\label{lem:cover-err}
    Given $\epsilon, \delta, T > 0$ and a real-valued function class $\cG$ with finite pseudodimension $\Pdim(\cG)$, consider any $\epsilon$-cover $\cG'$ of $\cG$ computed on $T$ randomly sampled datapoints.
    Then $\cG'$ is a $4 \epsilon$-cover of $\cG$ with probability at least $1 - O\big(\Pdim(\cG)^2 (1/\eps)^{O(\Pdim(\cG))} \exp(-T \epsilon)\big)$.
\end{restatable}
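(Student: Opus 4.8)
The statement concerns showing that an $\epsilon$-cover computed on $T$ random samples is actually a $4\epsilon$-cover of the whole function class $\cG$ with high probability. Let me think about what "cover" means here — presumably an $\epsilon$-cover on points $x_1,\dots,x_T$ is a subset $\cG'\subseteq\cG$ such that for every $g\in\cG$ there is $g'\in\cG'$ with $\frac1T\sum_{t}|g(x_t)-g'(x_t)|\le\epsilon$ (an empirical $L_1$ cover), and a true $4\epsilon$-cover means $\Exp_{x\sim D}|g(x)-g'(x)|\le 4\epsilon$ for all $g$. So the task is a uniform-convergence argument: empirical $L_1$ distances between pairs of functions in $\cG$ should be close to population $L_1$ distances, uniformly.

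**Main steps.** First I would reduce to a uniform convergence statement over the class of difference functions $\cH=\{|g-g'|: g,g'\in\cG\}$, or more precisely over the family of events "$\Exp_D|g-g'|>2\epsilon$ but $\frac1T\sum_t|g(x_t)-g'(x_t)|\le\epsilon$." If uniform convergence holds at scale $\epsilon$ — i.e. every $h\in\cH$ has empirical mean within $\epsilon$ of its true mean — then any $g'$ that $\epsilon$-covers $g$ empirically must $2\epsilon$-cover it in population (and a $4\epsilon$ slack gives room, perhaps because the empirical cover itself was built to tolerance $2\epsilon$, or to absorb a chaining/discretization loss). Second, I would bound the complexity of $\cH$: since $|g-g'|$ is a combination of two functions each from a class of pseudodimension $\Pdim(\cG)$, the relevant covering numbers of $\cH$ at scale $\epsilon$ are polynomial in $1/\epsilon$ with exponent $O(\Pdim(\cG))$ — this is exactly where the $(1/\epsilon)^{O(\Pdim(\cG))}$ factor in the bound comes from, via the standard bound on $L_1$ covering numbers of pseudodimension-$p$ classes (e.g. the $\epsilon^{-p}$-type bound of Haussler / the Pollard pseudodimension covering lemma). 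Third, I would apply a union bound over a finite $\epsilon$-grid of $\cH$ together with a Hoeffding/Bernstein tail bound on each fixed function's empirical average: for a fixed $h\in[0,1]$-valued function, $\PP{|\frac1T\sum h(x_t)-\Exp h|>\epsilon}\le 2\exp(-2T\epsilon^2)$, but to get the stated $\exp(-T\epsilon)$ (linear in $\epsilon$, not $\epsilon^2$) I would instead use a relative/multiplicative Chernoff bound or a Bernstein bound exploiting that $h\in[0,1]$ so $\mathrm{Var}(h)\le\Exp h$; this is the standard trick that yields $\exp(-T\epsilon)$-type rates in covering-number arguments. The $\Pdim(\cG)^2$ prefactor then comes from the size of the covering net for $\cH$ (two copies of $\cG$, hence the square) times lower-order log factors folded into the $O(\cdot)$.

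**The main obstacle.** The delicate point is matching the exact form of the claimed probability bound — in particular getting $\exp(-T\epsilon)$ rather than $\exp(-T\epsilon^2)$, and getting the net size to be $\Pdim(\cG)^2(1/\epsilon)^{O(\Pdim(\cG))}$ rather than something with worse $\epsilon$-dependence. This forces the use of a one-sided multiplicative Chernoff bound (valid because we only need to rule out the bad direction: large true distance, small empirical distance) combined with the right covering-number lemma for pseudodimension classes; a naive two-sided Hoeffding argument would give the wrong exponent. I would also need to be careful that the $\epsilon$-net for $\cH$ is taken in the appropriate (empirical or population) $L_1$ metric and that the discretization error is what is being absorbed in passing from $2\epsilon$ to $4\epsilon$ — i.e., choosing the net at scale $\epsilon$, an empirical $\epsilon$-cover, and uniform convergence at scale $\epsilon$ compose to give at most $4\epsilon$ population distance. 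Everything else (reduction to $\cH$, union bound, plugging in pseudodimension) is routine.
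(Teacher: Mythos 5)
The overall structure you propose---reduce to pairs $(f,g)$, cover a difference class, apply a one-sided Bernstein/multiplicative tail bound to get the $\exp(-T\epsilon)$ rate rather than $\exp(-T\epsilon^2)$, then union bound over a cover whose size is $\Pdim(\cG)^2(1/\epsilon)^{O(\Pdim(\cG))}$---is indeed the same strategy the paper uses (the paper proves a more general Lemma~\ref{lem:tech-cover-err} stated in terms of covering numbers, with the pseudodimension version of Lemma~\ref{lem:cover-err} following by plugging in the Haussler/Pollard covering bound, Fact~\ref{fact:pd-covering}). Your identification of the one-sided Bernstein/Chernoff bound as the source of $\exp(-T\epsilon)$ matches the paper's Fact~\ref{fact:bernstein-bounded} exactly.

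However, there is a genuine gap at the step you flag as merely ``being careful'': the construction of the finite net over which you union bound. A fixed function $\hat f'$ that witnesses the bad event must satisfy two things simultaneously---its true mean must inherit the ``true distance large'' property and its empirical mean on the random sample must inherit the ``empirical distance small'' property---and these live in incompatible metrics. If you cover $\cH$ in the population $L_1$ metric, you control $\EE{|h-\hat h|}$, but nothing controls the empirical average $\tfrac 1T\sum_t|h(x_t)-\hat h(x_t)|$ on the random sample, so you cannot conclude the empirical mean of $\hat h$ is small. If you instead cover in the empirical $L_1$ metric on $x_1,\dots,x_T$, the cover is data-dependent and the union bound over cover elements does not directly apply. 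This is not a detail but the technical crux; without resolving it, the proof does not close. The paper handles it via a ghost-sample / double-sampling argument (the ``Existence of covering points'' step in the proof of Lemma~\ref{lem:tech-cover-err}): first use the probabilistic method together with the covering-number bound to exhibit a \emph{fixed deterministic} set of auxiliary points $x_1',\dots,x_T'$ on which all pairwise empirical $L_1$ distances uniformly approximate the corresponding population distances, then take the cover $\hat\cG$ on the concatenation of the ghost points with the random sample. That cover simultaneously approximates population distances (via the ghost points and the exhibited uniform convergence there) and empirical distances on the random draw (since it is also a cover at those points), which is what allows every failure event $E_{f,g}$ over the infinite class to be translated into a relaxed failure event $\tilde E_{\hat f,\hat g}$ over the finite cover, to which Bernstein and the union bound then apply. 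Your proposal needs to add this two-sample construction to be complete.
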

\begin{proof}[Proof of Theorem \ref{thm:mce-cover}]

We will begin by analyzing the \mce incurred on timesteps $T'+1 \dots T$. 
Abusing notation, for any $g \in \cG$, let $g'\coloneqq \arginf_{\tilde g \in \cG} \frac{1}{T'}\sum_{t \in [T']} |\tilde g(x_t) - g(x_t)| $.
Such a $g'$ always exists, because $\cG'$ was computed using $x_{1:T'}$.
Then, by the triangle inequality, we have that for transcript $H = (p_{T'+1:T}, x_{T'+1:T}, y_{T'+1:T})$:
\begin{align}
\label{eq:mc-triangle}
      \Mcalerr(H; \cG) \leq \underbrace{\sup_{g \in \cG} \max_{v \in V_\lambda} \sum_{t \in [T':T]} \abs{g(x_t) - g'(x_t) }\cdot \1[\hat y_t \in v] \cdot |\hat y_t - y_t|}_{(A)} + \underbrace{\Mcalerr(H; \cG')}_{(B)}.
\end{align}
For $(B)$, $\cG$ is of bounded pseudodimension and thus $|\cG'| \leq \Pdim(\cG) \cdot T^{O(\Pdim(\cG))}$ (see Theorem \ref{fact:pd-covering}). Standard multicalibration algorithm analysis (\citet{haghtalab2024unifying} Theorem 4.3; see Theorem~\ref{theorem:online-multicalibration-guarantee}) thus gives that with probability $1-\delta_1$, we have 
\[(B)\leq \sqrt{(\Pdim(\cG)\log(T) +\log(\lambda / \delta_1))(T-T')}.\]

To bound $(A)$, we would like to apply Lemma \ref{lem:cover-err}. To do so, we will first apply Hoeffding's inequality on the random variables $|g(x_t) - g'(x_t)|$. In particular, note that we can trivially bound $\1[\hat y_t \in v]\cdot |\hat y_t - y_t|\leq 1$ for all $t$; therefore, $(A) \leq \sup_{g\in \cG} \sum_{t \in [T':T]}|g(x_t) - g'(x_t)|$, and Hoeffding's inequality gives us that with probability $1-\delta_2$, 
\begin{align*}
    \sup_{g\in \cG} \sum_{t \in [T':T]}|g(x_t) - g'(x_t)|
    &\leq  (T - T')  \sup_{g \in \cG}\E[\abs{g(x) - g'(x)}] + O(\sqrt{(T-T')\log(1/\delta_2)}).
    \end{align*}
Now, by Lemma \ref{lem:cover-err}, we have that with probability $1-\delta_2$,
\[
\sup_{g \in \cG}\E[\abs{g(x) - g'(x)}] \leq \tfrac{4}{T'} (\Pdim(\cG)\log(T) + \log(1/\delta_3)).
\]
Therefore, with probability $1 - \delta_2 - \delta_3$, 
\begin{align*}
    \sup_{g\in \cG} \sum_{t \in [T':T]}|g(x_t) - g'(x_t)|
    &\leq   \tfrac{4 (T - T')}{T'} (\Pdim(\cG)\log(T) + \log(1/\delta_3)) + O(\sqrt{(T-T')\log(1/\delta_2)}).
\end{align*}

Combining this with \eqref{eq:mc-triangle} and choosing $\delta_1, \delta_2, \delta_3 = \Theta(\delta)$, we have with probability $1 - \delta$,
\[
     \Mcalerr(p_{T':T}, x_{T':T}, y_{T':T}; \cG) \leq O\left( \tfrac{T}{T'} (\Pdim(\cG)\log(T) + \log(1/\delta))
     + \sqrt{T (\Pdim(\cG)\log(T) + \log(1/\delta))}
     \right).
\]
The claim then follows from noting that $
     \Mcalerr(p_{1:T'}, x_{1:T'}, y_{1:T'}; \cG) \leq T'$ and plugging in the algorithm's choice of $T' = \sqrt{T (\Pdim(\cG)\log(T) + \log(1/\delta))}$.
\end{proof} 

\subsection{Bounding the pseudodimension of DCE and LCE distinguishers}
\label{subsec:ratio}

We now turn to instantiating our \variational approach for minimizing both \dce and \lce. Fact \ref{fact:mce-ub} and Theorem \ref{thm:mce-cover} suggest that it suffices to construct a class of distinguishers $\cG$ that includes the relevant likelihood or discriminant functions, then bound its pseudodimension. 
In this section, we show 
that for an \genmodel instantiated with density class $\cF$, a bound on the pseudodimension of the set of possible likelihood ratios, 
\begin{align}
\label{eq:ratio-class}
    \cH = \bset{ x \mapsto \frac{ f(1 \mid x) }{ f(j \mid x) } \mid j \in [k], f \in \cF},
\end{align}
is the key ingredient for bounding $\Pdim(\cG)$ for both \dce (i.e., with  $\cG$ as defined in \eqref{eq:distdce}) and \lce (i.e., with $\cG$ as defined in \eqref{eq:distlce}). We formalize this in Lemma \ref{lemma:ratio-pdim}.

\begin{lemma}
\label{lemma:ratio-pdim}
    Let $D \coloneqq \Pdim(\cH)$ where $\cH$ is defined as in \eqref{eq:ratio-class}. 
    Then, the binary-valued distinguisher class  $\cG = \{x \mapsto \1[g = \argmax_{g' \in [k]} \tilde{f}(g' \mid x) ] \mid g \in [k], \tilde{f} \in \cF\}$ has pseudodimension (equivalently VC dimension) of at most $O(k D \log(k))$, and the real-valued distinguisher class  $\cG' = \{x \mapsto \tilde{f}(g \mid x) \mid g \in [k], \tilde{f} \in \cF\}$ has pseudodimension at most $O(k D \log^2(k D))$. 
\end{lemma}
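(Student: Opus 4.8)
The plan is to prove both bounds by reducing to combinatorial facts about the likelihood-ratio class $\cH$, exploiting that every pairwise posterior ratio telescopes through component $1$. Write $h_j^{\tilde f}(x) := \tilde f(1\mid x)/\tilde f(j\mid x)\in\cH$. Then $\tilde f(g\mid x)/\tilde f(j\mid x) = h_j^{\tilde f}(x)/h_g^{\tilde f}(x)$, and $\tilde f(g\mid x) = \bigl(1/h_g^{\tilde f}(x)\bigr)\big/\sum_{j\in[k]}\bigl(1/h_j^{\tilde f}(x)\bigr)$; both identities hold verbatim with mixing weights, since the $h_j^{\tilde f}$ are posteriors. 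In both reductions I will repeatedly relax the coupling ``all $h_j^{\tilde f}$ share the same $\tilde f$'' to ``the $h_j$ are arbitrary elements of $\cH$,'' which only enlarges the class.

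For the binary class $\cG$ (where $\Pdim=\VC$ since $\cG$ is $\{0,1\}$-valued): since $g=\argmax_{j}\tilde f(j\mid x)$ iff $h_g^{\tilde f}(x)\le h_j^{\tilde f}(x)$ for all $j\neq g$, each concept in $\cG_g := \{x\mapsto\1[g=\argmax_j\tilde f(j\mid x)]:\tilde f\in\cF\}$ is an intersection of $k-1$ sets of the form $\{x: h(x)\le h'(x)\}$ with $h,h'\in\cH$. I will invoke the standard fact that this ``pairwise comparison'' class $\{x\mapsto\1[h(x)\ge h'(x)]:h,h'\in\cH\}$ has VC dimension $O(D)$ --- this is the step that must avoid an extra $\log D$ factor --- followed by the standard bound that a $(k-1)$-fold intersection of classes of VC dimension $O(D)$ has VC dimension $O(kD\log k)$. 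Finally $\cG=\bigcup_{g\in[k]}\cG_g$ is a union of $k$ classes each of VC dimension $O(kD\log k)$, and a union bound on growth functions gives $\VC(\cG)=O(kD\log k)$. (Ties in the $\argmax$ are resolved by a fixed rule and do not affect the asymptotics.)

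For the real-valued class $\cG'$: rearranging the identity for $\tilde f(g\mid x)$, one checks that for a threshold $t\in(0,1)$ we have $\tilde f(g\mid x)\ge t$ iff $\sum_{j\neq g}h_g^{\tilde f}(x)/h_j^{\tilde f}(x)\le 1/t-1$, while thresholds $t\notin(0,1)$ contribute only constant functions and so are irrelevant to shattering. Re-parametrizing $s:=1/t-1$ (a bijection onto the relevant range) shows $\Pdim(\cG')\le\Pdim(\cP)$ where $\cP := \{x\mapsto\sum_{j\neq g}h_g^{\tilde f}(x)/h_j^{\tilde f}(x): g\in[k],\tilde f\in\cF\}$. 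Writing each ratio as $h_g/h_j=\exp(\log h_g-\log h_j)$ and using invariance of pseudodimension under strictly monotone maps ($\Pdim(\{\log h:h\in\cH\})=D$) together with the standard bound on the pseudodimension of a difference of two pseudodimension-$D$ classes, the ratio class $\cR := \{h/h':h,h'\in\cH\}$ satisfies $\Pdim(\cR)=O(D\log D)$. Since every element of $\cP$ is a sum of at most $k-1$ elements of $\cR$, the standard bound $O(sd\log(sd))$ for the pseudodimension of a sum of $s$ classes of pseudodimension $\le d$ gives $\Pdim(\cP)=O(kD\log^2(kD))$, as claimed.

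The main obstacle I anticipate is the logarithmic bookkeeping rather than any isolated hard step. Two points require care: (i) one must use the sharp $O(D)$ bound for the pairwise-comparison class, since an extra $\log D$ there would spoil the $O(kD\log k)$ target for $\cG$; and (ii) for $\cG'$ one must correctly handle the free threshold parameter that the definition of pseudodimension introduces on a real-valued class, as this is what interacts with the $k$-term sum-of-ratios structure to produce the squared logarithm. The algebraic reductions themselves --- telescoping the posterior ratios, rearranging the softmax-type inequality, and the monotone-transformation moves --- are routine, provided one is explicit that the ``decouple the shared $\tilde f$'' steps are genuine relaxations.
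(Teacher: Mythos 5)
Your high-level strategy---pivot on component $1$, pass to likelihood ratios $h_j^{\tilde f}\in\cH$, and combine via intersection/sum bounds---is the right shape, but there is a genuine gap that stems from one missed observation, and it makes both of your intermediate steps weaker than what the paper uses.

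The missed observation is that $\cH$ is effectively closed under relabeling of mixture components, since a relabeled $\genmodel$ is still a $\genmodel$ with components in $\cF$. Hence \emph{any} pairwise posterior ratio $\tilde f(g\mid x)/\tilde f(j\mid x)$ is already an element of $\cH$, not merely a ratio of two elements. This means the argmax condition $\tilde f(g\mid x)\ge\tilde f(j\mid x)$ is exactly the threshold event $\1[h(x)\ge1]$ for a single $h\in\cH$; thus $\VC(\{x\mapsto\1[h(x)>1]:h\in\cH\})\le\Pdim(\cH)=D$ falls out of the definition of pseudodimension, and the first claim follows from the $k$-fold intersection bound alone. Instead, you route the argument through the two-parameter comparison class $\{x\mapsto\1[h(x)\ge h'(x)]:h,h'\in\cH\}$ and assert as a ``standard fact'' that it has VC dimension $O(D)$. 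That is not a standard fact for an arbitrary class of bounded pseudodimension: the best you can extract from the results the paper relies on (differences via \citet{attias2024fat}) is $O(D\log^2 D)$, which would ruin the $O(kD\log k)$ target for $\cG$. You flag this yourself as ``the step that must avoid an extra $\log D$ factor,'' but then invoke the claim without proof rather than sidestepping it.

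The second claim has the same issue in a subtler form. Your route reduces $\cG'$ to $\cP=\{\sum_{j\ne g}h_g/h_j\}$ and goes through a ratio class $\cR=\{h/h':h,h'\in\cH\}$ with a claimed $\Pdim(\cR)=O(D\log D)$. But applying Fact~\ref{fact:pd-sum} (which gives $O(sd\log^2(sd))$, not $O(sd\log(sd))$) to the difference $\log h - \log h'$ yields $\Pdim(\cR)=O(D\log^2 D)$, and feeding that back into the $k$-term sum gives something strictly worse than $O(kD\log^2(kD))$; your arithmetic only closes because you used the wrong exponent on the $\log$ in the sum-of-classes bound. The paper avoids the ratio class entirely: $\tilde f(g\mid x) = 1/\sum_{j\in[k]}\tilde f(j\mid x)/\tilde f(g\mid x)$ is a monotone transform of a sum of $k$ functions each directly in $\cH$ (by relabeling), and Fact~\ref{fact:pd-sum} plus Fact~\ref{fact:pd-mono-real} immediately give $O(kD\log^2(kD))$. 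So in both claims, once you use the relabeling closure of $\cH$, the detours through ``comparisons of two $\cH$-functions'' and ``ratios of two $\cH$-functions'' disappear, and with them the unjustified or incorrectly logged steps.
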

\begin{proof}
For the first claim, consider the class $\tilde \cG$ that consists of comparisons of two groups' likelihoods, rather than an argmax over all $k$ groups as in $\cG$:
\[
\tilde \cG \asseq  \{ x \mapsto \1[\tilde{f}(x \mid g) >  \tilde{f}(x \mid g')]  \mid g, g' \in [k], \tilde{f} \in \cF\}.
\]
We can verify that $\tilde \cG \subseteq \bset{x \mapsto \1[h(x) > 1] \mid h \in \cH}$.
Note that $\mathrm{VC}(\bset{x \mapsto \1[h(x) > 1] \mid h \in \cH}) = \Pdim(\cH)$ by definition.
Since the $k$-fold intersection of a VC class increases VC dimension by at most a factor of $k \log(k)$ (\citet{blumer}, Lemma 3.2.3; see Fact \ref{fact:vc-k}), we have $\mathrm{VC}(\bset{x \mapsto \1[h_1(x) > 1 \wedge \dots h_k(x)] \mid h_1, \dots, h_k \in \cH}) \leq  O(kD\log(k))$.
Finally, we can verify that 
\begin{align}
\label{eq:f}
\cG \subseteq \{x \mapsto \1[h_1(x) > 1 \wedge \dots h_k(x)] \mid h_1, \dots, h_k \in \cH\},
\end{align}
concluding our first claim.

For the second claim, fix $g=1$ without loss of generality. Note that for any $f \in \cF$, we can write $f(1 \mid x)$ as \begin{align*}
        f(1 \mid x) &= \frac{w_1 \cdot f(x \mid 1)}{\sum_{j \in [k]} w_j \cdot f(x \mid j)} 
= \frac{1}{\sum_{j\in[k]} \frac{w_1 f(x \mid 1) }{ w_j f(x \mid j)}} = \frac{1}{\sum_{j\in[k]} \frac{f(1 \mid x) }{f(j \mid x)}}. 
    \end{align*}
    Thus, $\Pdim(\cG') = \Pdim\left(\bset{x \mapsto \frac{1}{\sum_{j\in[k]} \frac{f(1 \mid x) }{f(j \mid x)}} \mid f \in \cF}\right) =  \Pdim\left(\bset{x \mapsto \frac{1}{\sum_{j\in[k]} h(x)} \mid h \in \cH}\right). $
    
    The sum of bounded pseudodimension classes enjoy an almost linear bound in pseudodimension (\citet{attias2024fat} Theorem 1; see Fact \ref{fact:pd-sum}), so 
    $\Pdim\left(\bset{x \mapsto \sum_{j\in[k]} h_j \mid h_1, \dots, h_k \in \cH}\right) \leq kD\log^2(kD)$.
    Finally, $x \mapsto 1/x$ is monotonic, and therefore preserves pseudodimension (Fact \ref{fact:pd-mono-real}). 
\end{proof}
Note that Lemma \ref{lemma:ratio-pdim} holds for any class of densities $\cF$. In the following section, we show how to bound $\Pdim(\cH)$ for exponential families and Gaussian mixture models. 

\subsection{Concrete bounds for \genmodels with exponential families}
\label{subsec:gaussians}

We now prove Theorem \ref{thm:online}, and discuss some of its implications. As suggested in Section \ref{subsec:ratio}, a key step in the proofs of Theorem \ref{thm:online} is to bound the pseudodimension of $\cH$.
When $\cF$ is an exponential family, we can bound the pseudodimension of likelihood ratios, and by extension $\Pdim(\cH)$, by bounding the dimension of its sufficient statistic $T(x)$.
\begin{lemma}
\label{lemma:exp-fam}
    When $\cF$ is an exponential family (Definition \ref{def:exp-family}), the pseudodimension of $\cH$ (with $\cH$ defined as in \eqref{eq:ratio-class}) is bounded by the dimension of the sufficient statistic, i.e. 
    $\Pdim(\cH) \leq \dim(T(x)) + 1$. 
\end{lemma}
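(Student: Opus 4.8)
The plan is to show that the likelihood ratios in $\cH$ collapse to exponentials of affine functions of the sufficient statistic, and then to pin down the pseudodimension by a vector-space argument. First I would write out, for any density $f \in \cF$ whose mixture components are $f(\cdot \mid g) = f_{\theta_g}$ with $f_{\theta}(x) = h(x)\exp(\langle \theta, T(x)\rangle - A(\theta))$, the ratio
\[
\frac{f(1 \mid x)}{f(j \mid x)} = \frac{w_1\, f_{\theta_1}(x)}{w_j\, f_{\theta_j}(x)} = \frac{w_1\, h(x)\exp(\langle \theta_1, T(x)\rangle - A(\theta_1))}{w_j\, h(x)\exp(\langle \theta_j, T(x)\rangle - A(\theta_j))}.
\]
The base measure $h(x)$ cancels (and on the measure-zero set where $h(x)=0$ the conditionals are undefined anyway), leaving $\frac{f(1\mid x)}{f(j\mid x)} = \exp\big(\langle \theta_1 - \theta_j,\, T(x)\rangle + c_{1,j}\big)$ with $c_{1,j} = \log(w_1/w_j) - A(\theta_1) + A(\theta_j)$ an $x$-independent scalar. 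Hence every element of $\cH$ has the form $x \mapsto \exp(\langle \beta, T(x)\rangle + c)$ for some $\beta \in \R^{\dim T(x)}$ and $c \in \R$, so $\cH \subseteq \exp \circ \cG_{\mathrm{aff}}$ where $\cG_{\mathrm{aff}} \coloneqq \{x \mapsto \langle \beta, T(x)\rangle + c : \beta \in \R^{\dim T(x)},\ c \in \R\}$.

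Second, I would bound $\Pdim(\cG_{\mathrm{aff}})$. The class $\cG_{\mathrm{aff}}$ is a \emph{linear} space of real-valued functions on $\cX$: it is spanned by the $\dim(T(x))$ coordinate functions $x \mapsto T(x)_i$ together with the constant function $x \mapsto 1$, so $\dim(\cG_{\mathrm{aff}}) \le \dim(T(x)) + 1$. By the standard fact that the pseudodimension of a finite-dimensional vector space of functions equals (in particular is at most) its dimension (\citet{bartlett99}, Chapter 11), we get $\Pdim(\cG_{\mathrm{aff}}) \le \dim(T(x)) + 1$. (Equivalently, one can observe directly that the subgraph of $x \mapsto \langle \beta, T(x)\rangle + c$ is the pullback under $(x,t) \mapsto (T(x), t)$ of an affine halfspace in $\R^{\dim T(x)+1}$ whose final normal coordinate is fixed, a class of VC dimension $\dim(T(x))+1$.)

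Finally, I would transfer the bound back to $\cH$: pseudodimension is monotone under set inclusion, and since $t \mapsto \exp(t)$ is strictly increasing, composition with it preserves pseudodimension (Fact~\ref{fact:pd-mono-real}); therefore $\Pdim(\cH) \le \Pdim(\exp \circ \cG_{\mathrm{aff}}) = \Pdim(\cG_{\mathrm{aff}}) \le \dim(T(x)) + 1$, which is the claim. The calculation of the ratio and the monotone-transfer step are routine; the only place worth care is the dimension bookkeeping in the second step — ensuring the vector-space pseudodimension bound is applied with the sharp count $\dim(T(x)) + 1$ rather than an off-by-one larger quantity — and this is entirely standard once $\cG_{\mathrm{aff}}$ is recognized as a genuine (not merely affine) subspace of functions that already contains the constants.
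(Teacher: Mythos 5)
Your proof is correct and follows essentially the same route as the paper's: write each likelihood ratio as $\exp$ of a function of the sufficient statistic, bound the pseudodimension of the inner class by the dimension of a linear space of functions, and transfer the bound through the monotone map $\exp$ via Fact~\ref{fact:pd-mono-real}. You are more careful than the paper's write-up in two compensating places, and the comparison is worth spelling out. First, you absorb the $x$-free prefactor $w_1 g(\theta_1)/(w_j g(\theta_j))$ into the exponent as an additive constant $c$, so the inner class is the affine span $\cG_{\mathrm{aff}} = \{x \mapsto \langle\beta, T(x)\rangle + c\}$, a $(\dim(T(x))+1)$-dimensional vector space. The paper leaves the scalar prefactor outside of $\exp$, considers only the $\dim(T(x))$-dimensional linear class $\{x \mapsto \langle\theta_1-\theta_j, T(x)\rangle\}$, and never explicitly accounts for the prefactor in the pseudodimension count. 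Second, you quote the vector-space pseudodimension result in its standard form, $\Pdim(\cF)=\dim(\cF)$, whereas the paper's Fact~\ref{fact:pd-real} records the same theorem as $\Pdim(\cF)=\dim(\cF)+1$. With your $(\dim(T(x))+1)$-dimensional affine class and the $\Pdim=\dim$ form of the fact you land on $\dim(T(x))+1$ directly; the paper reaches the same number via the linear count $\dim(T(x))$ plus the $+1$ baked into its restated fact, so the dropped constant and the off-by-one offset each other. One caution: your argument as written is not compatible with invoking the paper's Fact~\ref{fact:pd-real} verbatim, since applying it to your $(\dim(T(x))+1)$-dimensional $\cG_{\mathrm{aff}}$ would yield $\dim(T(x))+2$; stick with the $\Pdim=\dim$ statement (which is what the cited reference actually proves) and cite it independently rather than pointing to Fact~\ref{fact:pd-real}.
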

\begin{proof}
Because $\cF$ is an exponential family, we can write \[\frac{f(1 \mid x)}{f(j \mid x)} = \frac{w_1 \cdot f(x \mid 1)}{w_j \cdot f(x \mid j)}= \frac{w_1\cdot g(\theta_1)h(x)\exp(\langle\theta_1, T(x)\rangle)}{w_j \cdot g(\theta_j)h(x)\exp(\langle\theta_j, T(x)\rangle)} = \tfrac{w_1 \cdot g(\theta_1)}{w_j \cdot g(\theta_j)} \exp(\langle \theta_1 - \theta_j, T(x) \rangle).\]
Note that $\langle \theta_1 - \theta_j, T(x) \rangle$ is a vector space of size $\dim(T(x))$; by Fact \ref{fact:pd-real}, $\Pdim(\langle \theta_1 - \theta_j, T(x) \rangle) \leq \dim(T(x)) + 1$. The claim follows from noting that
$\exp(\cdot)$ is monotonic and thus preserves pseudodimension (Fact \ref{fact:pd-mono-real}).
\end{proof}

\begin{proof}[Proof of Theorem \ref{thm:online}]
    Lemma~\ref{lemma:exp-fam} upper bounds the pseudodimension of $\cH$ \eqref{eq:ratio-class} for exponential families, and thus also provides a bound on the pseudodimension of distinguisher classes $\cG$ and $\cG'$ in Lemma~\ref{lemma:ratio-pdim}.
Theorem \ref{thm:online} then follows by Theorem \ref{thm:mce-cover}'s upper bound on multicalibration error, and therefore \dce and \lce (Fact~\ref{fact:mce-ub}).
\end{proof}

As a consequence of Theorem \ref{thm:online}, we can explicitly bound \dce and \lce when the underlying mixture components are Gaussian, in light of the following observation about the dimension of the sufficient statistic for Gaussian densities. 
\begin{fact}
\label{fact:suffstat-gaussian}
For Gaussian densities, $\dim(T(x)) \leq \frac{d(d+1)}{2} + d$. In the isotropic case, $\dim(T(x)) \leq 2d$.
\end{fact}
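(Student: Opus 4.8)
The plan is to put a Gaussian density into its canonical exponential-family form, read off the sufficient statistic $T(x)$, and count its coordinates in each case; the only thing requiring care is the symmetry of the quadratic block $xx^\top$, and beyond that there is essentially no obstacle.

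First I would start from $f_{\mu,\Sigma}(x) = (2\pi)^{-d/2}|\Sigma|^{-1/2}\exp\!\big(-\tfrac12(x-\mu)^\top\Sigma^{-1}(x-\mu)\big)$ and expand the quadratic form as $-\tfrac12\, x^\top\Sigma^{-1}x + \mu^\top\Sigma^{-1}x - \tfrac12\,\mu^\top\Sigma^{-1}\mu$. This exhibits the $x$-dependent part of the log-density as an affine function of the pair $(x,\,xx^\top)$ whose coefficients $(\Sigma^{-1}\mu,\ -\tfrac12\Sigma^{-1})$ are determined by $(\mu,\Sigma)$, with the $\mu$-dependent term absorbed into the log-normalizer $A(\mu,\Sigma)$. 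Hence the Gaussians form an exponential family in the sense of Definition~\ref{def:exp-family} with $T(x) = \big(x,\ \mathrm{vec}(xx^\top)\big)$ --- precisely the rewriting already recorded in Section~\ref{sec:prelim} --- and it remains only to bound $\dim(T(x))$.

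Next I would count. The linear block $x$ contributes $d$ coordinates. For the quadratic block, although $\mathrm{vec}(xx^\top)$ nominally lists $d^2$ entries, $xx^\top$ is symmetric, so only its $d(d+1)/2$ on-or-above-diagonal entries are functionally independent; equivalently, the exponent only ever contracts $xx^\top$ against the symmetric matrix $\Sigma^{-1}$, so we may take $T$ to range over the $d(d+1)/2$-dimensional space of symmetric matrices without changing the family. Summing the two blocks gives $\dim(T(x)) \le d(d+1)/2 + d$, which is the first claim. The one place to be careful --- the closest thing to an obstacle --- is exactly this step: naively vectorizing $xx^\top$ would over-count by the $\binom{d}{2}$ duplicated off-diagonal entries.

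Finally, for the isotropic case $\Sigma = \sigma^2 I$ I would note that the quadratic form collapses to $-\tfrac{1}{2\sigma^2}\|x\|^2 + \tfrac{1}{\sigma^2}\mu^\top x - \tfrac{1}{2\sigma^2}\|\mu\|^2$, so the quadratic block of the sufficient statistic shrinks to the single coordinate $\|x\|^2$; thus $T(x) = (x,\ \|x\|^2)$ has dimension $d+1 \le 2d$. (The identical computation with a diagonal covariance yields $T(x) = (x_1,\dots,x_d,\ x_1^2,\dots,x_d^2)$ of dimension exactly $2d$, so the stated bound is tight in that reading.) No further work is needed; the substantive takeaway is simply that both blocks of $T$ have size polynomial --- indeed quadratic, resp. linear --- in $d$, which is what makes Lemma~\ref{lemma:exp-fam}, and hence Theorem~\ref{thm:online}, quantitatively useful.
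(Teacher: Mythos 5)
Your proof is correct and takes the same approach the paper implicitly uses: the paper does not give a separate proof of this fact, but justifies it by the exponential-family rewriting of the Gaussian density displayed in Section~\ref{sec:prelim}, from which one reads off $T(x) = (x,\,\mathrm{vec}(xx^\top))$ and counts the functionally independent coordinates. Two small remarks: the display in the preliminaries writes $\dim(T(x)) = d(d+1)/2$, apparently dropping the $+d$ contribution from the linear block that the Fact (and your count) correctly include; and your observation that the isotropic case in fact gives $\dim(T(x)) = d+1$, strictly smaller than the stated $2d$ (which is tight only for diagonal non-isotropic covariance), is accurate, though $2d$ remains a valid upper bound as claimed.
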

Fact \ref{fact:suffstat-gaussian} and Theorem \ref{thm:online} immediately imply the following bounds on \dce and \lce for Gaussian mixtures.

\begin{corollary}
    \label{cor:gaussian}
Consider an \genmodel with $k$ Gaussian mixture components.
    With probability $1-\delta$, Algorithm~\ref{alg:onlinealg}, run with $\cG$  as defined in Eq. \ref{eq:distdce},
    attains \dce of 
    $ O\big( \sqrt{T (d^2 k \log(T)  \log(k)  + \log(\lambda / \delta))}\big)$ or, under isotropic assumptions, $ O\big( \sqrt{T (d k \log(T)  \log(k)  + \log(\lambda / \delta))}\big)$.

On the other hand, when run with $\cG$ as defined in Eq. \ref{eq:distlce}, Algorithm~\ref{alg:onlinealg} instead attains \lce of
    $O\big( \sqrt{T (d^2 k \log(T)  \log^2(dk)  + \log(\lambda / \delta))}\big)$ or, under isotropic assumptions, $ O\big( \sqrt{T (d k \log(T)  \log^2(dk)  + \log(\lambda / \delta))}\big)$.
\end{corollary}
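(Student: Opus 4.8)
The plan is to read Corollary~\ref{cor:gaussian} off Theorem~\ref{thm:online} by simply specializing the generic sufficient-statistic dimension to the Gaussian case via Fact~\ref{fact:suffstat-gaussian}. The first step is to observe that the class $\cF$ of multivariate Gaussian densities is an exponential family in the sense of Definition~\ref{def:exp-family}: as displayed in Section~\ref{sec:prelim}, one can write $f_{\mu,\Sigma}(x) = \exp(\langle \theta, T(x)\rangle - A(\theta))$ with $T(x) = \big(x,\ \mathrm{vec}(xx^\top)\big)$, and in the isotropic case $\Sigma = \sigma^2 I$ the analogous parametrization uses a strictly smaller sufficient statistic. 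Consequently the hypotheses of Theorem~\ref{thm:online} are met, and its DCE and LCE guarantees hold with the abstract parameter $d$ read as $\dim(T(x))$.

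Next I would invoke Fact~\ref{fact:suffstat-gaussian}: for general Gaussian components $\dim(T(x)) \le d(d+1)/2 + d = \Theta(d^2)$, and for isotropic components $\dim(T(x)) \le 2d = \Theta(d)$. Substituting $\dim(T(x)) = \Theta(d^2)$ into the DCE half of Theorem~\ref{thm:online} yields $O\big(\sqrt{T(d^2 k \log(T)\log(k) + \log(\lambda/\delta))}\big)$, and the isotropic substitution $\dim(T(x)) = \Theta(d)$ gives $O\big(\sqrt{T(d k \log(T)\log(k) + \log(\lambda/\delta))}\big)$; the same substitutions applied to the LCE half give, respectively, $O\big(\sqrt{T(d^2 k \log(T)\log^2(d^2 k) + \log(\lambda/\delta))}\big)$ and $O\big(\sqrt{T(d k \log(T)\log^2(d k) + \log(\lambda/\delta))}\big)$.

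The one bit of care needed is the logarithmic factor in the general LCE bound, where the substitution produces $\log^2(d^2 k)$ rather than the claimed $\log^2(dk)$. But $\log(d^2 k) = 2\log d + \log k \le 2\log(dk)$, so $\log^2(d^2 k) \le 4\log^2(dk)$, which is absorbed by the $O(\cdot)$; the identical observation tidies any stray $\log(d^2 k)$ factors inherited from the $\tilde{O}$ in Theorem~\ref{thm:mce-cover}. With this simplification all four stated bounds follow at once. I do not anticipate a genuine obstacle here: the corollary is a bookkeeping specialization of Theorem~\ref{thm:online}, and the only thing to watch is that the $d \mapsto \dim(T(x))$ substitution is propagated consistently through both the polynomial and the logarithmic terms.
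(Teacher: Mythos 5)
Your proposal is correct and matches the paper's own derivation, which likewise states that Corollary~\ref{cor:gaussian} follows immediately from Theorem~\ref{thm:online} together with Fact~\ref{fact:suffstat-gaussian} by substituting $\dim(T(x)) = O(d^2)$ (or $O(d)$ in the isotropic case). Your extra observation that $\log^2(d^2 k) \le 4\log^2(dk)$ is a reasonable bit of tidying that the paper leaves implicit inside the $O(\cdot)$.
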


\paragraph{A gap between learning subgroups and providing per-subgroup guarantees.}

In contrast to the Cluster-then-Predict guarantees of Section~\ref{sec:etc} (Propositions~\ref{prop:dce-etc} and \ref{prop:etc-lce}), Corollary \ref{cor:gaussian} achieves an improved error rate of $\tilde O(\sqrt T)$ versus $\tilde O(T^{2/3})$, while also avoiding separation dependence---for both \dce and \lce. 

For \dce,  Corollary \ref{cor:gaussian} is especially notable because \textit{learning} the subgroup discriminator function requires sample complexity that scales with component mean separation.
\begin{theorem}[\citet{azizyanMinimaxTheoryHighdimensional2013}, Theorem 2]
  Let $f$ be an unknown \genmodel with two isotropic Gaussian components 
  with 
  $d \geq 9$. The sample complexity of learning the cluster assignment function is $\Omega( \tfrac d{\epsilon^2 \gamma^6})$.
\end{theorem}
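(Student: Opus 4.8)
The plan is to prove this as a minimax lower bound by an information‑theoretic (Fano/Assouad‑type) argument over a packing of hard instances. Take the base instance $f_0 = \tfrac12\mathcal{N}(\tfrac{\gamma}{2}e_1, I) + \tfrac12\mathcal{N}(-\tfrac{\gamma}{2}e_1, I)$, whose Bayes cluster‑assignment function is the halfspace $\{x : x_1 > 0\}$. Around it I would build alternatives by tilting the mean direction into the orthogonal complement of $e_1$: for a unit $v \perp e_1$ and a small angle $\alpha$, set $\mu_v = \cos\alpha\,(\tfrac{\gamma}{2}e_1) + \sin\alpha\,(\tfrac{\gamma}{2}v)$ and $f_v = \tfrac12\mathcal{N}(\mu_v, I) + \tfrac12\mathcal{N}(-\mu_v, I)$. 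Since $e_1^{\perp}$ has dimension $d-1$, the Varshamov–Gilbert bound supplies a family $V$ of $2^{\Omega(d)}$ directions that are pairwise $\Omega(1)$‑separated on the sphere; this is the step that requires $d$ to be at least a small constant (here $d \ge 9$) and is what injects the factor $d$ into the final bound.

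Two quantitative facts then drive the argument. First, a \emph{geometric} fact: the cluster‑assignment functions of $f_v$ and $f_{v'}$ are halfspaces through the origin whose normals differ by angle $\Theta(\alpha)$, so their symmetric difference has probability $\Theta(\alpha)$ under either mixture; hence no single map $\widehat F$ can be within clustering error $\epsilon$ of both unless $\alpha = O(\epsilon)$, i.e. any $\epsilon$‑good estimator must essentially recover $v$ from the sample. Second, a \emph{statistical} fact: I would upper bound $\mathrm{KL}\big(f_v^{\otimes n} \,\|\, f_{v'}^{\otimes n}\big) = n\,\mathrm{KL}(f_v \| f_{v'})$. The crucial structural point is that a symmetric two‑component mixture depends on its mean only through $\mu\mu^{\top}$ (equivalently through $\cosh(\langle x,\mu\rangle)$), so the odd‑order terms in a Taylor expansion of $\log(f_v/f_{v'})$ cancel; carrying out the expansion, together with $\|\mu_v\mu_v^{\top} - \mu_{v'}\mu_{v'}^{\top}\|_F = \Theta(\gamma^2\alpha)$, yields a per‑sample KL that is polynomially small in both $\gamma$ and $\alpha^2$.

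Plugging these into Fano: if $n\cdot\mathrm{KL}(f_v\|f_{v'}) \lesssim \log|V| = \Omega(d)$ for the closest pair, then no estimator identifies $v$ with probability exceeding $1/2$, and so incurs clustering error $\gtrsim \alpha$ with constant probability. Choosing $\alpha \asymp \epsilon$ (the largest tilt still forcing $\epsilon$‑error) and solving the resulting threshold equation for $n$ gives $n = \Omega\!\big(d/(\epsilon^2\gamma^{c})\big)$, where $c$ is fixed by how the two $\gamma$‑dependent scalings above compose; the careful accounting of \citet{azizyanMinimaxTheoryHighdimensional2013} pins $c = 6$.

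The main obstacle is the statistical step — extracting the exact power of $\gamma$ in $\mathrm{KL}(f_v\|f_{v'})$. This is delicate because one must (i) identify the cancellation of the lower‑order terms caused by the $\pm\mu$ symmetry, since a naive bound that treats the two mixtures like two single Gaussians loses several powers of $\gamma$; (ii) control the Taylor remainder uniformly over the unbounded Gaussian, where $\langle x,\mu\rangle$ is occasionally large; and (iii) track precisely how the tilt $\alpha$ needed to move the cluster‑assignment function by $\epsilon$ composes with the tilt magnitude needed to keep the mixtures $\Theta(d/n)$‑close, since it is this composition, not either scaling alone, that fixes the final exponent. A two‑point (Le Cam) version of the same construction is simpler but loses the factor $d$, so the packing/Fano bookkeeping cannot be avoided for the stated bound.
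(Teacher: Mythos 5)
This statement is cited verbatim as Theorem~2 of \citet{azizyanMinimaxTheoryHighdimensional2013}; the paper gives no proof of it, so there is no internal argument to compare your attempt against, and your sketch has to stand on its own. The scaffolding you propose is the standard one and is the right framework: a Varshamov--Gilbert packing of tilted mean directions in $e_1^{\perp}$ (which is where the $d\geq 9$ hypothesis earns its keep and where the factor $d$ enters), a geometric step translating tilt angle into misclassification mass, and an information-theoretic step bounding per-sample KL, glued with Fano. The structural fact you flag---that a symmetric two-component mixture depends on $\mu$ only through $\mu\mu^{\top}$, so the first-order Taylor term of $\log(f_\mu/f_{\mu'})$ cancels---is exactly what makes the KL small and is the heart of the matter.

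The gap is the step you explicitly punt on: the exponent of $\gamma$. Carrying out the expansion you gesture at, with $\|\mu\|=\|\mu'\|=r:=\gamma/2$, the leading contribution to $\mathrm{KL}(f_\mu\|f_{\mu'})$ comes from pairing the perturbation $f_\mu - \phi \approx \phi\cdot\tfrac12\bigl(\langle x,\mu\rangle^2 - r^2\bigr)$ against the quadratic piece $\tfrac12\bigl(\langle x,\mu\rangle^2-\langle x,\mu'\rangle^2\bigr)$ of $\log(f_\mu/f_{\mu'})$; writing $u=\langle x,\mu\rangle$, $u'=\langle x,\mu'\rangle$, a short Gaussian moment computation gives $\mathbb{E}\bigl[(u^2-r^2)(u^2-u'^2)\bigr]=2r^4\sin^2\alpha$, hence $\mathrm{KL}\asymp\gamma^4\alpha^2$. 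Together with the clustering-error-$\Theta(\alpha)$ step and Fano, that yields $n=\Omega\bigl(d/(\gamma^4\epsilon^2)\bigr)$, not $\gamma^6$. This actually matches the paper's own restatement of the Azizyan et al.\ rate in Theorem~\ref{thm:azizyan}, which reads $\epsilon\asymp\gamma^{-2}\sqrt{d/n}$, i.e.\ $n\asymp d/(\gamma^4\epsilon^2)$; an $\Omega(d/(\gamma^6\epsilon^2))$ lower bound would exceed that matching upper bound whenever $\gamma<1$. So ``the careful accounting of Azizyan et al.\ pins $c=6$'' cannot be taken on faith: either the displayed exponent reflects a looser bound or a different normalization of the error (e.g.\ excess clustering risk rather than disagreement probability with the Bayes rule), or it is a transcription slip. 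In a blind attempt the exponent is precisely the content of the theorem, and your own expansion already argues against the stated value---that step must be resolved, not deferred to the reference.
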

Notably, this lower bound holds even for the case of having two equal-weight isotropic Gaussians.
This sample complexity is paid for, for example, in the first stage of the Cluster-then-Predict approach.

Similarly, the \lce rate in  Corollary \ref{cor:gaussian} is also significantly better than the best known rates for learning multivariate Gaussian likelihood functions: \citet{hardt2015tight} prove an $\eps^{-12}$ lower bound for learning parameters in a mixture of two Gaussians, which would suggest a corresponding $T^{12/13}$ rate; a $T^{1/2}$ rate would have only been achievable under the separation assumption of Proposition \ref{prop:etc-lce}.

That  Corollary \ref{cor:gaussian} obtains a better bound highlights the surprising fact that \emph{providing clusterable group guarantees can be easier than clustering}, lending further motivation to \variational approaches over cluster-then-predict.

\section{Discussion}
This work has focused on a particular instantiation of our model---for (online) calibration as the objective, and mixtures of exponential families as the subgroup structure underlying our \genmodel. However, as discussed in Section~\ref{sec:prelim}, the results presented in this paper for calibration extend to other problems that can be formalized in the language of Blackwell approachability, such as online conformal prediction.
Another extension is to handle other subgroup structures beyond exponential families. 
While Lemma \ref{lemma:exp-fam} is specific to exponential families, Lemma \ref{lemma:ratio-pdim} is more general; 
in principle, we expect that similar technical analysis can be performed for other unsupervised learning models of bounded combinatorial complexity.
In fact, our approaches---and notions of \dce and \lce---can apply to any setting where group membership can at best be estimated. 

More generally, our formalization of an unsupervised notion of multi-group guarantees provides a language for understanding an important downstream application of clustering.
Our results demonstrate that being intentional about how learned clusters will be used, rather than treating clustering and learning as distinct stages, is significant both conceptually and for attaining optimal theoretical rates.
First, resolving the exact clustering structure of one's data is inefficient, and results in the same theoretical sub-optimality as explore-then-commit algorithms in bandit/reinforcement learning literature---namely, $O(T^{2/3})$ rather than $O(T^{1/2})$ rates.
Second, the task of learning with guarantees for subgroups can be surprisingly easier than learning the subgroups themselves.
The most striking example of this appears in our results for \dce, for which we show that learning cluster assignment functions has an inevitable dependence on cluster separation, whereas separation can be ignored when pursuing per-cluster guarantees. Moreover, as our improved rates for the \variational approach suggest, it is not just that learning subgroups may be harder: it is also not necessary to learn the subgroups exactly if the ultimate goal is to provide guarantees across them. 

\newpage
\section*{Acknowledgments}
This work was supported in part by the National Science Foundation under grant CCF-2145898, by the Office of Naval Research under grant N00014-24-1-2159,
a C3.AI Digital Transformation Institute grant, and Alfred P. Sloan fellowship, and a Schmidt Science AI2050 fellowship.
This material is based upon work supported by the National Science Foundation Graduate Research Fellowship Program under Grant No. DGE 2146752 (both EZ and JD). Any opinions, findings, and conclusions or recommendations expressed in this material are those of the author(s) and do not necessarily reflect the views of the National Science Foundation.

\bibliographystyle{abbrvnat}
\bibliography{refs}

\newpage
\appendix

\section{Facts, references, and restatements}
\label{app:standard}

\begin{fact}[\citet{bartlett99}, Theorem 18.4]
    \label{fact:pd-covering}
Let $\cG$ be a real-valued function class. Then 
$N_1(\eps, \cG, m) \leq O\left( \Pdim(\cG)(1/\eps)^{\Pdim(\cG)}\right)$. If $\cG$ is a binary function, the bound holds for $\VC(\cG)$.
\end{fact}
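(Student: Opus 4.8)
\textbf{Proof proposal (the statement is classical; this sketches the standard route, as in the pseudodimension chapter of \citet{bartlett99}).} The plan is to pass through packing numbers and the ``subgraph trick.'' First I would replace $N_1(\epsilon,\cG,m)$ by the $\epsilon$-\emph{packing} number $M_1(\epsilon,\cG,m)$ --- the size of a maximal family $g_1,\dots,g_M \in \cG$ that is pairwise $\geq\epsilon$-separated in the empirical (averaged) $\ell_1$ metric on $x_1,\dots,x_m$ --- using the elementary inequality $N_1(\epsilon,\cG,m)\leq M_1(\epsilon,\cG,m)$. It then suffices to bound $M_1(\epsilon,\cG,m)$ by $\bigO{\Pdim(\cG)\,(1/\epsilon)^{\Pdim(\cG)}}$ \emph{uniformly in} $m$, and I may assume $\cG$ maps into $[0,1]$.

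The main reduction is from $\cG$ to its binary subgraph class $\cB \asseq \{(x,t)\mapsto \1[g(x)>t] : g\in\cG\}$, whose VC dimension equals $\Pdim(\cG)=:D$ by definition of the pseudodimension. The key observation is that for any $g,g'$ and any point $x_\ell$, a uniformly random threshold $t\in[0,1]$ gives $\PP{\1[g(x_\ell)>t]\neq\1[g'(x_\ell)>t]}=|g(x_\ell)-g'(x_\ell)|$. So I would draw, for each point, enough i.i.d.\ thresholds (polynomially many in $1/\epsilon$ and $\log M$ suffices by Hoeffding) and union-bound over the $\binom{M}{2}$ pairs to conclude that, on the resulting finite ``augmented'' ground set of (point, threshold) pairs, \emph{all} $M$ functions remain pairwise $\Omega(\epsilon)$-separated in normalized Hamming distance, while their restrictions still form a class of VC dimension $D$.

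At this point the problem is purely combinatorial: bound the size of an $\alpha$-separated (in normalized Hamming distance) subset of a VC-$D$ class on an arbitrary finite ground set. This is Haussler's packing lemma, which gives $(C/\alpha)^{D}$ for a universal constant $C$, independently of the ground-set size. To prove it I would sub-sample $k=\Theta(1/\alpha)$ coordinates uniformly: each separated pair stays distinguished with constant probability, so some restriction distinguishes a constant fraction of the pairs, while Sauer--Shelah caps the number of patterns on $k$ coordinates at $\sum_{i\le D}\binom{k}{i}=\bigO{(ek/D)^{D}}$; iterating (or Haussler's sharper one-shot argument) collapses this to $(C/\alpha)^{D}$. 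Taking $\alpha=\Omega(\epsilon)$ and absorbing the $\log M$ factor --- which self-bootstraps into at worst a $\mathrm{polylog}(1/\epsilon)$, or vanishes under the tighter analysis --- yields $M_1(\epsilon,\cG,m)\leq\bigO{D\,(1/\epsilon)^{D}}$. The binary-function statement is exactly the special case where the subgraph reduction is skipped and the packing lemma is applied to $\cG$ directly (so $D=\VC(\cG)$).

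The hard part is the packing lemma itself: both the threshold reduction and the coordinate sub-sampling must preserve Hamming separation \emph{simultaneously} across all pairs of the packing, which is where the union bound --- and hence the bookkeeping on how many thresholds/coordinates are needed --- is delicate. Getting the clean $(C/\epsilon)^{D}$ form rather than a $(1/\epsilon)^{D}\,\mathrm{polylog}(1/\epsilon)$ form requires Haussler's more careful argument, but the cruder version already establishes the stated $O(\cdot)$ bound.
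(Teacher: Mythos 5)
The paper cites this as a black-box fact from \citet{bartlett99} (Theorem 18.4) and supplies no proof of its own, so there is nothing in the paper to compare against line by line; your sketch is, however, essentially the standard Anthony--Bartlett/Haussler argument that the cited theorem rests on: pass to packing numbers, reduce to the binary subgraph class $\{(x,t)\mapsto\1[g(x)>t]\}$ whose VC dimension is $\Pdim(\cG)$, and invoke Haussler's $(C/\epsilon)^{D}$ packing bound. One small simplification worth noting: the textbook route uses a \emph{deterministic} grid of thresholds $\{0,\epsilon/2,\epsilon,\dots\}$ of size $O(1/\epsilon)$ rather than random thresholds, which preserves pairwise $\ell_1$-separation exactly (up to discretization) and sidesteps the union-bound-over-$\binom{M}{2}$-pairs bookkeeping and the ``$\log M$ bootstraps away'' step that you correctly flag as delicate --- the randomized version works but needs that extra care, whereas the deterministic version gets the clean $O(D(1/\epsilon)^D)$ form in one shot.
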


\begin{fact}[\citet{bartlett99}, Theorem 11.4]
\label{fact:pd-real}
    The pseudodimension of a $k$-dimensional vector space of real valued functions is $k+1$.
\end{fact}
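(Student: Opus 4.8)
The plan is to reduce the statement to a clean fact about the sign patterns realizable by a vector space, and then to prove matching upper and lower bounds. Write $\cF$ for the $k$-dimensional vector space in question, and recall that $\Pdim(\cF) = \VC(\cH)$ where $\cH = \{x \mapsto \sgn(f(x) - t) : f \in \cF,\ t \in \R\}$. As $f$ ranges over $\cF$ and $t$ over $\R$, the function $f - t\,\mathbf 1$ ranges over exactly the vector space $\cF' \coloneqq \cF + \mathrm{span}\{\mathbf 1\}$, so $\cH = \{x \mapsto \sgn(g(x)) : g \in \cF'\}$. In the regime of interest $\mathbf 1$ is not already in $\cF$ (the archetypal case being homogeneous linear functions $x \mapsto \langle w, x\rangle$, as in \citet{bartlett99}), so $\dim \cF' = k + 1$. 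It therefore suffices to prove the clean statement: for every finite-dimensional vector space $W$ of real-valued functions, $\VC(\{x \mapsto \sgn(g(x)) : g \in W\}) = \dim W$; applying this with $W = \cF'$ yields $\Pdim(\cF) = k+1$.

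For the upper bound $\VC \le \dim W$, suppose $x_1, \dots, x_m$ are shattered, meaning that for every sign vector $b \in \{\pm 1\}^m$ there is $g \in W$ with $b_i\, g(x_i) > 0$ for all $i$. The evaluation map $g \mapsto (g(x_1), \dots, g(x_m))$ is linear, so its image is a subspace of $\R^m$ of dimension at most $\dim W$; if $m = \dim W + 1$ this image is proper, hence there is a nonzero $\alpha \in \R^m$ orthogonal to it, and we may assume $\alpha$ has a strictly positive coordinate. Choosing $b_i = \sgn(\alpha_i)$ (and $b_i = +1$ wherever $\alpha_i = 0$) and taking $g$ realizing $b$, every nonzero term of $\sum_i \alpha_i\, g(x_i)$ has $\alpha_i$ and $g(x_i)$ of the same sign and at least one such term occurs, so the sum is strictly positive, contradicting the choice of $\alpha$. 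Hence $m \le \dim W$.

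For the matching lower bound, write $d = \dim W$ and fix a basis $g_1, \dots, g_d$ of $W$. The coordinate map $\Phi : \cX \to \R^d$, $\Phi(x) = (g_1(x), \dots, g_d(x))$, has image spanning $\R^d$: otherwise some $c \ne 0$ satisfies $\sum_i c_i g_i \equiv 0$, contradicting that the $g_i$ are linearly independent as functions on $\cX$. So we may pick $x_1, \dots, x_d$ with $\Phi(x_1), \dots, \Phi(x_d)$ a basis of $\R^d$, which makes $g \mapsto (g(x_1), \dots, g(x_d))$ a linear bijection $W \to \R^d$; consequently every $b \in \{\pm 1\}^d$ is realized by the unique $g \in W$ with $g(x_i) = b_i$, so $\{x_1, \dots, x_d\}$ is shattered and $\VC \ge d$. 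Combining the two bounds gives the equality.

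The crux — and the reason the answer is $k+1$ rather than $k$ — is the single extra degree of freedom supplied by the unconstrained threshold $t$, equivalently by adjoining the constants to $\cF$; the equality (rather than just $\Pdim(\cF) \le k+1$) relies on the standard conventions that $\cF$ does not already contain $\mathbf 1$ and that $\cX$ is rich enough for $\Phi$ to be surjective onto $\R^{k+1}$, both automatic in the uses made here (e.g.\ Lemma~\ref{lemma:exp-fam}, where $\cF$ plays the role of the span of $x \mapsto \langle \theta_1 - \theta_j, T(x)\rangle$). A minor bookkeeping point is how ties $g(x_i) = 0$ are treated in $\sgn$-shattering; adopting the strict convention $b_i g(x_i) > 0$ — which is exactly what $\Pdim$ defined via subgraphs reduces to (\citet{bartlett99}, Ch.~11) — lets both directions above go through as written.
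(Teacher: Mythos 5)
The paper does not prove this statement --- it is imported as a black-box citation to \citet{bartlett99} --- so there is no internal argument to compare against; your write-up is a correct, self-contained derivation by the standard route. You reduce to the sign class of the vector space $\cF + \mathrm{span}\{\mathbf 1\}$, get the upper bound from Dudley's orthogonality argument on the evaluation map, and get the matching lower bound by choosing points whose coordinate images are linearly independent. One small simplification: the ``richness'' of $\cX$ you worry about in the last paragraph is automatic --- linear independence of a basis \emph{as functions on $\cX$} already forces the image of $\Phi$ to span $\R^{k+1}$, which is all the lower bound needs --- so the only genuine hypothesis is $\mathbf 1 \notin \cF$, which you correctly flag (if constants lie in $\cF$ the answer drops to $k$, so the Fact is safest read as an upper bound, which is all the paper ever uses, e.g.\ in Lemma~\ref{lemma:exp-fam}). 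One definitional caution: under the textbook's own definition of pseudodimension (shattering points of $\cX \times \R$ with \emph{per-point} thresholds), Theorem~11.4 of \citet{bartlett99} actually yields $\Pdim(\cF) = k$; the value $k+1$ is the right answer for this paper's definition, in which a single global threshold $y$ is adjoined as one extra free parameter, and that is exactly the definition your proof works with. Your handling of ties via the strict convention $b_i\, g(x_i) > 0$ is consistent in both directions of the argument, so the proof stands as written.
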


\begin{fact}[\citet{bartlett99}, Theorem 11.3]
\label{fact:pd-mono-real}
    Let $f$ be a monotonic function and $\cG$ be a real-valued function class with pseudodimension $d$. Then the pseudodimension of $\{f \circ g \mid g \in \cG\}$ is $d$. 
\end{fact}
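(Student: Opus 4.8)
The plan is to work directly from the definition of pseudodimension as a pseudo-shattering number and to exploit the fact that a strictly monotonic $f$ induces an order-isomorphism on $\R$, so it preserves every strict and non-strict inequality among function values. I will assume $f$ is strictly monotonic, which is the case actually invoked in Lemmas~\ref{lemma:ratio-pdim} and \ref{lemma:exp-fam} ($f = \exp$ and $f: x \mapsto 1/x$); for a merely non-strictly monotonic $f$ only the bound $\Pdim(\{f\circ g : g \in \cG\}) \le \Pdim(\cG)$ survives, and a constant $f$ already shows equality can fail. Recall from the paper's definition that a finite set $x_1, \dots, x_m$ is pseudo-shattered by a real-valued class if there are thresholds $r_1, \dots, r_m \in \R$ such that the map $(x,y) \mapsto \sgn(g(x) - y)$, ranging over $g$ in the class, realizes all $2^m$ sign patterns at $(x_i, r_i)_{i=1}^m$; $\Pdim$ is the largest such $m$. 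Without loss of generality take $f$ strictly increasing (a strictly decreasing $f$ is handled by running every step below with all inequalities reversed).

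First I would establish $\Pdim(\{f\circ g\}) \ge \Pdim(\cG)$. Let $x_1,\dots,x_m$ be pseudo-shattered by $\cG$ with thresholds $r_1,\dots,r_m$, and set $s_i \coloneqq f(r_i) \in \R$. Since $f$ is strictly increasing, $a > b \iff f(a) > f(b)$ and $a \le b \iff f(a) \le f(b)$, so for every $g \in \cG$ and every $i$ we have $\sgn(f(g(x_i)) - s_i) = \sgn(g(x_i) - r_i)$. Hence $\{f\circ g : g \in \cG\}$ realizes at the points $(x_i, s_i)$ exactly the same sign patterns that $\cG$ realized at $(x_i, r_i)$, so $x_1,\dots,x_m$ is pseudo-shattered by $\{f\circ g\}$.

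Next I would prove the reverse bound $\Pdim(\{f\circ g\}) \le \Pdim(\cG)$, which is the only step needing care, since the preimage under a possibly-discontinuous $f$ of a threshold need not itself be a clean threshold. Suppose $x_1,\dots,x_m$ is pseudo-shattered by $\{f\circ g\}$ with thresholds $s_1,\dots,s_m$, witnessed by functions $g_S \in \cG$ for $S \subseteq [m]$. Fix a coordinate $i$; only the finitely many values $v_{S,i} \coloneqq g_S(x_i)$ matter, and pseudo-shattering says $f(v_{S,i}) > s_i \iff i \in S$. Let $a_i \coloneqq \max\{v_{S,i} : f(v_{S,i}) \le s_i\}$ and $b_i \coloneqq \min\{v_{S,i} : f(v_{S,i}) > s_i\}$, both over nonempty finite sets (take $S = \varnothing$ and $S = [m]$). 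Strict monotonicity forces $a_i < b_i$: if $a_i \ge b_i$ then applying $f$ gives $s_i \ge f(a_i) \ge f(b_i) > s_i$, a contradiction. Put $r_i \coloneqq \tfrac12(a_i + b_i)$; then for every $S$, $i \in S$ implies $v_{S,i} \ge b_i > r_i$ and $i \notin S$ implies $v_{S,i} \le a_i < r_i$, so $\sgn(g_S(x_i) - r_i) = +1 \iff i \in S$. Thus the same family $\{g_S\}$ pseudo-shatters $x_1,\dots,x_m$ for $\cG$ with thresholds $r_1,\dots,r_m$. Combining the two inequalities gives $\Pdim(\{f\circ g\}) = \Pdim(\cG) = d$.

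The main obstacle, such as it is, is precisely that last discreteness argument — pulling a threshold on $\{f\circ g\}$ back to a genuine threshold on $\cG$ — which works only because pseudo-shattering at $m$ points constrains finitely many function values at a time, so one can interpolate a separating threshold $r_i$ between the two relevant clusters of values; everything else is a routine transport of sign patterns through the order-isomorphism $f$.
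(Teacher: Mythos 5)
The paper does not prove this statement --- it is quoted as a black-box fact from Anthony and Bartlett --- so there is no internal proof to compare against. Your argument is a correct, self-contained proof, and it correctly diagnoses an imprecision in the way the fact is stated: for an arbitrary monotonic (nondecreasing) $f$, only the one-sided bound $\Pdim(\{f\circ g : g \in \cG\}) \le \Pdim(\cG)$ holds (the constant-$f$ example you give makes the strict equality fail), and strict monotonicity is what restores equality. Since the paper applies the fact with $f = \exp$ (Lemma~\ref{lemma:exp-fam}) and $f : t \mapsto 1/t$ on the positive reals (Lemma~\ref{lemma:ratio-pdim}), both strictly monotonic on the relevant range, its usage is fine even though the stated form is loose. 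Both halves of your argument are sound: the lower bound is an immediate transport of thresholds through $f$, and the upper bound correctly handles the fact that $f$ need not be invertible at a prescribed threshold by observing that pseudo-shattering $m$ points involves only finitely many values $g_S(x_i)$, so strict monotonicity guarantees a gap $a_i < b_i$ into which a separating threshold $r_i$ can be placed. A marginally shorter route, once one restricts to strictly monotonic $f$, is to prove just the one-sided bound $\Pdim(\{f\circ g\}) \le \Pdim(\cG)$ and apply it twice, the second time with $f^{-1}$ (strictly monotonic on the range of the composed class); your direct two-sided argument is equally clean and sidesteps formalizing the inverse, so there is nothing to change.
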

\begin{fact}[\citet{attias2024fat}, Theorem 1]
    \label{fact:pd-sum}
        Let $\cG_1, \dots, \cG_k$ be real-valued function classes each with a pseudodimension of $d$. Then the pseudodimension of the classes $\{\sum_{i} f_i \mid f_1 \in \cG_1, \dots, f_k \in \cG_k\}$ and $\{\max_{i} f_i \mid f_1 \in \cG_1, \dots, f_k \in \cG_k\}$ is $O(kd\log^2(kd))$. 
    \end{fact}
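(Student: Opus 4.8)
The plan is to pass to VC dimensions of subgraph (``below the graph'') classes and then handle the two operations separately, with $\sum$ being where all the work is. Since $\Pdim(\cG)=\VC\big(\cB_\cG\big)$ for $\cB_\cG=\{(x,t)\mapsto\1[g(x)>t]:g\in\cG\}$ by definition, it suffices to bound $\VC$ of the subgraph classes of $\{\max_i f_i\}$ and $\{\sum_i f_i\}$.

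The $\max$ case is routine. A point $(x,t)$ lies below the graph of $\max_i f_i$ iff $f_i(x)>t$ for \emph{some} $i$, so the subgraph class of $\{\max_i f_i: f_i\in\cG_i\}$ is exactly $\{\bigcup_{i=1}^k B_i: B_i\in\cB_{\cG_i}\}$, the class of $k$-fold unions formed by picking one set from each of $\cB_{\cG_1},\dots,\cB_{\cG_k}$, each of which has VC dimension $\le d$. The standard bound for $k$-fold unions (and intersections) of VC classes (\citet{blumer}, Lemma~3.2.3) gives $\VC\le O(kd\log k)\le O(kd\log^2(kd))$. So all the difficulty is in the sum.

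For the sum, let $\cB$ be the subgraph class of $\{\sum_i f_i: f_i\in\cG_i\}$; we must bound $\VC(\cB)$. The obstacle is that $\1[\sum_i f_i(x)>t]$ is \emph{not} a Boolean combination of the comparisons $\1[f_i(x)>s]$, so composition bounds for VC classes do not apply directly. The first step is the elementary observation
\[
\sum_{i=1}^k f_i(x)>t \iff \exists\,(s_1,\dots,s_k)\ \text{with}\ \textstyle\sum_i s_i=t\ \text{and}\ f_i(x)>s_i\ \forall i,
\]
the forward direction taking $s_i=f_i(x)-\tfrac{1}{k}\big(\sum_j f_j(x)-t\big)$. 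Fixing a candidate shattered set $(x_1,t_1),\dots,(x_m,t_m)$, I would bound the growth function --- the number of distinct sign vectors $\big(\1[\sum_i f_i(x_j)>t_j]\big)_{j\le m}$ realizable over $f_i\in\cG_i$. The key claim is that to realize any such vector it suffices to restrict the split at each sample point to a single finite ``combinatorial net'' of candidate split-vectors, whose size is polynomial in $m$ per class, the relevant resolution being dictated by the Sauer--Shelah bound $\sum_{i=0}^{d}\binom{m}{i}=O(m^d)$ on the number of sign patterns of each $\cB_{\cG_i}$ on the sample. Once the splits are confined to a finite set, the subgraph condition at each point is a bounded disjunction of conjunctions of per-class comparisons, so Sauer--Shelah applied to this Boolean combination of the $k$ classes $\cB_{\cG_i}$ bounds the growth function by $m^{O(kd)}$ times a net-size factor, giving $\VC(\cB)=O(kd\log(kd))$; the slack incurred in the net/discretization argument is what inflates this to the stated $O(kd\log^2(kd))$.

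The step I expect to be the main obstacle is exactly this: constructing a polynomial-size net of splits that works \emph{simultaneously} at all $m$ sample points. The split witnessing $\sum_i f_i(x_j)>t_j$ depends continuously on the values $f_i(x_j)$, and one must argue that replacing it by a net element never flips any coordinate of the sign vector --- which forces one to exploit the combinatorial structure of the individual $\cG_i$ rather than their covering numbers at any fixed scale (a purely metric argument fails because the sum can be pseudo-shattered at arbitrarily fine scales even when each $\cG_i$ is $\{0,1\}$-valued). An alternative I would fall back on is to cite \citet{attias2024fat} directly, since the statement is Theorem~1 there.
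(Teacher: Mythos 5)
First note that the paper does not prove this Fact---it is imported from \citet{attias2024fat}, Theorem~1, as a black-box tool, so your attempt is not being matched against any argument actually present in the paper. Your $\max$ half is correct and self-contained: $(x,t)$ lies below the graph of $\max_i f_i$ iff $f_i(x)>t$ for some $i$, so the subgraph class is exactly the class of $k$-fold unions of sets drawn from $\cB_{\cG_1},\dots,\cB_{\cG_k}$, each of VC dimension at most $d$, and Fact~\ref{fact:vc-k} gives VC dimension $O(kd\log k)$, which is within the claimed bound.

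The $\sum$ half, however, is not a proof. Your splitting equivalence $\sum_i f_i(x)>t\iff\exists(s_1,\dots,s_k)$ with $\sum_i s_i=t$ and $f_i(x)>s_i$ for all $i$ is correct, but it trades the single threshold $t$ for a $(k-1)$-parameter family of witnesses that varies continuously with $x$ and with the $f_i$. The step on which everything hinges---a finite set of candidate split-vectors, of size polynomial in $m$, that works \emph{simultaneously} at every one of the $m$ putatively shattered points---is precisely the step you say you cannot construct; and until it is constructed there is no finite ``disjunction of conjunctions'' for Sauer--Shelah to act on, so the growth-function bound you assert does not yet exist. You identify this obstruction honestly and name the correct fallback (cite \citet{attias2024fat}), which is exactly what the paper does, but as written the $\sum$ case remains unproved. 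One small factual slip worth fixing: the parenthetical that the sum ``can be pseudo-shattered at arbitrarily fine scales even when each $\cG_i$ is $\{0,1\}$-valued'' is false---a sum of $k$ Boolean functions is integer-valued, so only integer-resolution thresholds matter there. The genuine point you are reaching for, that pseudodimension is scale-invariant and so cannot be bounded by a covering number at any one fixed scale, is better illustrated by simply rescaling the whole class.
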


\begin{fact}[\citet{devroye2013probabilistic}, Theorem 21.5]
\label{fact:vcvoronoi}
    The VC dimension of the class of $k$-cell Voronoi diagrams in $\reals^d$ is upper bounded by $k+(d+1) k^2 \log k$.
\end{fact}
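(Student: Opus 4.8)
The plan is to realize each $k$-cell Voronoi diagram as a Boolean combination of a small, structurally rigid family of halfspaces and then run the Sauer--Shelah machinery. View a $k$-cell Voronoi diagram with centers $c_1,\dots,c_k\in\reals^d$ and a labeling $\sigma\in\{0,1\}^k$ as the subset of $\reals^d$ whose nearest center carries label $1$. The first step is the standard linearization: $x$ lies in the cell of $c_i$ iff $\|x-c_i\|^2\le\|x-c_j\|^2$ for all $j\ne i$, and each such inequality is equivalent to $\langle x,\,c_i-c_j\rangle \ge \tfrac12(\|c_i\|^2-\|c_j\|^2)$, i.e.\ a halfspace $H_{ij}\subseteq\reals^d$ with $H_{ji}=\reals^d\setminus H_{ij}$. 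Hence the cell of $c_i$ is $\bigcap_{j\ne i}H_{ij}$ and the concept is $\bigcup_{i:\sigma_i=1}\bigcap_{j\ne i}H_{ij}$. The key observation is that, once the $\binom{k}{2}$ bisector halfspaces are fixed and oriented, the \emph{form} of this Boolean expression depends only on $\sigma$, so only $2^k$ distinct Boolean functions of the $\binom k2$ halfspace indicators ever arise.

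Next I would bound the growth function. Fix $n$ points $x_1,\dots,x_n$. Since the class of halfspaces in $\reals^d$ has VC dimension $d+1$, Sauer--Shelah gives that each bisector realizes at most $\bigl(en/(d+1)\bigr)^{d+1}$ dichotomies on these points; the joint sign pattern of all $\binom k2$ bisectors is determined coordinatewise, so at most $\bigl(en/(d+1)\bigr)^{(d+1)\binom k2}$ joint patterns occur. Each joint pattern fixes, for every $x_m$, which cell it falls in, and then one of the $2^k$ labelings fixes the label; therefore the growth function of the Voronoi class is at most $2^k\bigl(en/(d+1)\bigr)^{(d+1)\binom k2}$ (only an upper bound is needed, so non-realizable joint patterns do no harm).

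Finally I would solve for the VC dimension: if $n$ points are shattered then $2^n \le 2^k\bigl(en/(d+1)\bigr)^{(d+1)\binom k2}$, i.e.\ $n \le k + (d+1)\binom k2\log_2\!\bigl(en/(d+1)\bigr)$. Using the elementary fact that $n\le a+b\log_2 n$ forces $n=O(a+b\log b)$, together with $\binom k2\le k^2$, this yields $n = O\!\bigl(k+(d+1)k^2\log k\bigr)$, the claimed bound. (One could alternatively compose the VC bounds for $(k-1)$-fold intersections and $k$-fold unions of halfspaces via a lemma of the type used for Fact~\ref{fact:vc-k}, but that route loses an extra $\log k$ factor, so the direct counting argument above is preferable.)

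The only delicate point is the structural claim in the first paragraph: that the Voronoi class is a Boolean combination of a \emph{fixed} family of $\binom k2$ halfspaces in which at most $2^k$ Boolean functions appear. A careless bound of $2^{2^{\binom k2}}$ on the number of Boolean functions would blow the VC estimate up to something exponential in $k$; it is precisely the rigidity of the nearest-neighbor/Voronoi structure (cell $i$ is \emph{always} $\bigcap_{j\ne i}H_{ij}$) that keeps this factor at $2^k$. Everything after that is the routine Sauer--Shelah growth-function computation.
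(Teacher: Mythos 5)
The paper does not prove this statement: it is stated as a Fact and attributed directly to \citet{devroye2013probabilistic} (Theorem 21.5), with no argument given in the text or the appendix. So there is no in-paper proof to compare against; your proposal supplies the missing argument.

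Your argument is correct and is the standard textbook route (almost certainly the same one used in the cited reference): linearize the nearest-center condition into bisector halfspaces, bound the number of joint sign patterns of the $\binom{k}{2}$ bisectors on $n$ points by $(en/(d+1))^{(d+1)\binom{k}{2}}$ via Sauer--Shelah, multiply by $2^k$ for the cell labeling, and invert. A few small remarks. First, the observation you flag as ``delicate''---that only $2^k$ Boolean functions arise rather than $2^{2^{\binom{k}{2}}}$---is really just the statement that the dichotomy on $n$ points is determined by (joint bisector sign pattern, labeling $\sigma$); once you count dichotomies rather than Boolean functions the worry dissolves, so it is fine but not where the subtlety actually lives. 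Second, the inversion step deserves one line of care: if you naively apply ``$n \le A + B\log_2 n \Rightarrow n = O(A + B\log B)$'' with $B = (d+1)\binom{k}{2}$ you pick up a spurious $\log(d+1)$ factor; substituting $m = n/(d+1)$ first, so the inequality reads $m \le k/(d+1) + \binom{k}{2}\log_2(em)$, eliminates it and yields exactly $n = O(k + (d+1)k^2\log k)$. Third, your bound is stated with an implicit big-$O$, whereas the Fact as written has no constant; the cited theorem's explicit constants would have to be tracked to match it literally, but the asymptotic form is what the paper actually relies on (in fact this Fact does not appear to be invoked anywhere in the body of the paper). Finally, note that your argument deliberately over-counts by treating the $\binom{k}{2}$ bisectors as free halfspaces rather than as functions of the $k$ centers; this is harmless for an upper bound and is what keeps the argument elementary, but it is the reason the bound scales with $k^2$ rather than the $k(d+1)$ parameter count of the underlying family.
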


\begin{fact}[\citet{blumer}, Lemma 3.2.3]
\label{fact:vc-k}
The intersection of $k$ concept classes with VC dimension $D$ has VC dimension at most  $O(kD\log(k))$.
\end{fact}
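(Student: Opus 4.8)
The plan is to bound the growth function (shatter coefficient) of the intersection class and then apply the Sauer--Shelah lemma in reverse. Write $\cC = \{c_1 \cap \dots \cap c_k : c_i \in \cC_i\}$, where each $\cC_i$ has $\VC(\cC_i) \le D$, and for a finite set $S$ let $\cC|_S = \{c \cap S : c \in \cC\}$ denote the set of traces, with $\Pi_{\cC}(m) = \max_{|S|=m}|\cC|_S|$ the growth function.

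The first step, and the only genuinely structural one, is to note that the growth function is sub-multiplicative: for any $S$, the trace $(c_1 \cap \dots \cap c_k) \cap S$ equals $(c_1 \cap S) \cap \dots \cap (c_k \cap S)$, so it is a function of the $k$-tuple of individual traces $(c_i \cap S)_i$; hence $|\cC|_S| \le \prod_{i=1}^k |\cC_i|_S|$, and so $\Pi_{\cC}(m) \le \prod_{i=1}^k \Pi_{\cC_i}(m)$. I would then apply Sauer--Shelah to each factor, $\Pi_{\cC_i}(m) \le (em/D)^D$ for $m \ge D$, to obtain $\Pi_{\cC}(m) \le (em/D)^{Dk}$.

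Finally, if $\cC$ shatters a set of size $m$ then $\Pi_{\cC}(m) = 2^m$, forcing $2^m \le (em/D)^{Dk}$, i.e. $m \le Dk\log_2(em/D)$. Setting $n = m/D$ reduces this to $n \le k\log_2(en) \le k\log_2 n + 2k$, and the elementary fact that $n \le a\log_2 n + 2a$ forces $n = O(a\log a)$ (with $a = k$) gives $n = O(k\log k)$, hence $\VC(\cC) = O(Dk\log k)$. I expect this last transcendental inversion to be the most tedious part, though it is entirely standard; the conceptual content lives in the one-line sub-multiplicativity observation. For the pseudodimension statements used in the body (Lemmas~\ref{lemma:ratio-pdim} and \ref{lemma:exp-fam}), one applies the fact to the binary threshold classes $\{x \mapsto \1[h(x) > 1] : h \in \cH\}$, since by definition $\Pdim$ of a real-valued class is the VC dimension of its associated threshold class.
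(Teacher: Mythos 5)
The paper does not prove this fact---it simply cites \citet{blumer}, Lemma~3.2.3, and uses it as a black box. Your proof is correct and is essentially the argument that appears in Blumer et al.: sub-multiplicativity of the growth function under intersection ($\Pi_{\cC}(m) \le \prod_i \Pi_{\cC_i}(m)$), Sauer--Shelah to bound each factor by $(em/D)^D$, and the transcendental inversion $2^m \le (em/D)^{Dk}$ yielding $m = O(Dk\log k)$. One small caveat: as written, $O(kD\log k)$ degenerates to $0$ at $k=1$; the cited lemma actually gives a bound of the form $2Dk\log(3k)$ (or similar), which is what the big-$O$ should be read as encoding, and your inversion step is where that constant lives---your $n \le k\log_2 n + 2k$ step is fine but you should handle $k=1$ explicitly or replace $\log k$ with $\log(k+1)$ to keep the bound nonvacuous. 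Your closing remark about how the fact is applied (to the threshold class, which is exactly how pseudodimension is defined) is also correct and matches how Lemma~\ref{lemma:ratio-pdim} uses it.
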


\section{Proofs for Section~\ref{sec:etc}}
\label{app:s3-proofs}
\subsection{Proposition~\ref{prop:dce-etc}}
Let $\Psi_\gamma = \bset{\mu_1, \mu_2 \in \cX \mid \norm{\mu_1 - \mu_2} \geq \gamma}$ denote the space of possible component means with at least $\gamma$ separation.
Let $\cF_n$ be the class of all mixture model estimators; formally, we define $\cF_n$ as the set of all functions mapping from $n$-length datasets $(\cX)^n$ to functions $\bset{1, 2}^{\cX}$. 
\citet{azizyanMinimaxTheoryHighdimensional2013} provides an estimator for the Gaussian mixture model that achieves the minimax optimal error rate, with a guarantee as follows. 
\begin{theorem}[Minimax Gaussian clustering rates \cite{azizyanMinimaxTheoryHighdimensional2013}]
    \label{thm:azizyan}
For $n \geq \max \{68, 4d\}$, the minimax optimal accuracy for the estimator of a two-component isotropic Gaussian mixture model with separation $\gamma$ is $$\inf_{F \in \cF_n} \sup_{\theta \in \Psi_\gamma} \EEs{\bset{x_1, \dots, x_n} \sim \cD_\theta^n}{\Pr_{x \sim \cD_\theta} \big[ F(x_1, \dots, x_n)(x) \neq \argmax_{i \in \bset{1, 2}} f(g = i \mid x) \big]}
\in \tilde \Theta\para{\frac{1}{\gamma^2} \sqrt{\frac dn}}$$ where $\tilde \Theta$ suppresses logarithmic factors, $\cD_{\mu_1, \mu_2}$ denotes the uniform mixture of $\cN(\mu_1, I_d)$ and $\cN(\mu_2, I_d)$, and $\cD_{\mu_1, \mu_2}^n$ denotes $n$ i.i.d. samples from $\cD_{\mu_1, \mu_2}$.
\end{theorem}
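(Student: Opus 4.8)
The plan is to prove matching upper and lower bounds. Since the components are isotropic and equally weighted, the Bayes clustering rule $x\mapsto\argmax_i f(i\mid x)$ is exactly the halfspace test $x\mapsto\1[\langle x-\tfrac{\mu_1+\mu_2}{2},\,\mu_1-\mu_2\rangle>0]$, so ``learning the cluster assignment'' reduces to learning one separating hyperplane, and the quantity in the theorem is just the $\cD_\theta$-mass of the symmetric difference (the ``wedge'') between the learned halfspace and the true one. The workhorse throughout is a geometric lemma I would establish first: the $\cD_\theta$-mass of the wedge between two halfspaces whose normals make angle $\alpha$ and whose offsets differ by $\beta$ is $\Theta((\alpha+\beta)\rho)$, where $\rho$ is the $\cD_\theta$-density along the Bayes boundary; since a one-dimensional projection of $x$ is $O(1)$ and $\rho=\Theta(e^{-\|\mu_1-\mu_2\|^2/8})$, this is $\Theta(\alpha+\beta)$ in the separation regime of interest.

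\textbf{Upper bound.} I would use the natural spectral estimator: $\hat m=\tfrac1n\sum_t x_t$, let $\hat v$ be a leading eigenvector of the centered empirical second moment $\widehat\Sigma$, and output $x\mapsto\1[\langle x-\hat m,\hat v\rangle>0]$ (the sign of $\hat v$, i.e.\ the label permutation, is irrelevant for the excess-error notion). Since $\Exp[x]=\tfrac12(\mu_1+\mu_2)$ and $\Sigma:=\operatorname{Cov}(x)=I_d+\tfrac14(\mu_1-\mu_2)(\mu_1-\mu_2)^\top$ has an eigengap $\Theta(\|\mu_1-\mu_2\|^2)\ge\Theta(\gamma^2)$ at the direction $(\mu_1-\mu_2)/\|\mu_1-\mu_2\|$, the steps are: (i) Gaussian mean/covariance concentration---the hypothesis $n\ge4d$ placing us in the operator-norm regime---gives $\|\hat m-\Exp x\|,\ \|\widehat\Sigma-\Sigma\|_{\mathrm{op}}=O(\sqrt{d/n}+\sqrt{\log(1/\delta)/n})$ with probability $1-\delta$; (ii) Davis--Kahan bounds the angle of $\hat v$ against the true direction by $O(\gamma^{-2}\sqrt{d/n})$; (iii) the geometric lemma with $\alpha=O(\gamma^{-2}\sqrt{d/n})$ and the lower-order $\beta=O(\sqrt{d/n})$ gives the high-probability bound, and integrating the failure probability over $\delta$ yields $\tilde O(\gamma^{-2}\sqrt{d/n})$ in expectation. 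For separations exceeding $\gamma$ both the eigengap and $\rho$ only improve, so the bound holds uniformly on $\Psi_\gamma$.

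\textbf{Lower bound.} I would run Fano's method over hypotheses $\mu_1=-\mu_2=\tfrac\gamma2 v_\sigma$ with $v_\sigma=\cos\psi\,e_1+\sin\psi\,u_\sigma$ and $\{u_\sigma\}\subset e_1^\perp$ a constant-radius packing of $S^{d-2}$, so $M=2^{\Omega(d)}$ and distinct hypotheses have direction-angle $\Theta(\psi)$. Two ingredients are needed. First, by the geometric lemma the excess error between $\mathrm{Bayes}_{\theta_\sigma}$ and $\mathrm{Bayes}_{\theta_{\sigma'}}$ is $\Theta(\psi)$, so distinct hypotheses are $\Omega(\psi)$-separated in the loss. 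Second---and this is the delicate step---I must bound $\mathrm{KL}(\cD_{\theta_\sigma}\,\|\,\cD_{\theta_{\sigma'}})$ \emph{sharply}: the naive convexity bound $\mathrm{KL}\le\tfrac12\sum_i\mathrm{KL}(\text{components})=O(\gamma^2\psi^2)$ is too lossy and would only give $\Omega(\gamma^{-1}\sqrt{d/n})$. Instead, expanding the mixture log-likelihood in the direction perturbation shows the first-order score along $w\perp v$ is $\tfrac\gamma2\langle x,w\rangle\tanh(\tfrac\gamma2\langle x,v\rangle)$, whose variance is $\Theta(\gamma^4)$---the two components cancel to first order---so the mixture Fisher information per perturbation direction is $\Theta(\gamma^4)$, and a direct $\chi^2$- (or second-order KL) computation gives $\mathrm{KL}(\cD_{\theta_\sigma}\,\|\,\cD_{\theta_{\sigma'}})=O(\gamma^4\psi^2)$. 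Fano then requires $n\cdot O(\gamma^4\psi^2)\lesssim\log M=\Omega(d)$, which allows $\psi=\Theta(\gamma^{-2}\sqrt{d/n})$, yielding minimax risk $\Omega(\gamma^{-2}\sqrt{d/n})$. One technicality: the loss is measured under the data distribution itself, so I would either invoke the generalized Fano lemma for non-metric losses or note that all $\cD_{\theta_\sigma}$ lie within $O(\gamma^2\psi)=o(\psi)$ in total variation, making the relevant probabilities interchangeable.

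\textbf{Main obstacle.} The crux is the sharp divergence estimate in the lower bound: recognizing that perturbing the \emph{direction} of $\mu_1-\mu_2$ is effectively a second-order perturbation of the mixture---the components' first-order contributions cancel, so the mixture Fisher information carries $\gamma^4$ rather than $\gamma^2$---which is exactly what lifts the exponent on $1/\gamma$ from $1$ to $2$ and makes the lower bound match the spectral upper bound. The supporting geometric lemma is the other place needing care: converting an angular perturbation of the separating hyperplane into wedge mass with the correct Gaussian-density dependence at the Bayes boundary, while staying invariant to the label-permutation ambiguity inherent in unsupervised clustering.
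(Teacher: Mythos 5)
This is not a statement the paper proves. It is cited as Theorem~2 of \citet{azizyanMinimaxTheoryHighdimensional2013} and used as a black box (in the proof of Proposition~\ref{prop:dce-etc} and as the basis for Remark~\ref{remark:etc-inherit}); the paper contains no proof of it, so there is nothing to compare your proposal against within the paper itself.

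That said, your reconstruction appears essentially sound, and it captures the two points that actually make the result nontrivial. First, the geometric reduction: the Bayes rule is a halfspace through the midpoint, the loss is wedge mass under $\cD_\theta$, and the wedge between halfspaces at angle $\alpha$ through the midpoint has $\cD_\theta$-mass $\Theta(\alpha\,e^{-\gamma^2/8})$ — note this gives $\Theta(\alpha)$ only when $\gamma = O(1)$, which is implicitly the regime of the stated $\tilde\Theta(\gamma^{-2}\sqrt{d/n})$ result (for large $\gamma$ the matching lower bound would acquire the exponential factor). Second, the sharp divergence bound in the Fano step: perturbing the direction of $\mu_1-\mu_2$ gives a score of the form $\tfrac\gamma2\langle x,w\rangle\tanh(\tfrac\gamma2\langle x,v\rangle)$, whose variance is $\Theta(\gamma^4)$ because the $\tanh$ factor is itself $\Theta(\gamma)$ in the relevant regime, so $\mathrm{KL} = O(\gamma^4\psi^2)$ rather than the $O(\gamma^2\psi^2)$ that joint-convexity gives on the components. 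This is exactly what pushes the exponent on $1/\gamma$ from $1$ to $2$ and matches the Davis--Kahan/eigengap dependence of the spectral upper bound, which relies on $\mathrm{Cov}(x) = I_d + \tfrac14(\mu_1-\mu_2)(\mu_1-\mu_2)^\top$ having eigengap $\gamma^2/4$. Your caveat about the loss being evaluated under the data distribution itself is a real but routine technicality, resolvable as you suggest. If you want to write this up fully, the places requiring genuine care are (i) turning the Fisher-information heuristic into a rigorous $\chi^2$ or second-order KL bound with error control uniform over the packing, (ii) handling the label-permutation ambiguity cleanly in both directions of the argument, and (iii) stating explicitly the regime $\gamma = O(1)$ in which the rate is matched, since the $\sup$ over $\Psi_\gamma$ in the theorem is attained near minimal separation.
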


\begin{proof}[Proof of Proposition \ref{prop:dce-etc}]
    We instantiate the cluster-then-predict algorithm with the Gaussian mixture model estimator of \cite{azizyanMinimaxTheoryHighdimensional2013} for the first phase. For the second phase, we use the multicalibration algorithm of Algorithm~\ref{alg:online-multicalibration}; we will instantiate Algorithm~\ref{alg:online-multicalibration} with the trivial distinguisher set $\cG_1 = \{ x\rightarrow 1\}$ because we only need calibration with marginal guarantees within each bucket. 
    By Theorem~\ref{thm:azizyan}, the expected clustering error attained by the \cite{azizyanMinimaxTheoryHighdimensional2013} estimator learned with $T'$ samples is $O\big(\tfrac{1}{\gamma^2} \sqrt{\tfrac d {T'}}\big)$.
    Let the resulting cluster assignment function be denoted $F$.
    Using Hoeffding's inequality, this implies that with probability at least $1 - \delta$,
\begin{align}
        \sum_{t=T'}^T \1\Big[F(x_t) \neq \argmax_{j \in [k]} f(j \mid x_t, y_t)\Big]
        & \leq \E\left[ \sum_{t=T'}^T \1\Big[F(x_t) \neq \argmax_{j \in [k]} f(j \mid x_t, y_t)\Big] \right] + 
        \sqrt{(T - T')\log(1/\delta)} \notag\\ 
        & \leq O\big(\tfrac{1}{\gamma^2} \sqrt{\tfrac d {T'}} (T - T') + \sqrt{(T - T')\log(1/\delta)} \big).
        \label{eq:p1e1}
    \end{align}

With a slight abuse of notation, let us define the quantity $\Dcalerr_F(\hat y_{T':T}, x_{T':T}, y_{T':T})$ as the \dce that \textit{would have} been incurred had $F: \cX \to \bset{1, 2}$ been the true discriminant with respect to $f$, that is,\footnote{Note that the in the usual definition of $\Dcalerr$, the only information needed about the \genmodel $f$ is the corresponding discriminant function $\argmax_j f(j | x, y)$, and $f(j | x, y)$ is independent of $y$ given $x$.}
    \[
    \Dcalerr_F(\hat y_{T':T}, x_{T':T}, y_{T':T}) \coloneqq \max_{g \in [k]} \max_{ {v \in V_\lambda}}
\left|  \sum_{t=1}^T \1\Big[g=F(x_t)\Big] \cdot {\1\left[\hat y_t {\in} v\right]} \cdot (\hat y_t - y_t) \right|.
    \]

    By Theorem~\ref{theorem:online-multicalibration-guarantee}, the calibration error on timesteps $t > T'$ where cluster 1 is predicted, i.e. $F(x_t) = 1$, is bounded with probability at least $1 - \delta$ by $O(\sqrt{T_1 \log(1/\delta)})$ where $T_1 = \sum_{\tau=T'+1}^T \1[F(x_t) = 1]$ is the number of timesteps where cluster 1 is predicted.
    With a similar bound holding for cluster 2, by union bound, we have that \begin{align}
        \label{eq:p1e2}
        \Dcalerr_{F}(\hat y_{T':T}, x_{T':T}, y_{T':T}) \in O(\sqrt{(T - T') \log(1/\delta)}).
    \end{align}
    By triangle inequality and an additional union bound, combining \eqref{eq:p1e1} and \eqref{eq:p1e2} gives
    \begin{align*}
        \Dcalerr_f(\hat y_{1:T}, x_{1:T}, y_{1:T}) \leq  O\left(T' + \tfrac{1}{\gamma^2} \sqrt{\tfrac d {T'}} (T - T') + \sqrt{(T - T') \log(1/\delta)}\right).
    \end{align*}
    Choosing $T' = \Theta(d^{1/3} T^{2/3} \gamma^{-4/3})$ gives the desired upper bound of
    \begin{align*}
        \Dcalerr_f(\hat y_{1:T}, x_{1:T}, y_{1:T}) \leq  O\left(d^{1/3} T^{2/3} \gamma^{-4/3} + \sqrt{T \log(1/\delta)}\right).
    \end{align*}
\end{proof}

\subsection{Proposition~\ref{prop:etc-lce}}
\label{subsec:appendix-lce-online}
In this section, we prove a generalization of Proposition~\ref{prop:etc-lce}: Theorem~\ref{thm:etc-lce}.

\newcommand{\etclce}{\emph{Cluster-Then-Predict Algorithm for Minimizing $\Lcalerr$}}
\begin{framed}
    \centerline{\etclce}
    \noindent
    For the first $T' < T$ timesteps, make arbitrary predictions and collect observed features $x_1, \dots, x_{T'}$. Apply a parameter-learning algorithm, such as the \cite{hardt2015tight} method, to the observed features to obtain estimates of the per-component likelihoods $\hat f (x \mid g)$ for each $g \in [k]$. 

    Then, instantiate Algorithm \ref{alg:online-multicalibration} with distinguishers $\cG = \bset{x \mapsto \hat f(g \mid x) \mid g \in [k]}$. 
    For each timestep $t = T'+1, \dots, T$, observe $x_t$ and predict $y_t$ by applying Algorithm \ref{alg:online-multicalibration} to the transcript of previously seen datapoints $\bset{(x_\tau, y_\tau) \mid T' < \tau < t}$. 
\end{framed}

While the algorithm is written for general $k$ and $f$, we focus on the case where $k=2$ and $w_1 = w_2 = 1/2$. 
\begin{theorem}
\label{thm:etc-lce}
Let $k = 2$ and $w_1 = w_2 = 1/2$. Define $\sigma = \|\mu_1-\mu_2\|_\infty^2 + \|\Sigma_1\|_\infty + \|\Sigma_2\|_\infty.$
If we have that $\min_{j \in [d]} |\mu_{1, j} - \mu_{2, j}| \geq \Omega(\sigma)$, then, 
with probability $1-\delta$,
the \etclce, setting $T' = O(T^{2/3})$, incurs \[\Lcalerr_f(p_{1:T}, x_{1:T}, y_{1:T}) \leq \tilde O\left(T^{2/3}\sqrt{d}\log^{1/2}(d/\delta)\right).\] 
On the other hand, without separation, we must set $T' = O(T^{12/13})$ and incur
\[
\Lcalerr_f(p_{1:T}, x_{1:T}, y_{1:T}) \leq \tilde O(T^{12/13}\sqrt{d}\log^{1/2}(d/\delta)). 
\]
\end{theorem}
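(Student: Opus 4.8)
The plan is to split the likelihood calibration error into three pieces and control each: (i) the $T'$ burn-in rounds, which contribute at most $T'$ since every summand $f(g\mid x_t)\,\1[\hat y_t\in v](\hat y_t-y_t)$ lies in $[-1,1]$; (ii) a \emph{distinguisher mismatch} term arising because Algorithm~\ref{alg:online-multicalibration} is run with the estimated posteriors $\hat f(g\mid x)$ instead of the true posteriors $f(g\mid x)$; and (iii) the multicalibration error of Algorithm~\ref{alg:online-multicalibration} itself on the class $\cG=\{x\mapsto\hat f(g\mid x): g\in[k]\}$. Since $|\cG|=k=2$ and there are $\lambda$ buckets, the online multicalibration guarantee (\citet{haghtalab2024unifying}, Theorem~4.3; see Theorem~\ref{theorem:online-multicalibration-guarantee}) bounds (iii) by $O(\sqrt{(T-T')\log(k\lambda/\delta)})$ with probability $1-\delta$. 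For (ii), the triangle inequality gives, for every $g\in[k]$ and $v\in V_\lambda$,
\[
\Big|\sum_{t>T'}\big(f(g\mid x_t)-\hat f(g\mid x_t)\big)\,\1[\hat y_t\in v](\hat y_t-y_t)\Big|\le\sum_{t>T'}\big|f(g\mid x_t)-\hat f(g\mid x_t)\big|,
\]
and, since the summands on the right are i.i.d.\ in $[0,1]$ conditioned on phase~1, Hoeffding's inequality replaces this sum by $(T-T')\,\mathbb{E}_x|f(g\mid x)-\hat f(g\mid x)|+O(\sqrt{T\log(1/\delta)})$ with high probability (union bounding over the two components). So the whole argument reduces to bounding the expected $L_1$ posterior error $\mathbb{E}_x|f(g\mid x)-\hat f(g\mid x)|$ as a function of the phase-1 budget $T'$, then optimizing $T'$.

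\textbf{Bounding the posterior error.} I would first invoke a parameter-learning guarantee. Under the coordinate-wise separation hypothesis $\min_j|\mu_{1,j}-\mu_{2,j}|\ge\Omega(\sigma)$, projecting the $T'$ samples onto each coordinate axis yields a well-separated one-dimensional mixture of two Gaussians whose means and variances are estimable at the parametric rate; a union bound over the $d$ coordinates and two components gives estimates $\hat\mu_g,\hat\Sigma_g$ with error $\tilde O(\sqrt{\log(d/\delta)/T'})$ per coordinate, with probability $1-\delta$. Without separation one instead applies \citet{hardt2015tight}, whose $\epsilon^{-12}$ bound forces the parameter error to decay only as $\tilde O(T'^{-1/12})$. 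Next comes the sensitivity analysis: writing the posterior as a logistic function of the log-likelihood ratio,
\[
f(1\mid x)=\Big(1+\tfrac{w_2\,f(x\mid 2)}{w_1\,f(x\mid 1)}\Big)^{-1},
\]
and noting that for Gaussians $\log\tfrac{f(x\mid 1)}{f(x\mid 2)}$ is an explicit quadratic form in $x$, the $1$-Lipschitzness of the logistic map bounds $|f(1\mid x)-\hat f(1\mid x)|$ by the perturbation of this quadratic form induced by the parameter error. Taking expectations over $x$ (using $\mathbb{E}\|x\|^2$ and $\mathbb{E}\|x\|$, which bring in the $\sigma$-dependent scale and a $\sqrt d$ factor) yields $\mathbb{E}_x|f(g\mid x)-\hat f(g\mid x)|\le\tilde O\!\big(\sqrt d\,\log^{1/2}(d/\delta)\big)\cdot T'^{-1/2}$ with separation and $\tilde O\!\big(\sqrt d\,\log^{1/2}(d/\delta)\big)\cdot T'^{-1/12}$ without.

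\textbf{Assembling and balancing.} Combining (i)--(iii), with probability $1-\delta$,
\[
\Lcalerr_f(p_{1:T},x_{1:T},y_{1:T})\le O\Big(T'+T\cdot\mathbb{E}_x|f(g\mid x)-\hat f(g\mid x)|+\sqrt{T\log(k\lambda/\delta)}\Big).
\]
Balancing the first two terms with the separated rate gives $T'=\Theta(T^{2/3})$ and a bound $\tilde O(T^{2/3}\sqrt d\,\log^{1/2}(d/\delta))$; with the unseparated rate it gives $T'=\Theta(T^{12/13})$ and $\tilde O(T^{12/13}\sqrt d\,\log^{1/2}(d/\delta))$, exactly the two claims, and Proposition~\ref{prop:etc-lce} is the special case treating $d$ and $\sigma$ as constants.

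\textbf{Main obstacle.} The delicate step is the sensitivity bound — turning parameter-estimation error into a bound on $\mathbb{E}_x|f(g\mid\cdot)-\hat f(g\mid\cdot)|$. Because the posterior depends on the parameters through a ratio of Gaussian densities, it can be wildly sensitive when the components overlap, so this is precisely where the coordinate-wise separation $\min_j|\mu_{1,j}-\mu_{2,j}|\ge\Omega(\sigma)$ and the scale $\sigma$ must be used to keep the log-likelihood-ratio map well-conditioned, and where the $\sqrt d$ and $\log^{1/2}(d/\delta)$ overheads enter. The remaining ingredients — the three-way decomposition, the Hoeffding step, the multicalibration guarantee, and the final $T'$-optimization — are routine.
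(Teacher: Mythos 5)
Your three-way decomposition is correct and, to my eye, cleaner than the paper's. The paper instead goes through Lemma~\ref{lem:realized}, which bounds the realized $\Lcalerr_f$ by $\E_{\hat f}[\Lcalerr_{\hat f}]$ plus error terms — swapping \emph{both} the distinguisher and the underlying data distribution, and then needing an expectation version of the online multicalibration guarantee (obtained by integrating Theorem~\ref{theorem:online-multicalibration-guarantee} over $\delta$). Your argument avoids the distribution swap entirely: you keep the realized sums under the true $f$, swap only the distinguisher via a term-wise triangle inequality, apply Hoeffding to the i.i.d.\ residuals $|f(g\mid x_t) - \hat f(g\mid x_t)|$, and invoke the high-probability $\Mcalerr$ bound directly on the fixed class $\{\hat f(g\mid\cdot)\}$. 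Since that guarantee already holds for any realized sequence of Nature's moves, there is no need to reason about which distribution the $x_t$'s came from, so your route is strictly more direct.

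Where you genuinely depart — and where you are less precise than the paper — is the conversion from parameter-estimation error to posterior error $\E_x|f(g\mid x) - \hat f(g\mid x)|$. The paper's route is \emph{parameter error} $\to$ \emph{per-component TV distance} (Fact~\ref{fact:param-tv}, citing the Arbas et al.\ bound relating TV between Gaussians to parameter distance and paying $O(\sqrt d)$ for the $\ell_\infty\to\ell_2$ translation) $\to$ \emph{posterior $L_1$ error} (Fact~\ref{fact:eps-est}, a distribution-free conditional-probability calculation giving $\E_x|f(g\mid x)-\hat f(g\mid x)| \le 3\,\TV$). You instead propose bounding the log-odds perturbation pointwise via $1$-Lipschitzness of the logistic function and then integrating against the mixture density. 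That is a plausible alternative, but the bookkeeping is more delicate than you let on: the dominant perturbation is the quadratic term $(x-\mu)^\top(\Sigma^{-1}-\hat\Sigma^{-1})(x-\mu)$, whose expectation under a Gaussian scales like $\mathrm{tr}((\Sigma^{-1}-\hat\Sigma^{-1})\Sigma)$, and extracting the same $\sqrt d$ (rather than $d$) dependence requires controlling $\|\Sigma^{-1}-\hat\Sigma^{-1}\|$ carefully from the $\ell_\infty$ parameter bound that Hardt–Price gives. The TV route sidesteps all of this by outsourcing the Gaussian-specific integration to a cited result. Your coordinate-wise projection argument for parameter learning under separation also differs from the paper, which simply invokes the $\tilde O(\eps^{-2})$ rate from \citet{hardt2015tight} for the separated regime; either gives the parametric rate, so this is a stylistic difference. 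In summary: the skeleton of your proof is correct and arguably improves on the paper's, but the sensitivity step — which you yourself flag as the delicate one — is not fully closed, whereas the paper discharges it with Facts~\ref{fact:param-tv} and~\ref{fact:eps-est}.
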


To prove Theorem \ref{thm:etc-lce}, we will need some facts relating parameter estimation to errors 
in estimating group membership are bounded by $O(\eps).$ 
\begin{fact}
\label{fact:param-tv}
    When parameters $(\mu_i, \Sigma_i)$ of mixture component $i$ are $\eps, \delta$-learned in the sense of \cite{hardt2015tight}, we have that with probability $1-\delta$, $\TV\left(f(x \mid g_i), \hat f(x \mid g_i) \right) \leq O(\eps\sqrt{d})$. 
\end{fact}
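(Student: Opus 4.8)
The plan is to route through the closed-form Gaussian KL divergence and finish with Pinsker's inequality; the one subtlety is that a covariance perturbation enters the KL only \emph{quadratically}, so after the square root from Pinsker the bound is linear in $\eps$ and carries only a $\sqrt d$ dimension factor rather than $\sqrt{d\eps}$. First I would unpack ``$\eps,\delta$-learned in the sense of \citet{hardt2015tight}'' as: with probability at least $1-\delta$, the recovered parameters of component $i$ satisfy $\|\hat\mu_i-\mu_i\|_2\leq\eps$ and $\|\hat\Sigma_i-\Sigma_i\|_{\mathrm{op}}\leq\eps$, and---under the normalization to isotropic position in which that guarantee is stated---each $\Sigma_i$ has all eigenvalues in $[c,1]$ for an absolute constant $c>0$. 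Consequently $\|\Sigma_i^{-1}\|_{\mathrm{op}}\leq 1/c$, and as long as $\eps\leq c/2$ also $\|\hat\Sigma_i^{-1}\|_{\mathrm{op}}\leq 2/c$. I would condition on this event, fix a component, and drop the subscript $i$; if $\eps>c/2$ the claimed bound is trivial since $\TV\leq 1\leq O(\eps\sqrt d)$.

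Next I would expand the Gaussian KL. Writing $E:=\Sigma-\hat\Sigma$ (so $\|E\|_{\mathrm{op}}\leq\eps$ and $\Sigma=\hat\Sigma+E$), the closed form rearranges to
\[
2\,\mathrm{KL}\bigl(\cN(\mu,\Sigma)\,\|\,\cN(\hat\mu,\hat\Sigma)\bigr)=(\hat\mu-\mu)^\top\hat\Sigma^{-1}(\hat\mu-\mu)+\mathrm{tr}(\hat\Sigma^{-1}E)-\ln\det(I+\hat\Sigma^{-1}E).
\]
The mean term is at most $\|\hat\Sigma^{-1}\|_{\mathrm{op}}\,\eps^2\leq(2/c)\eps^2$. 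For the covariance terms, expanding $\ln\det(I+A)=\mathrm{tr}(A)-\tfrac12\mathrm{tr}(A^2)+\tfrac13\mathrm{tr}(A^3)-\cdots$ with $A:=\hat\Sigma^{-1}E$, the first-order term cancels and what remains is $\tfrac12\|\hat\Sigma^{-1/2}E\hat\Sigma^{-1/2}\|_F^2$ plus a cubic-and-higher remainder. Since $\hat\Sigma^{-1/2}E\hat\Sigma^{-1/2}$ is symmetric with operator norm $\leq(2/c)\eps<1$ and has the same (real) eigenvalues as $A$, its squared Frobenius norm is at most $d(2\eps/c)^2$ and the remainder is $O(d\eps^3)$; hence $\mathrm{KL}\leq O(d\eps^2)$.

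Pinsker's inequality then yields $\TV(\cN(\mu,\Sigma),\cN(\hat\mu,\hat\Sigma))\leq\sqrt{\tfrac12\,\mathrm{KL}}\leq O(\sqrt d\,\eps)$, which is exactly the statement of Fact~\ref{fact:param-tv} once the subscript $i$ is restored and the failure probability $\delta$ is accounted for. An alternative to the explicit KL computation is to invoke a known sharp upper bound on the total variation distance between two Gaussians in terms of the mean gap $\|\Sigma^{-1/2}(\mu-\hat\mu)\|_2$ and the covariance gap $\|\Sigma^{-1/2}\hat\Sigma\Sigma^{-1/2}-I\|_F$, and then bound each using $\|E\|_{\mathrm{op}}\leq\eps$ together with the conditioning of $\Sigma$; this gives the same $O(\sqrt d\,\eps)$.

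The main obstacle is precisely the cancellation of the linear terms in $\mathrm{tr}(\hat\Sigma^{-1}E)-\ln\det(I+\hat\Sigma^{-1}E)$: this is what turns the naive estimate $O(\sqrt{d\eps})$ into the claimed $O(\sqrt d\,\eps)$, and it must be paired with a genuine control of the cubic-and-higher tail of the $\ln\det$ series, which requires $\eps$ to be below a constant multiple of $c$. A secondary, more bookkeeping-level point is to pin down exactly which norm the \citet{hardt2015tight} guarantee controls for the covariance estimate (we take operator norm); were it Frobenius norm, the $\sqrt d$ would disappear, so the $\sqrt d$ in the statement is exactly the price of operator-norm parameter accuracy.
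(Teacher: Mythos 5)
Your derivation is mathematically sound, but it takes a genuinely different route from the paper and starts from a different reading of the Hardt--Price guarantee. The paper's proof does no KL computation at all: it quotes the $\eps,\delta$-learning guarantee of \cite{hardt2015tight} in the form $\max\left(\|\mu_i-\hat\mu_i\|_\infty^2,\ \|\Sigma_i-\hat\Sigma_i\|_\infty\right)\leq \eps^2\left(\|\mu_1-\mu_2\|_\infty^2+\|\Sigma_1\|_\infty+\|\Sigma_2\|_\infty\right)$ --- an entrywise/$\ell_\infty$ guarantee in which the covariance error is of order $\eps^2$, not $\eps$ --- and then invokes Theorem~1.8 of \cite{arbas2023polynomial}, which gives $\TV(\cN(\mu,\Sigma),\cN(\hat\mu,\hat\Sigma))=\Theta(\Delta)$ for $\Delta=\max\left(\|\Sigma^{-1/2}\hat\Sigma\Sigma^{-1/2}-I_d\|_F,\ \|\Sigma^{-1/2}(\mu-\hat\mu)\|_2\right)$, paying the $O(\sqrt d)$ purely for converting $\ell_\infty$ parameter error into the $\ell_2$/Frobenius quantities appearing in $\Delta$. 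This is exactly the ``alternative'' you mention in passing at the end of your second-to-last paragraph. Your main route (closed-form KL, cancellation of the linear trace term against the $\ln\det$ expansion, control of the cubic tail, then Pinsker) is more self-contained and avoids the external TV-vs-parameter-distance theorem, at the cost of explicitly needing the conditioning assumption on $\Sigma_i$. The one substantive discrepancy is your premise about what is being guaranteed: under the paper's reading the mean is learned to accuracy $\eps$ \emph{per coordinate}, so $\|\mu-\hat\mu\|_2$ is already $O(\eps\sqrt d)$ and the $\sqrt d$ in the Fact originates in the mean term, while the covariance error is $O(\eps^2)$ and contributes at lower order; under your reading ($\ell_2$ mean error $\eps$, operator-norm covariance error $\eps$) the $\sqrt d$ comes entirely from the Frobenius-versus-operator gap on the covariance. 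Consequently your closing remark --- that a Frobenius-norm covariance guarantee would make the $\sqrt d$ disappear --- does not hold under the paper's reading of \cite{hardt2015tight}: the dimension factor would survive through the mean term. This does not invalidate your TV bound, but it misattributes where the dimension dependence comes from in the statement being proved.
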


\begin{proof}[Proof of Fact \ref{fact:param-tv}]
    To see this, note that $\eps,\delta$-learning in \cite{hardt2015tight} is defined in terms of $\ell_\infty$-norm on the estimated parameters relative to the variance of the mixture, i.e.  
\[\max_{i = \bset{1,2}}\max\left( \|\mu_i - \hat \mu_i\|_\infty^2, \|\Sigma_i - \hat\Sigma_i\|_\infty\right) \leq \eps^2 \left(\|\mu_1-\mu_2\|_\infty^2 + \|\Sigma_1\|_\infty + \|\Sigma_2\|_\infty\right). \]
Theorem 1.8 of \cite{arbas2023polynomial} shows that $\TV\left(\cN(\mu, \Sigma), \cN(\hat\mu, \hat\Sigma)\right) = \Theta(\Delta)$ where $\Delta$ is parameter distance in $\ell_2$; specifically, 
\[
\Delta = \max\left(\|\Sigma^{-1/2}\hat\Sigma\Sigma^{-1/2} - I_d\|_F, \|\Sigma^{-1/2} (\mu-\hat\mu)\|_2 \right).
\]
Translating between the $\ell_2$ and $\ell_\infty$ norms incurs a $O(\sqrt{d})$ penalty. 
\end{proof}

\begin{fact}
\label{fact:eps-est}
Suppose that for each component $g_i$, we have estimates of the density of $x$ conditioned on membership in $g_i$, i.e. $\hat f(x \mid g_i)$, such that  $\TV\left(f(x \mid g_i), \hat f(x \mid g_i)\right) \leq \eps$. Then, mistakes in estimating group membership can be bounded as
$\E_{x \sim f}\left[\left|f(g_i \mid x) - \hat f(g_i \mid x)\right|\right] \leq 3\eps$, and the overall TV distance between the true mixture and the estimated mixture can be bounded as $\TV(f(x), \hat f(x)) \leq \eps$. 
\end{fact}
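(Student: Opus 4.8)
The plan is to prove the two claims in order, since the marginal total-variation bound feeds into the posterior bound. For the mixture claim, expand both marginals with the \emph{same} mixing weights, $f(x) = \sum_j w_j f(x\mid g_j)$ and $\hat f(x) = \sum_j w_j \hat f(x\mid g_j)$ (only the component densities are estimated, not the weights), and use the triangle inequality / convexity of total variation:
\[
\TV(f,\hat f) = \tfrac12 \int \Big|\, \sum_j w_j\big(f(x\mid g_j)-\hat f(x\mid g_j)\big)\,\Big|\,dx \;\le\; \sum_j w_j\,\TV\big(f(\cdot\mid g_j),\hat f(\cdot\mid g_j)\big) \;\le\; \sum_j w_j\,\eps = \eps.
\]

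For the posterior claim, the key step is to clear the denominator of $f(g_i\mid x)$ by absorbing it into the expectation against $f$. By Bayes' rule $f(x)\,f(g_i\mid x) = w_i f(x\mid g_i)$ and $\hat f(x)\,\hat f(g_i\mid x) = w_i\hat f(x\mid g_i)$, so, adding and subtracting $w_i\hat f(x\mid g_i)$,
\[
f(x)\big[f(g_i\mid x)-\hat f(g_i\mid x)\big] \;=\; w_i\big[f(x\mid g_i)-\hat f(x\mid g_i)\big] \;+\; \hat f(g_i\mid x)\big[\hat f(x)-f(x)\big].
\]
Taking absolute values, integrating, and applying the triangle inequality, $\E_{x\sim f}\big[|f(g_i\mid x)-\hat f(g_i\mid x)|\big]$ is at most $\int w_i|f(x\mid g_i)-\hat f(x\mid g_i)|\,dx + \int \hat f(g_i\mid x)|\hat f(x)-f(x)|\,dx$. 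The first integral equals $2w_i\,\TV(f(\cdot\mid g_i),\hat f(\cdot\mid g_i)) \le 2w_i\eps$, which is $\eps$ in the equal-weight two-component regime $w_i=1/2$ that the surrounding analysis restricts to; the second is at most $\int|\hat f(x)-f(x)|\,dx = 2\,\TV(f,\hat f) \le 2\eps$ using $\hat f(g_i\mid x)\in[0,1]$ and the mixture claim. Summing gives the stated $3\eps$ bound.

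I expect no serious obstacle here: the only thing requiring care is that a naive pointwise bound on $|f(g_i\mid x)-\hat f(g_i\mid x)|$ is useless because of the potentially unbounded ratio $f(x)/\hat f(x)$, and the remedy is exactly the two structural facts above — integrating against the true marginal $f$ cancels one denominator while boundedness of the estimated posterior $\hat f(g_i\mid x)$ controls the other. (For general $k$ and arbitrary weights the same computation yields $(2w_i+2)\eps$; the $3\eps$ figure is the $w_i=1/2$ specialization.)
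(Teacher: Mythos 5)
Your proof is correct and is essentially the same argument as the paper's: the decomposition you write, $f(x)\big[f(g_i\mid x)-\hat f(g_i\mid x)\big] = w_i\big[f(x\mid g_i)-\hat f(x\mid g_i)\big] + \hat f(g_i\mid x)\big[\hat f(x)-f(x)\big]$, is algebraically identical to the paper's splitting into $f(x,g_i)-\hat f(x,g_i)$ and $\bigl(\tfrac{f(x)}{\hat f(x)}-1\bigr)\hat f(x,g_i)$, and you bound the two terms the same way (by $w_i\cdot 2\TV$ and by $\int|f-\hat f|$ using $\hat f(g_i\mid x)\le 1$). You present the TV-of-mixtures bound as an explicit first step and note the general-$k$ form $(2w_i+2)\eps$, which the paper leaves implicit, but the substance is identical.
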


\begin{proof}[Proof of Fact \ref{fact:eps-est}]
By the definition of conditional probability, we have $f(g_i \mid x) = \frac{f(x,g_i)}{f(x)}$ for every $g_i$. Then, we can write
\begin{align*}
\E\left[\left|f(g_i \mid x) - \hat f(g_i \mid x)\right|\right] &=
    \int \left|f(g_i \mid x) - \hat f(g_i \mid x) \right| f(x) dx 
    \\&= \int \left| f(x, g_i) - \hat f(x,g_i) - \left(\frac{f(x)}{\hat f(x)} - 1 \right)\hat f(x,g_i)\right| dx
    \\&\leq \underbrace{\int \left|f(x,g_i)- \hat f(x,g_i)\right| dx }_{(A)} + \underbrace{\int \left|\left(\frac{f(x)}{\hat f(x)} - 1 \right)\hat f(x,g_i)\right| dx}_{(B)}.
\end{align*}
Again by the definition of conditional probability, $f(x, g_i) = f(x \mid g_i) \cdot f(g_i)$, and likewise for the estimated quantity. Recall that the marginal likelihood of $g_i$ (i.e. its mixing weight) is $f(g_i) = \frac{1}{2}$. Then, $(A)$ reduces to $2 \cdot \frac{1}{2}\cdot \TV\left(f(x \mid g_i), \hat f(x \mid g_i)\right) \leq \eps$. 
For $(B)$, noting that $\hat f(g_i\mid x) \leq 1$ for all $x$, we have 
\begin{align*}
\int \left|\left(\frac{f(x)}{\hat f(x)} - 1 \right)\hat f(x,g_i)\right| dx 
& = \int \left|f(x) - \hat f(x)\right| \frac{\hat f(x, g_i)}{\hat f(x)} dx \\
& = \int \left|f(x) - \hat f(x)\right| \hat f(g_i \mid x) dx \\
& \leq \int \left|f(x)- \hat f(x)\right| dx.
\end{align*}
Recalling that 
$f(x) = \sum_{i \in [k]} f(x \mid g_i) \cdot f(g_i)$ and $k = 2$, we can bound the final quantity in the above display by $2\eps$.
\end{proof}

A direct consequence of Fact \ref{fact:eps-est} is that the $\Lcalerr$ incurred by a sequence of predictors $p_1, \dots, p_T$ on samples $(x, y)$ from the true distribution is close to the $\Lcalerr$ that would have been incurred had each $(x, y)$ been sampled from the estimated distribution. 

\begin{lemma}
\label{lem:realized}
Suppose that for each component $g_i$, we have estimates of the density of $x$ conditioned on membership in $g_i$, i.e. $\hat f(x \mid g_i)$, such that  $\TV\left(f(x \mid g_i), \hat f(x \mid g_i)\right) \leq \eps$.
Then, with probability $1-\delta$,
the $\Lcalerr$ incurred by a fixed sequence of predictors $p_1, \dots, p_T$ on datapoints $(x, y) \sim f$ can be bounded as 
\[
\Lcalerr_f(p_{1:T}, x_{1:T}, y_{1:T}) \leq \E_{\hat f}[\Lcalerr_{\hat f}(p_{1:T}, x_{1:T}, y_{1:T})] + O(\sqrt{T})\ln(1/\delta) + 5T\eps .
\]
\end{lemma}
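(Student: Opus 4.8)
The plan is to control the gap between $\Lcalerr_f$, evaluated on the realized samples $X_{1:T}^f, Y_{1:T}^f \simiid f$, and $\E_{\hat f}[\Lcalerr_{\hat f}]$ by splitting it into three contributions: the error of swapping the true membership weights $f(g\mid x_t)$ for the estimated ones $\hat f(g\mid x_t)$ on a fixed sample; the fluctuation of $\Lcalerr_{\hat f}$ around its $f$-mean; and the bias $\E_f[\Lcalerr_{\hat f}] - \E_{\hat f}[\Lcalerr_{\hat f}]$ caused by the mismatch between sampling distributions. Throughout I use that, by the structure of the \genmodel, $f(g\mid x,y)=f(g\mid x)$ and likewise $\hat f(g\mid x,y)=\hat f(g\mid x)$, and that $\hat f$ shares the label conditional with $f$.

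First I would do the membership swap. Since $|\1[\hat y_t\in v](\hat y_t - y_t)|\le 1$, the triangle inequality applied inside each sum defining $\Lcalerr$ gives $\Lcalerr_f(p_{1:T},X^f_{1:T},Y^f_{1:T}) \le \Lcalerr_{\hat f}(p_{1:T},X^f_{1:T},Y^f_{1:T}) + \max_g \sum_t |f(g\mid X^f_t) - \hat f(g\mid X^f_t)|$. By Fact~\ref{fact:eps-est}, each term of the last sum has $f$-expectation at most $3\eps$, so Hoeffding's inequality on these $[0,1]$-bounded i.i.d.\ summands, together with a union bound over the (constantly many) groups, shows that with probability $1-\delta/2$ this correction is at most $3T\eps + O(\sqrt{T\log(1/\delta)})$.

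Next I would concentrate $\Phi := \Lcalerr_{\hat f}(p_{1:T},\cdot,\cdot)$, viewed as a function of the i.i.d.\ sample. Because the predictors $p_1,\dots,p_T$ are fixed, changing a single coordinate $(x_{t_0},y_{t_0})$ perturbs only the $t_0$-th term --- which lies in $[-1,1]$ --- of each inner sum $\sum_t \hat f(g\mid x_t)\1[\hat y_t\in v](\hat y_t-y_t)$, hence perturbs each such sum by at most $2$ and therefore perturbs $\Phi$, a maximum of absolute values of these sums, by at most $2$. McDiarmid's inequality then gives $\Phi(X^f_{1:T},Y^f_{1:T}) \le \E_f[\Phi] + O(\sqrt{T\log(1/\delta)})$ with probability $1-\delta/2$.

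The crux is the distribution-mismatch bias: I must show $\E_f[\Phi] \le \E_{\hat f}[\Phi] + 2T\eps$. The naive estimate $|\E_f[\Phi]-\E_{\hat f}[\Phi]| \le \|\Phi\|_\infty\, \TV(f^{\otimes T},\hat f^{\otimes T}) \le T \cdot T\eps$ is useless, so instead I would couple the two sample streams. Since $f$ and $\hat f$ share the label conditional, $\TV(f(x,y),\hat f(x,y)) = \TV(f(x),\hat f(x)) \le \eps$ (Fact~\ref{fact:eps-est}); taking the $T$-fold product of the maximal coupling of $f(x,y)$ and $\hat f(x,y)$ produces jointly defined sequences $(X^f,Y^f)\sim f^{\otimes T}$ and $(X^{\hat f},Y^{\hat f})\sim \hat f^{\otimes T}$ that disagree on an expected number $M$ of coordinates with $\E[M]\le T\eps$. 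The same bounded-differences property used above gives $|\Phi(X^f,Y^f) - \Phi(X^{\hat f},Y^{\hat f})| \le 2M$ pointwise on the coupling, and taking expectations yields $\E_f[\Phi] \le \E_{\hat f}[\Phi] + 2\E[M] \le \E_{\hat f}[\Phi] + 2T\eps$. Chaining the three steps on the intersection of the two high-probability events (probability $\ge 1-\delta$) gives $\Lcalerr_f \le \E_{\hat f}[\Lcalerr_{\hat f}] + 5T\eps + O(\sqrt{T}\log(1/\delta))$, as claimed. I expect this last step to be the main obstacle: the trap is the useless $O(T^2\eps)$ estimate, and the fix is to pay for $\TV(f,\hat f)\le\eps$ only linearly in $T$ by coupling and leaning on the Lipschitzness of the calibration error in the sample; everything else is a routine application of Hoeffding, McDiarmid, and the triangle inequality.
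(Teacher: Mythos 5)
Your proof is correct, and it takes a genuinely different route from the paper's. The paper fixes a single pair $(g,v)$, applies a scalar concentration inequality to the corresponding sum, performs the two swaps ($f(g\mid x)\to\hat f(g\mid x)$ for $3T\eps$ and $f(x)\to\hat f(x)$ for $2T\eps$) inside the expectation by direct manipulation of the integrals, and only at the very end uses Jensen's inequality (twice) to pull the absolute value and the $\max_{g,v}$ under the $\E_{\hat f}$. You instead keep the $\max$ outside throughout, concentrate the full statistic $\Phi=\Lcalerr_{\hat f}$ in one shot via McDiarmid (exploiting that it has bounded differences $2$ when the $p_t$ are fixed), and replace the paper's integral-level swap of the base measure with a $T$-fold maximal-coupling argument that converts $\TV(f,\hat f)\le\eps$ into an expected $\le T\eps$ disagreeing coordinates and hence a $2T\eps$ change in $\Phi$. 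The net constants and rates agree ($5T\eps$ total). Your route has a modest advantage in rigor: by concentrating $\Phi$ directly you sidestep the union bound over $(g,v)$ that the paper's per-$(g,v)$ concentration step implicitly needs but does not write out, and the coupling argument isolates exactly where linearity in $T$ (rather than the naive $T^2\eps$) comes from. The paper's route is more elementary --- just triangle inequalities on integrals and Jensen --- and maps more transparently onto Fact~\ref{fact:eps-est}. Both give $\sqrt{T\log(1/\delta)}$, which is in fact slightly tighter than the $\sqrt{T}\ln(1/\delta)$ written in the lemma statement; that looseness is present in the paper as well. One small thing to spell out if you write this up: the identity $\TV(f(x,y),\hat f(x,y))=\TV(f(x),\hat f(x))$ relies on $\hat f(y\mid x)=f(y\mid x)$, which is consistent with the model but worth stating explicitly, since $\hat f$ is constructed only by estimating the component densities $f(x\mid g)$ and the label conditional is untouched.
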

\begin{proof}
First, by definition, we have  
    \begin{align*}
        \Lcalerr(p_1, \dots, p_T) &= 
  \max_{g \in [k]}  \max_{v \in V_\lambda}
 \abs{
\sum_{t \in [T]} f(g \mid x_t) \cdot \1[p_t(x_t) \in v] \cdot (p_t(x_t) - y_t)
} .
    \end{align*}

Note that at each timestep $t$, and for every $g$ and $v$, the quantity $f(g \mid x_t) \cdot \1[p_t(x_t) \in v] \cdot (p_t(x_t)  - y_t)$ is a random variable bounded in $[0,1]$. Therefore, for any $g$ and $v$, we can relate the realized sum to its expected sum; in particular, with probability $1-\delta$, 
\begin{align}
\abs{
\sum_{t \in [T]} f(g \mid x_t) \cdot \1[\hat y_t \in v] \cdot (\hat y_t - y_t)
} &\leq O(\sqrt{T})\ln(1/\delta) + \abs{\E\left[\sum_{t \in [T]} f(g \mid x_t) \cdot \1[p_t(x_t) \in v] \cdot (p_t(x_t) - y_t)\right]}
    \notag\\&= O(\sqrt{T})\ln(1/\delta) + \left|\sum_{t\in[T]} \int f(g \mid x) \cdot \1[p_t(x) \in v] \cdot (p_t(x) - y) f(x) dx\right| 
    \notag\\&\leq O(\sqrt{T})\ln(1/\delta) \notag \\
 \notag &\;\;\;\;\;\;  +
T \int \left|f(g_i \mid x) - \hat f(g_i\mid x)\right|\cdot \1[p_t(x) \in v] \cdot  |p_t(x) - y| f(x) dx 
\notag\\&\;\;\;\;\;\;+ \abs{\sum_{t\in[T]}\int \hat f(g_i\mid x) \cdot \1[p_t(x) \in v]\cdot (p_t(x) - y) \cdot  f(x) dx}, \label{eq:l4-triangle}
\end{align}
where Eq. \ref{eq:l4-triangle} is due to the triangle inequality. Now, we can express the first term in terms of the TV distance between the true and the estimated group-conditional densities.  
Combining Fact \ref{fact:eps-est} (for the first term of (\ref{eq:l4-triangle})) and the triangle inequality (for the second term of (\ref{eq:l4-triangle})), we have 
    \begin{align}
    (\ref{eq:l4-triangle}) &\leq O(\sqrt{T})\ln(1/\delta) + 3T\eps + \abs{\sum_{t\in[T]} \int \hat f(g\mid x) \cdot \1[p_t(x) \in v]\cdot (p_t(x) - y)\cdot  \hat f(x) dx} \notag\\&\;\;\;\;+ \sum_{t \in [T]}\int \hat f(g\mid x) \cdot \1[p_t(x) \in v]\cdot |p_t(x) - y| \cdot  \left|f(x) - \hat f(x)\right| dx
   \notag\\&\leq  O(\sqrt{T})\ln(1/\delta) + 5T\eps  + \abs{
   \sum_{t \in [T]} \int \hat f(g \mid x) \cdot \1[ p_t(x) \in v] \cdot (p_t(x) - y) \cdot \hat f(x) dx
   }
    \label{eq:l4-fact}\\&= O(\sqrt{T})\ln(1/\delta) + 5T\eps + \abs{
   \E_{\hat f}\left[\sum_{t \in [T]} \hat f(g \mid x) \cdot \1[p_t(x) \in v] \cdot (p_t(x) - y)   \right]
   }
    \notag\\&\leq O(\sqrt{T})\ln(1/\delta) + 5T\eps  + 
   \E_{\hat f}\left[\abs{\sum_{t \in [T]} \hat f(g \mid x) \cdot \1[p_t(x) \in v] \cdot (p_t(x) - y)   } \right],
   \label{eq:l4-jensens}
\end{align}
where Eq. (\ref{eq:l4-fact}) again comes from combining Fact \ref{fact:eps-est} with the triangle inequality and Eq. (\ref{eq:l4-jensens}) is due to Jensen's inequality. The statement of the lemma follows by noting that, because this held for any $g$ and $v$, it also holds for the max $g$ and $v$; furthermore, by another application of Jensen's inequality, 
\begin{align*}
     \max_{g \in [k]}&  \max_{v \in V_\lambda} \E_{\hat f}\left[\abs{\sum_{t \in [T]} \int \hat f(g \mid x_t, y_t) \cdot \1[\hat y_t \in v] \cdot (\hat y_t - y_t)   } \right] 
\\     &\leq   \E_{\hat f}\left[\max_{g \in [k]} \max_{v \in V_\lambda}\abs{\sum_{t \in [T]} \int \hat f(g \mid x_t, y_t) \cdot \1[\hat y_t \in v] \cdot (\hat y_t - y_t)   } \right]
     \\&= \E_{\hat f}[\Lcalerr_{\hat f}(p_{1:T}, x_{1:T}, y_{1:T})].
\end{align*}

\end{proof}

We conclude this section with the proof of Theorem \ref{thm:etc-lce}.
\begin{proof}[Proof of Theorem \ref{thm:etc-lce}] The proof of Theorem \ref{thm:etc-lce} proceeds in three steps. In Step 1, we analyze the estimation phase and show that spending $T'$ samples allows us to estimate $\hat f(\cdot)$ with sufficiently small error. In Step 2, we analyze the calibration phase and relate the error incurred when using $\hat f(\cdot)$ to the true error. Step 3 completes the argument.

\textbf{Step 1: Analyzing the estimation phase.} 
In the clustering phase of \etclce, $T'$ samples are used to learn $\hat f(x \mid g_i)$ for $i \in \{1, 2\}.$ Let $\delta_1$ be the likelihood that parameters are successfully learned in this step. 

Let $\sigma = \|\mu_1-\mu_2\|_\infty^2 + \|\Sigma_1\|_\infty + \|\Sigma_2\|_\infty.$
If we have that $\min_{j \in [d]} |\mu_{1, j} - \mu_{2, j}| \geq \Omega(\sigma)$, i.e., that $\mu_1$ and $\mu_2$ are sufficiently separated in all dimensions, then set $T' = \lceil T^{2/3}\rceil$ and the algorithm of \cite{hardt2015tight} will $\eps, \delta_1$-learn the parameters $(\mu_1, \Sigma_1)$ and $(\mu_2, \Sigma_2)$ with $\tilde O(\eps^{-2}\log(d/\delta))$ samples (where the $\tilde O$ suppresses a $\log\log(1/\eps)$ term). Setting $T^{2/3} = \tilde O\left(\eps^{-2}\log(d/\delta_1)\right)$, we have 
$\eps = \tilde O\left(T^{-1/3}\log^{1/2}(d/\delta_1)\right). $

On the other hand, when $\mu_1$ and $\mu_2$ are not separated, we need $\tilde O(\eps^{-12}\log(d/\delta_1))$ samples to learn parameters. We set $T' = \lceil T^{12/13}\rceil$ and instead have $\eps = \tilde O(T^{-1/13}\log^{1/12}(d/\delta_1)).$

Finally, to analyze the error incurred in this phase, note that since the predictor $p_t(x_t) = \frac12$ for each $t \leq T_1$, \[
\left|\sum_{t \in [T']} f(g_i \mid x_t) \cdot \1[p_t(x_t) \in v]\cdot (p_t(x_t) - y_t)\right| \leq \tfrac12T^{2/3} 
\] for any $i$ and $v$.

\textbf{Step 2: Analyzing error incurred in the calibration phase when using $\hat f(\cdot)$.}
In the calibration phase of \etclce, the predictor $p_t$ is updated using the estimated densities $\hat f(\cdot).$
Note that $\eps$ error in parameter learning translates to $\eps\sqrt{d}$ error in $\TV$ distance between the estimated $\hat f(x \mid g_i)$ and the true $f(x \mid g_i)$, by Fact \ref{fact:param-tv}. 
Therefore, in the event that all parameters are learned within additive error of $\eps$ (which occurs with probability $1-\delta_1$), 
Lemma \ref{lem:realized} combined with Fact \ref{fact:param-tv} gives us that, with probability $1-\delta_2$, the error incurred from $t = T' + 1 \dots T$ is at most
\begin{align*}
    \left|\sum_{t = T', \dots, T} f(g \mid x_t)\cdot \1[\hat y_t \in v]\cdot (\hat y_t-y_t) \right|
    &\leq  \E[\Lcalerr_{\hat f}(p_{T':T}, x_{T':T}, y_{T':T})]  + 5T\eps\sqrt{d} + O(\log(1/\delta_2)\sqrt{T-T'}). 
\end{align*}
Recall that we have defined $\cG = \bset{x \mapsto \hat f(g \mid x) \mid g \in [k]}$; note that $|\cG| = k$. Then, by definition, \\
$\E[\Lcalerr_{\hat f} (p_{T':T}, x_{T':T},y_{T':T}) ]= \E[\Mcalerr(p_{T':T}, x_{T':T},y_{T':T}; \cG)]$. 

We will now extend Theorem \ref{theorem:online-multicalibration-guarantee} to hold for expected \mce, i.e. $\E[\Mcalerr(p_{T':T}, x_{T':T},y_{T':T}; \cG)]$, rather than for specific realizations of $x_t, y_t$. In particular, integrating over $\delta \in (0,1)$, we have that $\E[\Mcalerr(p_{T':T}, x_{T':T},y_{T':T}; \cG)] \leq O(\eta \sqrt{T-T'\log(k)}$) for $T-T' \geq C\eta^{-2}\log(k\lambda)$.
Solving for $\eta$ gives $\eta \leq C\log^{1/2}(k\lambda\delta_3) T^{-1/2}$.

Therefore, $ \E[\Lcalerr_{\hat f}(p_{T':T}, x_{T':T}, y_{T':T})]  \leq O(T^{1/2} \log^{1/2}(k\lambda/\delta))$, and when $\mu_1$ and $\mu_2$ are sufficiently separated, 
\begin{align*}
\left|\sum_{t = T', \dots, T} f(g_i \mid x_t)\cdot \1[p_t(x_t) \in v]\cdot (p_t(x_t)-y_t) \right| \leq \; & \tilde O\left(T^{1/2} (\log(1/\delta_2) + \log^{1/2}(k\lambda/\delta))\right) \\
& + \tilde O\left(T^{2/3}\sqrt{d}\log^{1/2}(d/\delta_1)\right).
\end{align*}
Otherwise, without separation, we have 
\begin{align*}
\left|\sum_{t = T', \dots, T} f(g_i \mid x_t)\cdot \1[p_t(x_t) \in v]\cdot (p_t(x_t)-y_t) \right| \leq \; & \tilde O\left(T^{1/2} (\log(1/\delta_2) + \log^{1/2}(k\lambda/\delta))\right) \\
& + \tilde O\left(T^{12/13}\sqrt{d}\log^{1/2}(d/\delta_1)\right).
\end{align*}

\textbf{Step 3: Combining Steps 1 and 2.}
We now have that with probability $1-\delta_1$, all parameters are learned within additive error (from step 1); and conditioning on that event, with probability $1-\delta_2$ that the error incurred in the prediction phase can be bounded as argued in Step 2. We can therefore write the total error incurred over all $T$ samples for any $g_i$ and $v$ when means are separated as, with probability $(1-\delta_1)(1-\delta_2) \geq 1 - \delta_1-\delta_2$, 
\begin{align*}
    \left|\sum_{t\in [T]} f(g_i \mid x_t)\cdot \1[p_t(x_t) \in v]\cdot (p_t(x_t)-y_t) \right| 
    &\leq \left|\sum_{t \in [T']} f(g_i \mid x_t) \cdot \1[p_t(x_t) \in v]\cdot (p_t(x_t) - y_t)\right|  
    \\&\;\;\;\;+ \left|\sum_{t = T', \dots, T} f(g_i \mid x_t)\cdot \1[p_t(x_t) \in v]\cdot (p_t(x_t)-y_t) \right|
    \\&\leq O\left(T^{2/3}\right) + \tilde O\left(T^{2/3}\sqrt{d}\log^{1/2}(d/\delta_1)\right),
\end{align*}
and without separation as \[
\left|\sum_{t\in [T]} f(g_i \mid x_t)\cdot \1[p_t(x_t) \in v]\cdot (p_t(x_t)-y_t) \right| \leq O\left(T^{2/3}\right) + \tilde O\left(T^{12/13}\sqrt{d}\log^{1/2}(d/\delta_1)\right).
\]
The statement of the theorem follows from noting that this holds for any $g_i$ and $v$, and therefore also holds for the maximum $g_i$ and $v$. 
\end{proof}

\section{Supplemental Material for Section~\ref{sec:variational}}
\label{app:online}

\subsection{Algorithms \ref{alg:cover} and \ref{alg:online-multicalibration}}

Here, we give example algorithms that can be used to instantiate a version of the \onlinealg.

Algorithm \ref{alg:cover} computes an empirical cover on samples $x_1, \dots, x_T$, inspired by the classical approach (for binary-valued functions) of \citet{haussler1986epsilon}.

\begin{algorithm}[H]
	\caption{Algorithm for computing a cover}
	\label{alg:cover}
	\begin{algorithmic}[1]
		\STATE Input: function class $\cG \subset [0, 1]^\cX$, $\epsilon \in (0, 1)$, samples $x_{1:T}$;
  \STATE Initialize empty class $\cG'$;
  \STATE For every possible labeling $ y_{1:T} \in V_\epsilon^T$, add to $\cG'$ any $g \in \cG$ where $g(x_t) \in y_t$ for all $t \in [T]$; 
  \RETURN $\cG'$
	\end{algorithmic}
\end{algorithm}

We also give Algorithm \ref{alg:online-multicalibration}, the online multicalibration algorithm of \citet{haghtalab2024unifying}. 

\begin{algorithm}[H]
	\caption{Online multicalibration algorithm}
	\label{alg:online-multicalibration}
	\begin{algorithmic}[1]
		\STATE Input: $\cG \subset [0, 1]^\cX$, $\epsilon \in (0, 1)$, $\lvl, T \in \integers_+$;
		\STATE Initialize Hedge iterate $\tsv{\rloss}{1} = \text{Uniform}(\bset{\pm 1} \times \mcobjs \times V_\lambda)$;
		\FOR{$t = 1$ to $T$}
		\STATE Compute $\tsv{\rhyp}{t}(x) \asseq \min\limits_{\rhyp^*(x) \in \Delta \para{[0, \epsilon/4\lvl, \dots, 1]}} \max\limits_{y \in [0, 1]} \EEslim{\hat{y} \sim \rhyp^*(x)}{ \EEslim{\loss_{i, g, v }\sim \tsv{\rloss}{t}}{ i \cdot g(x) \cdot \1[y \in v] \cdot (\hat{y} - y)}}$;
		\STATE Announce predictor $\tsv{\rhyp}{t}$ to Nature and observe Nature's choice $(\tsv{x}{t}, \tsv{y}{t})$;
		\STATE Update $\tsv{\rloss}{t+1} \asseq \mathrm{Hedge}(\tsv{\advcost}{1:t})$ where $\tsv{\advcost}{t}(i, g, v) \asseq 1 - \frac 12 \EEslim{\hat{y} \sim \rhyp_t(x_t)}{1 +  i \cdot g(x_t) \cdot \1[y_t \in v] \cdot (\hat{y} - y_t)}$;
		\ENDFOR
	\end{algorithmic}
\end{algorithm}

Algorithm \ref{alg:online-multicalibration} enjoys the following guarantee on \mce. 

\begin{theorem}[\citet{haghtalab2024unifying}{ Theorem 4.3}]
	\label{theorem:online-multicalibration-guarantee}
	Fix $\epsilon > 0$, $\lvl \in \integers_+$, and distinguishers $\cG \subset [0, 1]^\cX$.
With probability $1 - \delta$, 
 Algorithm~\ref{alg:online-multicalibration} guarantees $\Mcalerr(p_{1:T}, x_{1:T}, y_{1:T}; \cG) \leq \epsilon T$  for $T \geq O( \epsilon^{-2} \ln (\setsize{\cG} \lambda /\delta))$.
\end{theorem}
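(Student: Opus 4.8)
The statement to establish is the online multicalibration guarantee of Theorem~\ref{theorem:online-multicalibration-guarantee}: with probability $1-\delta$, the predictions of Algorithm~\ref{alg:online-multicalibration} satisfy $\Mcalerr(p_{1:T},x_{1:T},y_{1:T};\cG)\le\epsilon T$ once $T\ge O(\epsilon^{-2}\ln(|\cG|\lambda/\delta))$ (this reproduces the argument behind Theorem~4.3 of \citet{haghtalab2024unifying}). The plan is to treat Algorithm~\ref{alg:online-multicalibration} as an instance of the ``no-regret dynamics play an approachability game'' template and argue in three stages. \textbf{First}, I would reframe the target: index the coordinates of the miscalibration vector by triples $(i,g,v)\in\{+1,-1\}\times\cG\times V_\lambda$, set $u_t(i,g,v)\coloneqq i\cdot g(x_t)\cdot\1[\hat y_t\in v]\cdot(\hat y_t-y_t)\in[-1,1]$, and observe that $\Mcalerr(p_{1:T},x_{1:T},y_{1:T};\cG)=\max_{(i,g,v)}\sum_{t\le T}u_t(i,g,v)$, since the two signs $i$ recover the absolute value in the definition. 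The algorithm maintains a Hedge distribution $\rloss^{(t)}$ over these $N\coloneqq 2|\cG|(\lambda+1)$ coordinates, and the loss it feeds Hedge, $\tilde\ell^{(t)}(i,g,v)=\tfrac12\big(1-\mathbb{E}_{\hat y\sim p^{(t)}(x_t)}u_t(i,g,v)\big)\in[0,1]$, is an affine, decreasing transform of the coordinatewise payoff.

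\textbf{Second}, I would invoke the standard Hedge regret bound over $N$ experts with $[0,1]$ losses: the cumulative $\rloss^{(t)}$-weighted loss is within $O(\sqrt{T\ln N})$ of the best coordinate in hindsight. Undoing the affine transform, this reads
\[\max_{(i,g,v)}\sum_{t\le T}\mathbb{E}_{\hat y\sim p^{(t)}(x_t)}u_t(i,g,v)\ \le\ \sum_{t\le T}\mathbb{E}_{(i,g,v)\sim\rloss^{(t)}}\mathbb{E}_{\hat y\sim p^{(t)}(x_t)}u_t(i,g,v)\ +\ O\big(\sqrt{T\ln N}\big),\]
so it remains to show the forecaster's best response to $\rloss^{(t)}$ makes each per-round $\rloss^{(t)}$-weighted bias essentially zero. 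This is the per-round minimax lemma: fixing $\rloss^{(t)}$ and $x_t$, a deterministic prediction $z$ lying in bucket $v(z)$ yields weighted payoff $(z-y)\,c_{v(z)}$ with $c_v\coloneqq\sum_{i,g}\rloss^{(t)}(i,g,v)\,i\,g(x_t)$ and $|c_v|\le 1$; by Sion's minimax theorem the value of the forecaster's inner game equals $\max_{\bar y\in[0,1]}\min_{z\in\mathrm{grid}}(z-\bar y)\,c_{v(z)}$, and taking $z$ to be the grid point nearest $\bar y$ (distance $\le\epsilon/8\lambda$) bounds this by $\epsilon/(8\lambda)$. Hence the first sum above is at most $\epsilon T/(8\lambda)\le\epsilon T/8$.

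\textbf{Third}, I would bridge realized and expected per-round biases: for each fixed coordinate, $u_t(i,g,v)-\mathbb{E}_{\hat y\sim p^{(t)}(x_t)}u_t(i,g,v)$ is a bounded martingale difference (the only fresh randomness at round $t$ is the forecaster's draw of $\hat y_t$ from $p^{(t)}(x_t)$), so Azuma--Hoeffding with a union bound over the $N$ coordinates gives $\max_{(i,g,v)}\sum_t(u_t-\mathbb{E}u_t)=O(\sqrt{T\ln(N/\delta)})$ with probability $1-\delta$. Chaining the three bounds, $\Mcalerr\le \epsilon T/8+O(\sqrt{T\ln(N/\delta)})\le\epsilon T$ as soon as $T\ge O(\epsilon^{-2}\ln(N/\delta))$, and $N=2|\cG|(\lambda+1)$ turns this into the claimed threshold $T\ge O(\epsilon^{-2}\ln(|\cG|\lambda/\delta))$.

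\textbf{The hard part} is the per-round minimax lemma in the second stage: one must argue that a single discretized forecast distribution can simultaneously neutralize the $\rloss^{(t)}$-weighted bias across \emph{all} $(g,v)$ pairs at once, for an arbitrary (adversarially chosen by Hedge) weighting over the large coordinate set. The key simplification is that, once we pass to the $\rloss^{(t)}$-weighted objective, the problem collapses via Sion's theorem to a one-dimensional ``predict the weighted mean'' problem solved by rounding to the nearest grid point; the remaining ingredients—Hedge's regret bound, the Azuma concentration, and solving for $T$—are routine bookkeeping.
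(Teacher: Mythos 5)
Your proposal is correct and follows essentially the same game-theoretic argument as the source: the paper does not reprove Theorem~\ref{theorem:online-multicalibration-guarantee} but imports it from \citet{haghtalab2024unifying}, whose Theorem~4.3 is established exactly via the decomposition you give (Hedge regret over the $(i,g,v)$ coordinates, a per-round minimax/best-response step showing the discretized forecast drives the $\rloss^{(t)}$-weighted bias to $O(\epsilon/\lambda)$, and Azuma--Hoeffding with a union bound to pass from expected to realized bias). Your reconstruction, including the translation of the affine Hedge loss back into the coordinatewise payoff and the final choice of $T$, matches that argument.
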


\subsection{Proof of Lemma~\ref{lem:cover-err}}
\label{app:cover-err}
Lemma~\ref{lem:cover-err} follows from the following more general presentation stated for any real-valued function class whose covering number grows at a sub-square-root-exponential rate in the number of datapoints. The result follows a similar approach to \citet{haussler1986epsilon}.
\begin{restatable}{lemma}{techcovererr}
\label{lem:tech-cover-err}
    Given $\epsilon, \delta, T > 0$ and a real-valued function class $\cG$ where $N_1(\epsilon / 96, \cG, 2 T) \leq \sqrt{\tfrac 18  \exp(T \epsilon^2/32)}$, consider any $\epsilon$-cover $\cG'$ of $\cG$ computed on $T$ random datapoints.
    Then $\cG'$ is a $4 \epsilon$-cover of $\cG$ with probability at least $1 - O(N_1\left(\tfrac{\epsilon} 8, \cG, 2T \right)^2 \exp(-T \epsilon))$.
\end{restatable}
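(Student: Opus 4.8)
The plan is to prove Lemma~\ref{lem:tech-cover-err} as a one-sided, \emph{relative} uniform-convergence statement for the $L_1$ distances between members of $\cG$, following the $\epsilon$-net symmetrization argument of \citet{haussler1986epsilon}. Write $d_S(g,\tilde g)=\frac1T\sum_{t}|g(x_t)-\tilde g(x_t)|$ for the empirical $L_1$ distance on the sampled points and $d_P(g,\tilde g)=\E_x|g(x)-\tilde g(x)|$ for the population one. Since the cover $\cG'$ computed on $x_{1:T}$ may be taken to lie inside $\cG$ (as in Algorithm~\ref{alg:cover}), it suffices to bound the probability of the \textbf{bad event} $B=\{\exists\, g,\tilde g\in\cG:\ d_S(g,\tilde g)\le\epsilon\ \text{ and }\ d_P(g,\tilde g)>4\epsilon\}$: on the complement of $B$, for every $g$ the element of $\cG'$ that is $d_S$-nearest to $g$ is within population distance $4\epsilon$, i.e.\ $\cG'$ is a $4\epsilon$-cover with respect to $L_1(P)$. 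The decisive feature is that $B$ demands a \emph{multiplicative} gap (true distance exceeding four times a small empirical quantity), which is why the target rate is $\exp(-T\epsilon)$ rather than the $\exp(-T\epsilon^2)$ of ordinary additive uniform convergence; for a single fixed pair, indeed, the multiplicative Chernoff lower tail already gives $\Pr[d_S\le\epsilon\mid d_P>4\epsilon]\le\exp(-\Omega(T\,d_P))\le\exp(-\Omega(T\epsilon))$.

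First I would symmetrize with a ghost sample $S'=x_{1:T}'\sim P^T$. Since a $[0,1]$-valued random variable has empirical variance at most $1/(4T)$ over $T$ samples, Chebyshev's inequality shows that $d_P(g,\tilde g)>4\epsilon$ forces $\Pr_{S'}[d_{S'}(g,\tilde g)>3\epsilon]\ge\tfrac12$ as soon as $T\epsilon^2$ exceeds an absolute constant; a standard conditioning argument then yields $\Pr[B]\le 2\Pr[B^{\star}]$ with $B^{\star}=\{\exists\, g,\tilde g:\ d_S\le\epsilon,\ d_{S'}>3\epsilon\}$, both empirical averages now over subsets of the same $2T$ points. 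Next I would discretize: pass to an $L_1$-net of $\cG$ at scale $\epsilon/96$ over these $2T$ points, of size $N_1(\epsilon/96,\cG,2T)$, so that any witnessing pair $(g,\tilde g)$ is replaced by a net pair $(g_0,\tilde g_0)$ perturbing each of $d_S,d_{S'}$ by only $O(\epsilon/96)$; this costs a factor $N_1(\epsilon/96,\cG,2T)^2$ over pairs and consumes the hypothesis $N_1(\epsilon/96,\cG,2T)\le\sqrt{\tfrac18\exp(T\epsilon^2/32)}$, which is exactly the regularity needed to keep the Chebyshev symmetrization and the associated Hoeffding comparisons between the two halves of the doubled sample (each failing with probability $\exp(-\Omega(T\epsilon^2))$) from swamping the main term.

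The core step is a permutation argument on the discretized event. Conditioning on the $2T$-point multiset together with its pairing, I would re-randomize which half is $S$ versus $S'$ by independently swapping $x_t\leftrightarrow x_t'$ with probability $\tfrac12$. For a fixed net pair, let $\{u_t,v_t\}\subseteq[0,1]$ be the two values of $|g_0-\tilde g_0|$ on pair $t$; the sum $d_S(g_0,\tilde g_0)+d_{S'}(g_0,\tilde g_0)=\frac1T\sum_t(u_t+v_t)=:2\bar\mu$ is fixed under swaps and $\E_{\mathrm{swap}}[d_S(g_0,\tilde g_0)]=\bar\mu$. On the (perturbed) event $B^{\star}$ one has $d_S\lesssim\epsilon$ while $d_{S'}=2\bar\mu-d_S\gtrsim 3\epsilon$, which forces $\bar\mu\gtrsim\tfrac32\epsilon$ and hence $d_S\lesssim\epsilon\le\tfrac23\bar\mu$ (up to the net slack); the multiplicative Chernoff lower tail applied to $d_S(g_0,\tilde g_0)\cdot T=\sum_t\xi_t$, a sum of independent $\xi_t\in\{u_t,v_t\}$ with mean $\bar\mu T$, then bounds the conditional probability of this event by $\exp(-\Omega(T\bar\mu))\le\exp(-\Omega(T\epsilon))$, uniformly in the pairing. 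A union bound over the $N_1(\epsilon/96,\cG,2T)^2$ net pairs, together with the symmetrization factor $2$, yields a bound of the form $O\big(N_1(\cdot,\cG,2T)^2\exp(-\Omega(T\epsilon))\big)$, which is the statement (the precise covering scale $\epsilon/8$ versus $\epsilon/96$ and the constant in the exponent are absorbed once the constants are tuned); Lemma~\ref{lem:cover-err} then follows by substituting Fact~\ref{fact:pd-covering}'s polynomial covering-number bound into Lemma~\ref{lem:tech-cover-err}.

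The main obstacle is making the argument genuinely \emph{relative} rather than additive: every constant---the $4\epsilon$ in $B$, the $3\epsilon$ retained after symmetrization, and the net scale $\epsilon/96$---must be chosen so that on the symmetrized-and-discretized event the ratio $d_S/\E_{\mathrm{swap}}[d_S]$ is forced below an absolute constant strictly less than $1$, since only then does the Chernoff lower tail deliver a rate of $\Omega(T\epsilon)$ instead of $\Omega(T\epsilon^2)$. Arranging the ghost-sample step to preserve enough of the multiplicative gap (it must leave $d_{S'}$ close to $d_P$, not merely above $2\epsilon$), and checking that the $(\epsilon/96)$-net perturbs $d_S$, $d_{S'}$, and their sum by only this tolerable amount, is the delicate bookkeeping; the concentration inputs themselves (Chebyshev, Hoeffding, multiplicative Chernoff) are routine.
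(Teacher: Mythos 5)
Your proposal is correct but takes a genuinely different route from the paper's. You run the classical relative-deviation-bound machinery: draw a fresh ghost sample $S'$, use Chebyshev to replace the population distance by the ghost empirical distance (paying a factor of $2$), discretize with a net at scale $\epsilon/96$, then condition on the $2T$-point multiset and its pairing and apply the swap/permutation argument plus multiplicative Chernoff to get the $\exp(-\Omega(T\epsilon))$ rate. The paper instead avoids the permutation step entirely: it uses the probabilistic method to \emph{fix} a single reference set $x_1',\dots,x_T'$ on which empirical $L_1$ distances uniformly track population $L_1$ distances, then builds a single cover $\hat\cG$ of $\cG$ on the union of that fixed set with the random sample, shows by triangle inequalities that the bad event is contained in a union of per-pair bad events over $\hat\cG$, and applies Bernstein's inequality (Fact~\ref{fact:bernstein-bounded}) directly to each such pair. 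Both yield the same relative-error rate; your route is closer to the textbook symmetrization of Haussler/Pollard, while the paper's route trades the conditional-on-multiset randomization bookkeeping for an extra probabilistic-method step and ends up with arguably more transparent constants. One small accounting note: your union bound is over $N_1(\epsilon/96,\cG,2T)^2$ net pairs, whereas the lemma states $N_1(\epsilon/8,\cG,2T)^2$; since $N_1(\epsilon/96)\geq N_1(\epsilon/8)$ your bound is a bit weaker as written, though for the pseudodimension-bounded classes used downstream the gap is a $\mathrm{poly}(\Pdim(\cG))$ factor that dissolves into the $O(\cdot)$ in Lemma~\ref{lem:cover-err}.
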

Given Lemma~\ref{lem:tech-cover-err}, we can prove Lemma~\ref{lem:cover-err} fairly directly.
\covererr*
\begin{proof}
    Given that $\cG$ is of bounded pseudodimension, we have that the covering number is bounded by $N_1(\eps, \cG, T) \leq O\left( \Pdim(\cG) (1/\eps)^{O(\Pdim(\cG))}\right)$ (\citet{bartlett99} Thm. 18.4; see Fact \ref{fact:pd-covering}). We also can observe that our upper bound on $N_1(\eps, \cG, T)$ is independent of $T$.
    Thus, we can apply Lemma~\ref{lem:tech-cover-err} to note that $\cG'$ is a $4 \epsilon$-cover of $\cG$ with probability at least $1 - O(\Pdim(\cG)^2 (1/\eps)^{O(\Pdim(\cG))} \exp(-T \epsilon))$.

\end{proof}

To prove Lemma~\ref{lem:tech-cover-err}, we first recall the following one-sided testing form of Bernstein's inequality.
\begin{fact}
\label{fact:bernstein-bounded}
    Let \( X_1, \dots, X_T \) be i.i.d. random variables supported on \([0, 1]\) with mean \(\mu = \mathbb{E}[X_i]\), and let \(\hat{\mu} = \frac{1}{T} \sum_{i=1}^T X_i\) be the sample mean. Then, for any \(\epsilon > 0\), we have:
    \[
    \text{If } \mu > 3\epsilon, \text{ then } \Pr(\hat{\mu} \leq 2 \epsilon) \leq \exp\left(-\frac{T\epsilon}{8}\right).
    \]
\end{fact}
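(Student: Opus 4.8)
The plan is to derive this as an immediate consequence of the lower-tail Bernstein inequality, using the elementary fact that a $[0,1]$-valued random variable has variance at most its mean. First I would recall Bernstein's inequality in the following form: for i.i.d.\ $X_1,\dots,X_T$ taking values in $[0,1]$ with mean $\mu$ and variance $\sigma^2$, and for any $s>0$,
\[
\Pr\!\big(\hat\mu \le \mu - s\big) \;\le\; \exp\!\Big(-\frac{T s^2}{2(\sigma^2 + s/3)}\Big),
\]
which applies here since $|X_i - \mu|\le 1$ almost surely. The key simplification is that $X_i \in [0,1]$ gives $X_i^2 \le X_i$, hence $\sigma^2 = \mathbb{E}[X_i^2] - \mu^2 \le \mathbb{E}[X_i] = \mu$; replacing $\sigma^2$ by $\mu$ in the denominator only loosens the bound, so $\Pr(\hat\mu \le \mu - s) \le \exp\!\big(-T s^2 / (2(\mu + s/3))\big)$.

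Next I would instantiate this at the deviation $s \coloneqq \mu - 2\epsilon$. The hypothesis $\mu > 3\epsilon$ guarantees $s > \epsilon > 0$, so the bound applies, and the event $\{\hat\mu \le 2\epsilon\}$ is exactly $\{\hat\mu \le \mu - s\}$, giving
\[
\Pr\!\big(\hat\mu \le 2\epsilon\big) \;\le\; \exp\!\Big(-\frac{T s^2}{2(\mu + s/3)}\Big).
\]
It then suffices to verify the numerical inequality $s^2 / \big(2(\mu + s/3)\big) \ge \epsilon/8$. Substituting $\mu = s + 2\epsilon$ reduces this to $4s^2 \ge \tfrac43 \epsilon s + 2\epsilon^2$, which follows from $s > \epsilon$: indeed $4s^2 > 4\epsilon s = \tfrac43\epsilon s + \tfrac83\epsilon s > \tfrac43\epsilon s + 2\epsilon^2$. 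Plugging this back yields $\Pr(\hat\mu \le 2\epsilon) \le \exp(-T\epsilon/8)$, as claimed.

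There is no genuine obstacle here — the statement is a routine repackaging of Bernstein's inequality, and the only thing to get right is the bookkeeping of constants: choosing the deviation $s = \mu - 2\epsilon$ so that the target event is precisely a lower-tail deviation, and then exploiting the gap between the hypothesis $\mu > 3\epsilon$ and the threshold $2\epsilon$ to absorb both the variance proxy $\mu$ and Bernstein's correction term $s/3$ into the target exponent $\epsilon/8$. As an alternative route I would mention the multiplicative Chernoff lower-tail bound $\Pr(\hat\mu \le (1-\gamma)\mu) \le \exp(-\gamma^2 T\mu/2)$ for sums of independent $[0,1]$-valued variables, applied with $\gamma = 1 - 2\epsilon/\mu \ge \tfrac13$; the same elementary estimate $\mu > 3\epsilon$ then delivers the $\exp(-\Theta(T\epsilon))$ conclusion, in fact with a slightly better constant.
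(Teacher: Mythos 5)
Your proof is correct and takes essentially the same route as the paper: Bernstein's lower-tail inequality at the deviation $s = \mu - 2\epsilon$, with the variance bounded by $\sigma^2 \le \mu$ (a step the paper leaves implicit), followed by elementary algebra using $s > \epsilon$ and $s \ge \mu/3$ to land on the exponent $T\epsilon/8$. The bookkeeping differs slightly—the paper lower-bounds $t^2 \ge (\mu/3)\cdot\epsilon$ and simplifies, while you substitute $\mu = s + 2\epsilon$ and verify a quadratic inequality in $s$—but these are cosmetic variants of the same argument.
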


\begin{proof}[Proof of Fact~\ref{fact:bernstein-bounded}]
    Applying Bernstein's inequality with deviation \(t = \mu - 2 \epsilon \geq \mu / 3 \geq \epsilon \) gives:
    \[
    \Pr(\hat{\mu} \leq 2\epsilon) = \Pr(\mu - \hat{\mu} \geq t) \leq \exp\left( \frac{-T t^2}{2(\mu + \frac{t}{3})} \right)
    \leq \exp\left( \frac{-\tfrac 1 3 T \mu \epsilon}{2 (\tfrac{4}{3} \mu - \tfrac{2}{3} \epsilon)} \right) = \exp\left( \frac{- T \mu \epsilon}{4 (2 \mu - \epsilon)} \right) \leq \exp\left( -\frac{T\epsilon}{8} \right).
    \]
\end{proof}

We next proceed to the main proof.
\begin{proof}[Proof of Lemma \ref{lem:tech-cover-err}]

\textbf{Existence of covering points.} 

First, note that for any $\eps$ and $m$, $N_1(\eps, \{f(x) - g(x) \mid f, g \in \cG\}, m) \leq N_1(\eps, \cG, m)^2$.
Then, recalling that bounded covering number implies uniform convergence,
\begin{align*}
  & \Pr( \sup_{f,g \in \cG} \EE{\abs{f(x) - g(x)}} - \tfrac 1{T} \sum_{t=1}^{T} \abs{f(x_t') - g(x_t')} \geq \epsilon / 8) \leq 4 N_1(\epsilon / (16 \cdot 8), \cG, 2 T)^2 \exp(-T \epsilon^2/32).
\end{align*}
Since $8 N_1(\epsilon / (16 \cdot 8), \cG, 2 T)^2 \leq \exp(T \epsilon^2/32)$:
\begin{align*}
  & \Pr( \sup_{f,g \in \cG} \EE{\abs{f(x) - g(x)}} - \tfrac 1{T'} \sum_{t=1}^{T'} \abs{f(x_t') - g(x_t')} \geq \epsilon / 8) \leq \tfrac 12.
\end{align*}
Thus, by the probabilistic method, there exists some sequence of datapoints $x_1, \dots, x_{T}'$ such that
\begin{align}
\label{eq:l1-prob-method}
    \sup_{f, g \in \cG} \EE{\abs{f(x) - g(x)}} - \tfrac 1{T} \sum_{t=1}^{T} \abs{f(x_t') - g(x_t')} \leq \epsilon / 8.
\end{align}

\textbf{Covering failure events.}
Let $x_1, \dots, x_T$ denote i.i.d. samples from distribution $\cD$.
By definition, there is a subset $\cG' \subset \cG$ of size $N_1(\epsilon, \cG, T)$ such that for all $f \in \cG$, there is a $f' \in \cG'$ such that$\tfrac 1T \sum_{t=1}^T \abs{f(x_t) - f'(x_t)} \leq \epsilon.$
Note that this subset $\cG'$ is not dependent on $\cD$ and can be computed from $x_1, \dots, x_T$.

We next define, for any $f, g \in \cG$, the event $E_{f,g}$ to be the event that both $\EE{\abs{f(x) - g(x)}} \geq 4\epsilon$ and $\tfrac 1T \sum_{t=1}^T \abs{f(x_t) - g(x_t)}
\leq \epsilon.$

Condition on none of the events in $\bset{E_{f,g} \mid f, g \in \cG}$ occurring.
Fix any $f \in \cG$.
Since we covered $\cG$ on $x_1, \dots, x_T$ with a tolerance of $\epsilon$, there is a $g \in \cG'$ such that $\tfrac 1T \sum_{t=1}^T \abs{f(x_t) - g(x_t)} \leq \epsilon$.
Since $E_{f,g}$ did not occur, we know that 
$\EE{\abs{f(x) - g(x)}} \leq 4 \epsilon$. 
This implies that $\cG'$ is a $4\epsilon$-net as desired.

It thus suffices to upper bound $\Pr[\bigcup_{f, g \in \cG} E_{f, g}]$.

\textbf{Bounding $ \bigcup \Pr[ E_{f, g}]$ by covering $\cG$.}
We now define, for any $f, g \in \cG$, the event $\tilde E_{f,g}$ to be the event that both $\EE{\abs{f(x) - g(x)}} \geq 3 \epsilon$
and
$\tfrac 1T \sum_{t=1}^T \abs{f(x_t) - g(x_t)}
\leq 2 \epsilon.$

Let $\hat \cG$ be a $\left(\tfrac{\epsilon} 4\right)$-covering of $\cG$ on the datapoints $x_1', \dots, x_{T}', [x_1, \dots, x_T]_M$.
Therefore, $\hat \cG$ is a cover of size $N_1(\tfrac{\epsilon} 8, \cG, 2T)$. This means for every $f \in \cG$ there is a $\hat f \in \hat \cG$ such that
\[
\tfrac 1{2T} \para{\sum_{t=1}^{T} \abs{f(x_t') - \hat f(x_t')} + \sum_{t=1}^{T} \abs{f(x_t) - \hat f(x_t)}} \leq \tfrac{\epsilon} 8. 
\]

We now proceed to show that $\bigcup_{f, g \in \cG} E_{f, g} \subseteq \bigcup_{\hat f, \hat g \in \hat \cG} E_{\hat f, \hat g}$.

First, by the triangle inequality, 
\begin{align*}
 \tfrac 1T \sum_{t=1}^T \abs{\hat f(x_t) - \hat g(x_t)}
& \leq \tfrac 1T \sum_{t=1}^T \abs{f(x_t) - g(x_t)} + \abs{\tfrac 1T \sum_{t=1}^T \abs{ f(x_t) -  g(x_t)} - \tfrac 1T \sum_{t=1}^T \abs{\hat f(x_t) - \hat g(x_t)}} \\
& \leq \tfrac 1T \sum_{t=1}^T \abs{f(x_t) - g(x_t)} + \epsilon.
\end{align*}
Therefore, $\{\tfrac 1T \sum_{t=1}^T \abs{f(x_t) - g(x_t)}
\leq \epsilon\}$ only if $\{\tfrac 1T \sum_{t=1}^T \abs{\hat f(x_t) - \hat g(x_t)} \leq 2 \epsilon.\}$

Next, by the definition of $\hat \cG$, we have 
\begin{align*}
    \tfrac 1{T} \sum_{t=1}^{T} \abs{f(x_t') - \hat f(x_t')} \leq \epsilon / 4 
    \hspace{14pt}\text{and} \hspace{14pt}
    \tfrac 1{T} \sum_{t=1}^{T} \abs{f(x_t) - \hat f(x_t)} \leq \epsilon / 4. 
\end{align*}
By the triangle inequality, for every $f, g \in \cG$ and their matching $\hat f, \hat g \in \hat \cG$, we have 
\begin{align}
\label{eq:l1-cover}
     \abs{\tfrac 1{T} \sum_{t=1}^{T} \abs{f(x_t') - g(x_t')} - \tfrac 1{T} \sum_{t=1}^{T} \abs{\hat f(x_t') - \hat g(x_t')}}\leq \tfrac\epsilon2
     \hspace{6pt}\text{and} \hspace{6pt}
     \abs{\tfrac 1{T} \sum_{t=1}^{T} \abs{f(x_t) - g(x_t)} - \tfrac 1{T} \sum_{t=1}^{T} \abs{\hat f(x_t) - \hat g(x_t)}}\leq \tfrac\epsilon2.
\end{align}
Thus, repeatedly applying the triangle inequality (first, third, and fifth transitions below), 
\begin{align*}
\EE{\abs{f(x) - g(x)}} 
& \leq \tfrac{1}{T} \sum_{t=1}^{T} \abs{f(x_t') - g(x_t')} + \abs{\EE{\abs{f(x) - g(x)}} - \tfrac 1{T} \sum_{t=1}^{T} \abs{f(x_t') - g(x_t')} }\\
& \leq \tfrac{1}{T} \sum_{t=1}^{T} \abs{f(x_t') - g(x_t')} + \tfrac \epsilon 4\\
& \leq \tfrac{1}{T} \sum_{t=1}^{T} \abs{\hat f(x_t') - \hat g(x_t')} + \abs{\tfrac{1}{T} \sum_{t=1}^{T} \abs{f(x_t') - g(x_t')}  - \tfrac{1}{T} \sum_{t=1}^{T} \abs{\hat f(x_t') - \hat g(x_t')}} + \tfrac \epsilon 4 \\
& \leq \tfrac{1}{T} \sum_{t=1}^{T} \abs{\hat f(x_t') - \hat g(x_t')} + \tfrac {3\epsilon} 4\\
& \leq \EE{\abs{\hat f(x) - \hat g(x)}} + \abs{\EE{\abs{\hat f(x) - \hat g(x)}} - \tfrac{1}{T} \sum_{t=1}^{T} \abs{\hat f(x_t') - \hat g(x_t')}} + \tfrac {3\epsilon} 4\\
& \leq \epsilon + \EE{\abs{\hat f(x) - \hat g(x)}},
\end{align*}
where the second and final transitions are due to \eqref{eq:l1-prob-method} and the fourth is due to \eqref{eq:l1-cover}.

Thus, for any $f, g \in \cG$, $\EE{\abs{f(x) - g(x)}}  \geq 4 \epsilon$ only if the corresponding $\hat f, \hat g \in \cG$ satisfies $\EE{\abs{\hat f(x) - \hat g(x)}} \geq 3 \epsilon$.
It follows that $\Pr(\bigcup_{f, g \in \cG}  E_{f, g}) \leq \Pr(\bigcup_{\hat f, \hat g \in \hat \cG} \tilde E_{\hat f, \hat g})$.

Finally, for any fixed choice of $f$ and $g$, we have by Fact~\ref{fact:bernstein-bounded} that 
$\Pr[\tilde E_{f, g}]\leq \exp(-\frac{T\epsilon}{8})$.
Thus, union bounding over all pairs $\hat f, \hat g \in \hat \cG$, we have
\[
\Pr(\bigcup_{f, g \in \cG} \tilde E_{f, g})
\leq \setsize{\hat \cG}^2 \exp(-\tfrac{T\epsilon}{8}) \leq O(N_1(\tfrac{\epsilon} 8, \cG, 2T)^2 \exp(-T \epsilon)).
\]
\end{proof}

\end{document}